\documentclass{article} 
\usepackage{nips13submit_e,times}
\usepackage{hyperref}
\usepackage{url}
\usepackage{graphicx}
\usepackage{amssymb,amsmath,amsthm}
\usepackage{algorithmic,algorithm}
\usepackage{etex}
\usepackage{etoolbox}
\usepackage{cite}
\newcommand{\citet}{\cite}
\newcommand{\citep}{\cite}
\usepackage{multibib}
\newcites{sup}{Supplementary References}

\input{proofsatend.tex}
\def\x{{\mathbf x}}
\def\z{{\mathbf z}}
\def\1{{\mathbf 1}}

\def\ton{\underset{n \to +\infty}{\longrightarrow}}

\def\barf{{ \bar f}}
\def\barh{{ \bar h}}
\def\bartheta{{ \bar \theta}}
\def\hattheta{{ \hat \theta}}

\def\barg{{ \bar g}}

\def\tildej{\tilde{\jmath}}

\def\as{~~\text{a.s.}}

\def\alphab{{\boldsymbol\alpha}}

\def\D{{\mathbf D}}

\def\XX{{\mathcal X}}

\def\d{{\mathbf d}}
\def\E{{\mathbb E}}

\def\GG{{\mathcal G}}

\def\S{{\mathcal S}}

\def\d{{\mathbf d}}

\def\Real{{\mathbb R}}

\def\FF{{\mathcal F}}

\def\KK{{\mathcal K}}

\def\argmin{\operatornamewithlimits{arg\,min}}
\def\liminf{\operatornamewithlimits{lim\,inf}}

\def\trace{\operatorname{Tr}}

\def\defin{\triangleq}

\newcommand{\proofstep}[1]{\noindent{\bfseries #1:}~\newline}


\long\def\symbolfootnote[#1]#2{\begingroup\def\thefootnote{\fnsymbol{footnote}}\footnote[#1]{#2}\endgroup}

\newtheorem{theorem}{Theorem}[section]
\newtheorem{lemma}{Lemma}[section]
\newtheorem{proposition}{Proposition}[section]

\newtheorem{corollary}{Corollary}[section]
\newtheorem{definition}{Definition}[section]

\def\mybullet{~\hspace*{0.5cm}$\bullet$\hspace*{0.12cm}}
\def\begincondeq{\ifthenelse{\isundefined{\supplemental}}{
\begin{displaymath}
}{
\begin{equation}
}}
\def\endcondeq{\ifthenelse{\isundefined{\supplemental}}{
\end{displaymath}
}{
\end{equation}
}}
\def\condvspace{\ifthenelse{\isundefined{\supplemental}}{\vspace*{-0.1cm}}{}}
\def\condvspacesmall{\ifthenelse{\isundefined{\supplemental}}{\vspace*{-0.05cm}}{}}
\newcommand{\myvspace}[1]{\ifthenelse{\isundefined{\supplemental}}{\vspace*{-#1cm}}{}}

\fixstatement{theorem}
\fixstatement{lemma}
\fixstatement{proposition}
\fixstatement{corollary}

\nipsfinalcopy

\title{Stochastic Majorization-Minimization Algorithms for Large-Scale Optimization}

\author{
Julien Mairal \\
LEAR Project-Team - INRIA Grenoble\\
\texttt{julien.mairal@inria.fr}
}

%


\begin{document}

\maketitle

\begin{abstract}
Majorization-minimization algorithms consist of iteratively minimizing a
majorizing surrogate of an objective function. Because of its simplicity and
its wide applicability, this principle has been very popular in statistics and
in signal processing. In this paper, we intend to make this principle scalable.
We introduce a stochastic majorization-minimization scheme which is
able to deal with large-scale or possibly infinite data sets. When applied to
convex optimization problems under suitable assumptions, we show that it
achieves an expected convergence rate of $O(1/\sqrt{n})$ after~$n$ iterations,
and of $O(1/n)$ for strongly convex functions. Equally important, our scheme
almost surely converges to stationary points for a large class of non-convex
problems. We develop several efficient algorithms based on our framework. First, we
propose a new stochastic proximal gradient method, which experimentally matches
state-of-the-art solvers for large-scale $\ell_1$-logistic regression. Second,
we develop an online DC programming algorithm for non-convex sparse estimation.
Finally, we demonstrate the effectiveness of our approach for solving
large-scale structured matrix factorization problems.

\end{abstract}

\section{Introduction}
Majorization-minimization~\citep{lange2} is a simple optimization principle for
minimizing an objective function. It consists of iteratively minimizing a
surrogate that upper-bounds the objective, thus monotonically driving the
objective function value downhill. This idea is used in many existing
procedures. For instance, the expectation-maximization (EM) algorithm~(see
\citep{cappe,neal}) builds a surrogate for a likelihood model by using Jensen's
inequality.  Other approaches can also be interpreted under the
majorization-minimization point of view, such as DC programming~\citep{gasso}, where ``DC'' stands for difference of
convex functions, variational Bayes techniques~\citep{wainwright2}, or proximal
algorithms~\cite{beck,nesterov,wright}.

In this paper, we propose a stochastic majorization-minimization algorithm,
which is is suitable for solving large-scale problems arising in machine
learning and signal processing. More precisely, we address the minimization of
an expected cost---that is, an objective function that can be represented by an
expectation over a data distribution. For such objectives, online techniques
based on stochastic approximations have proven to be particularly efficient,
and have drawn a lot of attraction in machine learning, statistics, and
optimization~\citep{bottou,bottou2,cappe,duchi2,ghadimi,hu,hazan2,hazan3,lan,langford,leroux,mairal7,nemirovski,shalev,shalev2,shwartz,xiao}.

Our scheme follows this line of research. It consists of iteratively building a
surrogate of the expected cost when only a single data point is observed at
each iteration; this data point is used to update the surrogate, which in turn
is minimized to obtain a new estimate. Some previous works are closely related
to this scheme: the online EM algorithm for latent data
models~\citep{cappe,neal} and the online matrix factorization technique
of~\citet{mairal7} involve for instance surrogate functions updated in a
similar fashion.  Compared to these two approaches, our method is targeted to
more general optimization problems. 

Another related work is the incremental majorization-minimization
algorithm of~\cite{mairal17} for finite training sets; it was indeed shown to
be efficient for solving machine learning problems where storing dense
information about the past iterates can be afforded. Concretely, this
incremental scheme requires to store $O(pn)$ values, where $p$ is the
variable size, and $n$ is the size of the training set.\footnote{To alleviate
this issue, it is possible to cut the dataset into
$\eta$ mini-batches, reducing the memory load to $O(p \eta)$, which remains
cumbersome when $p$ is very large.} This issue was the main motivation for us for
proposing a stochastic scheme with a memory load independent of $n$, thus
allowing us to possibly deal with infinite data sets, or a huge variable size~$p$.

We study the convergence properties of our algorithm when the surrogates are
strongly convex and chosen among the class of \emph{first-order surrogate
functions} introduced in~\cite{mairal17}, which consist of approximating the
possibly non-smooth objective up to a smooth error. 
When the objective is convex, we obtain expected convergence rates that are
asymptotically optimal, or close to optimal~\cite{lan,nemirovski}. More precisely,
the convergence rate is of order $O(1/\sqrt{n})$ in a
finite horizon setting, and $O(1/n)$ for a strongly convex objective in an
infinite horizon setting. Our second analysis shows that for \emph{non-convex}
problems, our method
almost surely converges to a set of stationary points under suitable
assumptions. We believe that this result is equally valuable as convergence
rates for convex optimization.  To the best of our knowledge, the literature on
stochastic non-convex optimization is rather scarce, and we are only aware of
convergence results in more restricted settings than ours---see
for instance \cite{bottou2} for the stochastic gradient descent algorithm,
\cite{cappe} for online EM,~\cite{mairal7} for online matrix factorization,
or~\cite{ghadimi}, which provides stronger guarantees, but for unconstrained smooth problems.

We develop several efficient algorithms based on our framework. The first one is a
new stochastic proximal gradient method for composite or constrained
optimization. This algorithm is related to a long series of work in the convex optimization
literature~\cite{duchi2,hu,hazan3,lan,langford,nemirovski,shwartz,xiao},
and we demonstrate that it performs as well as
state-of-the-art solvers for large-scale $\ell_1$-logistic regression~\cite{fan2}.  The
second one is an online DC programming technique, which we demonstrate to be better than batch alternatives for large-scale non-convex
sparse estimation~\cite{gasso}. Finally, we show that our scheme can address efficiently 
structured sparse matrix factorization problems in an online fashion, and offers new possibilities to~\cite{jenatton2,mairal7} such
as the use of various loss or regularization functions.

This paper is organized as follows: Section~\ref{sec:batch} introduces
first-order surrogate functions for batch optimization; Section~\ref{sec:stoch} is devoted to our stochastic approach
and its convergence analysis; Section~\ref{sec:exp} presents several 
applications and numerical experiments, and Section~\ref{sec:ccl}
concludes the paper.

\section{Optimization with First-Order Surrogate Functions}\label{sec:batch}
Throughout the paper, we are interested in the minimization of a continuous
function~$f: \Real^p \to \Real$:
\begin{equation}
   \min_{\theta \in \Theta} f(\theta),\label{eq:obj}
\end{equation} 
where $\Theta \subseteq \Real^p$ is a convex set. The
majorization-minimization principle consists of computing a majorizing surrogate $g_n$ of $f$ at iteration $n$ and updating the current estimate by
$\theta_n \in \argmin_{\theta \in \Theta} g_n(\theta)$. The success
of such a scheme depends on how well the surrogates approximate~$f$. In this paper, we consider a particular class of surrogate functions introduced in~\cite{mairal17} and defined as follows:
\begin{definition}[\bfseries Strongly Convex First-Order Surrogate Functions]~\label{def:surrogate_batch}\newline
Let $\kappa$ be in $\Theta$. We denote by~$\S_{L,\rho}(f,\kappa)$ the set of
$\rho$-strongly convex functions $g$ such that $g \geq f$,
$g(\kappa)=f(\kappa)$, the approximation error $g-f$ is
differentiable, and the gradient $\nabla(g-f)$ is $L$-Lipschitz continuous.
We call the functions $g$ in $\S_{L,\rho}(f,\kappa)$ ``first-order surrogate functions''.
\end{definition}
Among the first-order surrogate functions presented in~\cite{mairal17}, we should mention the following ones:\\
\hspace*{-0.5cm}{\mybullet}{\bfseries Lipschitz Gradient Surrogates.}\\
When $f$ is differentiable and $\nabla f$ is $L$-Lipschitz, $f$ admits the following surrogate $g$ in $\S_{2L,L}(f,\kappa)$:
\begin{displaymath}
    g: \theta \mapsto f(\kappa) + \nabla f(\kappa)^\top (\theta-\kappa) + \frac{L}{2}\|\theta-\kappa\|_2^2.
\end{displaymath}
When $f$ is convex, $g$ is in $\S_{L,L}(f,\kappa)$, 
and when $f$ is $\mu$-strongly convex, $g$ is in $\S_{L-\mu,L}(f,\kappa)$.
Minimizing $g$ amounts to performing a classical classical gradient descent
step $\theta \leftarrow \kappa - \frac{1}{L}\nabla f(\kappa)$. \\
\hspace*{-0.5cm}{\mybullet}{\bfseries Proximal Gradient Surrogates.}\\
Assume that $f$ splits into $f = f_1 + f_2$, where $f_1$ is differentiable, $\nabla f_1$ is
$L$-Lipschitz, and $f_2$ is convex. Then, the function $g$ below is in
$\S_{2L,L}(f,\kappa)$:
\begin{displaymath}
    g: \theta \mapsto f_1(\kappa) + \nabla f_1(\kappa)^\top (\theta-\kappa) + \frac{L}{2}\|\theta-\kappa\|_2^2 + f_2(\theta).
\end{displaymath}
When $f_1$ is convex, $g$ is in $\S_{L,L}(f,\kappa)$. If $f_1$ is $\mu$-strongly convex, $g$ is in $\S_{L-\mu,L}(f,\kappa)$. 
Minimizing~$g$ amounts to a proximal gradient step~\cite{nesterov,beck,wright}: $\theta \leftarrow \argmin_{\theta} \frac{1}{2}\|\kappa-\frac{1}{L}\nabla f_1(\kappa) - \theta\|_2^2 + \frac{1}{L} f_2(\theta)$.

\hspace*{-0.5cm}{\mybullet}{\bfseries DC Programming Surrogates.}\\
Assume that $f = f_1 + f_2$, where $f_2$ is concave and differentiable, $\nabla f_2$ is $L_2$-Lipschitz, and~$g_1$ is in $\S_{L_1,\rho_1}(f_1,\kappa)$, Then, the following function $g$ is a surrogate in $\S_{L_1+L_2,\rho_1}(f,\kappa)$:
\begin{displaymath}
   g: \theta \mapsto f_1(\theta) + f_2(\kappa) + \nabla f_2(\kappa)^\top (\theta-\kappa).
\end{displaymath}
When $f_1$ is convex, $f_1+f_2$ is a difference of convex functions, leading to a DC program~\cite{gasso}.

With the definition of first-order surrogates and a basic ``batch'' algorithm
in hand, we now introduce our main contribution: a stochastic
scheme for solving large-scale problems.

\section{Stochastic Optimization}\label{sec:stoch}
As pointed out in~\cite{bottou}, one is usually not interested in the
minimization of an \emph{empirical cost} on a finite training set, but instead
in minimizing an \emph{expected cost}. Thus, we assume from now on that $f$ has the form of an expectation:
\begin{equation}
   \min_{\theta \in \Theta} \left[ f(\theta) \defin \E_{\x} [ \ell(\x,\theta) ]  \right], \label{eq:obj2}
\end{equation}
where $\x$ from some set $\XX$ represents a data point, which is drawn according to
some unknown distribution, and $\ell$ is a continuous loss function.  As often
done in the literature~\cite{nemirovski}, we assume
that the expectations are well defined and finite valued; we also assume that $f$ is bounded below.

We present our approach for tackling~(\ref{eq:obj2}) in Algorithm~\ref{alg:stochastic}. 
At each iteration, we draw a training point~$\x_n$, assuming that these points
are i.i.d. samples from the data distribution. Note that in practice, since it
is often difficult to obtain true i.i.d. samples, the points $\x_n$ are computed by
cycling on a randomly permuted training set~\cite{bottou}.
Then, we choose a surrogate $g_n$ for the function $\theta
\mapsto \ell(\x_n,\theta)$, and we use it to update a function 
$\barg_n$ that behaves as an approximate surrogate for the
expected cost~$f$. The function $\barg_n$ is in fact a weighted average of previously computed surrogates,
and involves a sequence of weights $(w_n)_{n \geq 1}$ that will
be discussed later. Then, we minimize $\barg_n$, and obtain a new estimate
$\theta_n$. For convex problems, we also propose to use averaging schemes, denoted by ``option 2'' and ``option 3'' in Alg.~\ref{alg:stochastic}. Averaging is a classical technique for improving
convergence rates in convex optimization~\citep{hazan3,nemirovski} for reasons
that are clear in the convergence proofs.

\begin{algorithm}[hbtp]
    \caption{Stochastic Majorization-Minimization Scheme}\label{alg:stochastic}
    \begin{algorithmic}[1]
    \INPUT $\theta_0 \in \Theta$ (initial estimate); $N$ (number of iterations); $(w_n)_{n \geq 1}$, weights in $(0,1]$; 
 \STATE initialize the approximate surrogate: $\barg_0: \theta \mapsto \frac{\rho}{2}\|\theta-\theta_0\|_2^2$; $\bartheta_0=\theta_0$; $\hattheta_0=\theta_0$;
    \FOR{ $n=1,\ldots,N$}
    \STATE draw a training point $\x_n$; define $f_n: \theta \mapsto \ell(\x_n,\theta)$;
    \STATE choose a surrogate function $g_n$ in $\S_{L,\rho}(f_n,\theta_{n-1})$;
    \STATE update the approximate surrogate: $\barg_n = (1-w_n) \barg_{n-1} + w_n g_n$;
    \STATE update the current estimate:
    \begin{displaymath}
       \theta_n \in \argmin_{\theta \in \Theta} \barg_n(\theta); 
    \end{displaymath}
       \STATE for option 2, update the averaged iterate: $\hattheta_{n} \defin (1-w_{n+1})\hattheta_{n-1} + w_{n+1} \theta_{n}$;
       \STATE for option 3, update the averaged iterate: $\bartheta_{n} \defin \frac{(1-w_{n+1})\bartheta_{n-1} + w_{n+1} \theta_{n}}{\sum_{k=1}^{n+1}w_k}$;
    \ENDFOR
    \OUTPUT {\bfseries (option 1):} $\theta_{N}$ (current estimate, no averaging);
    \OUTPUT {\bfseries (option 2):} $\bartheta_{N}$ (first averaging scheme);
    \OUTPUT {\bfseries (option 3):} $\hattheta_{N}$ (second averaging scheme).
    \end{algorithmic}
\end{algorithm}

We remark that Algorithm~\ref{alg:stochastic} is only practical when
the functions $\barg_n$ can be parameterized with a small number of
variables, and when they can be easily minimized over $\Theta$. Concrete
examples are discussed in Section~\ref{sec:exp}.
Before that, we proceed with the convergence analysis.

\subsection{Convergence Analysis - Convex Case}\label{subsec:convex}
First, We study the case of convex functions $f_n: \theta \mapsto \ell(\theta,\x_n)$, and make the following assumption:
\begin{itemize}
   \item[\bfseries (A)] for all $\theta$ in $\Theta$, the functions $f_n$ are $R$-Lipschitz continuous. Note that for convex functions, this is
   equivalent to saying that subgradients of $f_n$ are uniformly bounded by $R$.
\end{itemize}
Assumption {\bfseries (A)} is classical in the stochastic optimization
literature~\cite{nemirovski}.  Our first result shows that with the averaging
scheme corresponding to ``option 2'' in Alg.~\ref{alg:stochastic}, we obtain an expected convergence rate that makes explicit
the role of the weight sequence $(w_n)_{n \geq 1}$.
\begin{proposition}[\bfseries Convergence Rate]~\label{prop:convex1}\newline
   When the functions $f_n$ are convex, under assumption~\textbf{(A)}, and when $\rho=L$, we have
   \begin{equation}
      \E[f(\bartheta_{n-1}) - f^\star] \leq \frac{L \|\theta^\star-\theta_0\|_2^2 + \frac{R^2}{L}\sum_{k=1}^{n} w_{k}^2}{ 2\sum_{k=1}^n w_{k}} ~~~~~~~~\text{for all}~n \geq 1, \label{eq:rateconvex}
   \end{equation}
   where $\bartheta_{n-1}$ is defined in Algorithm~\ref{alg:stochastic}, $\theta^\star$ is a minimizer of $f$ on $\Theta$, and $f^\star \defin f(\theta^\star)$.
\end{proposition}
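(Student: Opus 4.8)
The plan is to mimic the classical one-step analysis of stochastic (sub)gradient descent, using as potential the optimality gap of the approximate surrogate, $E_n \defin \barg_n(\theta^\star) - \barg_n(\theta_n) \geq 0$, which plays the role that $\tfrac{L}{2}\|\theta_n - \theta^\star\|_2^2$ plays for SGD. First I would collect the elementary consequences of $g_n \in \S_{L,\rho}(f_n,\theta_{n-1})$ with $\rho = L$. Writing $h_n \defin g_n - f_n$, the gap $h_n$ is nonnegative, differentiable, zero at $\theta_{n-1}$, and has $L$-Lipschitz gradient; hence $\theta_{n-1}$ minimizes $h_n$, so $\nabla h_n(\theta_{n-1}) = 0$, which yields both $g_n(\theta) - f_n(\theta) \leq \tfrac{L}{2}\|\theta - \theta_{n-1}\|_2^2$ for all $\theta$ and $\partial g_n(\theta_{n-1}) = \partial f_n(\theta_{n-1})$; by assumption~\textbf{(A)} every element of the latter has norm at most $R$. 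I would also record that, since each $\barg_n$ is $L$-strongly convex and $\theta_n$ minimizes it over the convex set $\Theta$, the quadratic growth inequality $\barg_n(\theta) \geq \barg_n(\theta_n) + \tfrac{L}{2}\|\theta - \theta_n\|_2^2$ holds for all $\theta \in \Theta$; applied at $\theta = \theta^\star$ it gives $E_n \geq \tfrac{L}{2}\|\theta^\star - \theta_n\|_2^2$, and at iteration $n-1$ with $\theta = \theta_n$ it gives $\barg_{n-1}(\theta_n) \geq \barg_{n-1}(\theta_{n-1}) + \tfrac{L}{2}\|\theta_n - \theta_{n-1}\|_2^2$. Finally, strong convexity of $g_n$ together with the bounded-subgradient fact and $g_n(\theta_{n-1}) = f_n(\theta_{n-1})$ gives $g_n(\theta_n) \geq f_n(\theta_{n-1}) - R\|\theta_n - \theta_{n-1}\|_2 + \tfrac{L}{2}\|\theta_n - \theta_{n-1}\|_2^2$.

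The heart of the argument is a one-step recursion for $E_n$ obtained by expanding $\barg_n = (1-w_n)\barg_{n-1} + w_n g_n$ at $\theta^\star$ and at $\theta_n$. For the first term I would use the surrogate upper bound, $\barg_n(\theta^\star) \leq (1-w_n)\barg_{n-1}(\theta^\star) + w_n f_n(\theta^\star) + \tfrac{w_n L}{2}\|\theta^\star - \theta_{n-1}\|_2^2$; for the subtracted term I would combine the two lower bounds above, $\barg_n(\theta_n) \geq (1-w_n)\barg_{n-1}(\theta_{n-1}) + w_n f_n(\theta_{n-1}) + \tfrac{L}{2}\|\theta_n-\theta_{n-1}\|_2^2 - w_n R\|\theta_n - \theta_{n-1}\|_2$. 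Subtracting and recognising $\barg_{n-1}(\theta^\star) - \barg_{n-1}(\theta_{n-1}) = E_{n-1}$ gives $E_n \leq (1-w_n)E_{n-1} + w_n\bigl(f_n(\theta^\star) - f_n(\theta_{n-1})\bigr) + \tfrac{w_n L}{2}\|\theta^\star-\theta_{n-1}\|_2^2 - \tfrac{L}{2}\|\theta_n - \theta_{n-1}\|_2^2 + w_n R\|\theta_n - \theta_{n-1}\|_2$. This is where $\rho = L$ is decisive: the spurious term $\tfrac{w_n L}{2}\|\theta^\star-\theta_{n-1}\|_2^2$ is bounded by $w_n E_{n-1}$ (using $E_{n-1} \geq \tfrac{L}{2}\|\theta^\star - \theta_{n-1}\|_2^2$), promoting $(1-w_n)E_{n-1}$ to $E_{n-1}$, while the last two terms are at most $\max_{t\geq 0}\{w_n R t - \tfrac{L}{2}t^2\} = \tfrac{w_n^2 R^2}{2L}$. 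The outcome is the clean per-step inequality $w_n\bigl(f_n(\theta_{n-1}) - f_n(\theta^\star)\bigr) \leq E_{n-1} - E_n + \tfrac{w_n^2 R^2}{2L}$. I expect this recursion to be the main obstacle: one has to pick the lower bound on $\barg_n(\theta_n)$ so that a \emph{full} $\tfrac{L}{2}\|\theta_n-\theta_{n-1}\|_2^2$ survives (otherwise an awkward $1/(1-w_n)$ factor appears), and one has to notice the exact cancellation enabled by $\rho = L$.

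The rest is routine. Let $\mathcal{F}_{n-1} = \sigma(\x_1,\dots,\x_{n-1})$; since $\theta_{n-1}$ is $\mathcal{F}_{n-1}$-measurable and $\x_n$ is independent of $\mathcal{F}_{n-1}$, we have $\E[f_n(\theta_{n-1})] = \E[f(\theta_{n-1})]$, whereas $\E[f_n(\theta^\star)] = f^\star$ because $\theta^\star$ is deterministic; taking expectations gives $w_n(\E[f(\theta_{n-1})] - f^\star) \leq \E[E_{n-1}] - \E[E_n] + \tfrac{w_n^2 R^2}{2L}$ (integrability of the $E_n$ holds under the standing assumptions on $\ell$ and $f$). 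Summing over $n = 1,\dots,N$ telescopes, and using $E_N \geq 0$ and $E_0 = \barg_0(\theta^\star) - \barg_0(\theta_0) = \tfrac{L}{2}\|\theta^\star - \theta_0\|_2^2$ yields $\sum_{k=1}^N w_k(\E[f(\theta_{k-1})] - f^\star) \leq \tfrac{L}{2}\|\theta^\star - \theta_0\|_2^2 + \tfrac{R^2}{2L}\sum_{k=1}^N w_k^2$. Dividing by $\sum_{k=1}^N w_k$ and applying Jensen's inequality to the convex function $f$ at the averaged iterate $\bartheta_{N-1}$ — which by construction is the convex combination of $\theta_0,\dots,\theta_{N-1}$ with weights proportional to $w_1,\dots,w_N$ — gives exactly~\eqref{eq:rateconvex} after relabelling $N$ as $n$. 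The measurability bookkeeping, the telescoping, and the final Jensen step are all standard; only the one-step recursion requires genuine care.
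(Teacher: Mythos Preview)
Your proof is correct. The one-step recursion you derive,
\[
   w_n\bigl(f_n(\theta_{n-1}) - f_n(\theta^\star)\bigr) \;\leq\; E_{n-1} - E_n + \frac{w_n^2 R^2}{2L},
\]
is sound, and after taking expectations, telescoping, and applying Jensen you land exactly on~\eqref{eq:rateconvex}.

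Your route differs from the paper's in organization. The paper does not work directly with the single potential $E_n = \barg_n(\theta^\star)-\barg_n(\theta_n)$; instead it introduces three auxiliary recursive sequences $A_n, B_n, C_n$ (weighted averages of $\xi_{k-1}=\tfrac12\E\|\theta^\star-\theta_{k-1}\|_2^2$, of $\E[f(\theta_{k-1})]$, and of $\tfrac{R^2w_k^2}{2\rho}$ respectively) and proves a chain of lemmas culminating in $B_n \leq f^\star + LA_n - \rho\xi_n + C_n$. The telescoping then happens at the level of these sequences. Underneath, the ingredients are the same as yours: the upper bound $g_n(\theta^\star)\leq f_n(\theta^\star)+\tfrac{L}{2}\|\theta^\star-\theta_{n-1}\|_2^2$, the quadratic growth of $\barg_{n-1}$ at its minimizer, and the subgradient bound $\partial g_n(\theta_{n-1})=\partial f_n(\theta_{n-1})\subset B(0,R)$ (your lower bound on $\barg_n(\theta_n)$ is essentially the paper's Lemma~\ref{lemma:aux}). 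What your approach buys is directness and self-containment: one potential, one recursion, no auxiliary machinery. What the paper's approach buys is modularity: the same $A_n,B_n,C_n$ framework and the key inequality $B_n\leq f^\star+LA_n-\rho\xi_n+C_n$ are reused verbatim in the strongly convex analysis (Proposition~\ref{prop:stronglyconvex}), where the extra $-\rho\xi_n$ term and a separate relation $f^\star+\mu A_n\leq B_n$ combine to give the $O(1/n)$ rate. Your potential $E_n$ would need to be reworked for that case, whereas the paper's lemmas carry over unchanged.
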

\proofatend
   According to Lemma~\ref{lemma:surrogates}, we have for all $n \geq 1$,
\begin{displaymath}
     w_{n} B_{n-1} \leq w_{n} f^\star + Lw_{n} A_{n-1} -  L w_{n} \xi_{n-1} + w_{n} C_{n-1}.
\end{displaymath}
By using the relations~(\ref{eq:Bn}), this is equivalent to
\begin{displaymath}
   B_{n-1}-B_{n} + w_{n} \E[f(\theta_{n-1})] \leq w_{n} f^\star + L(A_{n-1} - A_{n}) + C_{n-1}-C_{n} + \frac{(R w_{n})^2}{2L}.
\end{displaymath}
By summing these inequalities between $1$ and $n$, we obtain
\begin{displaymath}
   B_0 - B_{n} + \sum_{k=1}^n w_{k} \E[f(\theta_{k-1})] \leq \left(\sum_{k=1}^n w_{k}\right) f^\star + LA_0 - LA_{n} - C_{n} + \sum_{k=1}^n \frac{(R w_{k})^2}{2L}.
\end{displaymath}
   Note that we also have
   \begin{displaymath}
      B_{n} \leq f^\star + L A_{n} + C_{n} =  L A_{n}  + C_{n} + B_0 -LA_0 + L\xi_0.
   \end{displaymath}
   Therefore, by combining the two previous inequalities,
 \begin{displaymath}
    \sum_{k=1}^n w_{k} \E[f(\theta_{k-1})] \leq \left(\sum_{k=1}^n w_{k}\right)
    f^\star + L\xi_0 + \sum_{k=1}^n \frac{(R w_{k})^2}{2L},
\end{displaymath}
   and by using Jensen's inequality,
 \begin{displaymath}
    \E[f(\bartheta_{n-1}) - f^\star] \leq \frac{L \xi_0 +
    \frac{R^2}{2L}\sum_{k=1}^{n} w_{k}^2}{ \sum_{k=1}^n w_{k}}.
\end{displaymath}
\endproofatend
Such a rate is similar
to the one of stochastic gradient descent with averaging, see~\cite{nemirovski}
for example.  Note that the constraint $\rho = L$ here is compatible with the
proximal gradient surrogate. 

From Proposition~\ref{prop:convex1}, it is easy to obtain a $O(1/\sqrt{n})$
bound for a finite horizon---that is, when the total number of iterations~$n$ is known
in advance. When $n$ is fixed, such a bound can indeed be obtained by plugging
constant weights $w_k = {\gamma}/{\sqrt{n}}$ for all $k \leq n$
in~Eq.~(\ref{eq:rateconvex}).  Note that the upper-bound $O(1/\sqrt{n})$ cannot
be improved in general without making further assumptions on the objective
function~\cite{nemirovski}.
The next corollary shows that in an infinite horizon setting and with
decreasing weights, we lose a logarithmic factor compared to an optimal
convergence rate~\cite{lan,nemirovski} of $O(1/\sqrt{n})$.
\begin{corollary}[\bfseries Convergence Rate - Infinite Horizon - Decreasing Weights]~\label{corollary:infinite}\newline
Let us make the same assumptions as in Proposition~\ref{prop:convex1} and choose the weights~$w_n = \gamma/\sqrt{n}$. Then,
 \begin{displaymath}
 \E[f(\bartheta_{n-1}) - f^\star] \leq \frac{L \|\theta^\star-\theta_0\|_2^2}{2\gamma \sqrt{n}} + \frac{R^2\gamma(1+\log(n))}{2L\sqrt{n}},~~\forall n \geq 2.
\end{displaymath}
\end{corollary}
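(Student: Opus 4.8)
The plan is to substitute the weights $w_k = \gamma/\sqrt{k}$ directly into the finite-horizon bound of \refProp{prop:convex1} and then estimate the two elementary scalar sums that appear. By \refEq{eq:rateconvex} (valid since $\rho = L$),
\begin{displaymath}
  \E[f(\bartheta_{n-1}) - f^\star] \le \frac{L \|\theta^\star-\theta_0\|_2^2 + \frac{R^2}{L}\sum_{k=1}^{n} w_{k}^2}{2\sum_{k=1}^n w_{k}},
\end{displaymath}
so the whole statement reduces to a lower bound on $\sum_{k=1}^n w_k = \gamma\sum_{k=1}^n k^{-1/2}$ and an upper bound on $\sum_{k=1}^n w_k^2 = \gamma^2\sum_{k=1}^n k^{-1}$.

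For the denominator I would establish $\sum_{k=1}^n k^{-1/2} \ge \sqrt{n}$ for all $n\ge 1$, either by the integral comparison $\sum_{k=1}^n k^{-1/2} \ge \int_1^{n+1} x^{-1/2}\,dx = 2(\sqrt{n+1}-1)$ together with a one-line check that the right-hand side exceeds $\sqrt{n}$ for $n\ge 2$, or more cleanly by induction: the inductive step $\sqrt{n} + (n+1)^{-1/2} \ge \sqrt{n+1}$ is equivalent to $(n+1)^{-1/2} \ge \sqrt{n+1}-\sqrt{n} = (\sqrt{n+1}+\sqrt{n})^{-1}$, which is immediate. Hence $\sum_{k=1}^n w_k \ge \gamma\sqrt{n}$. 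For the numerator I would use the standard harmonic-series estimate $\sum_{k=1}^n k^{-1} \le 1 + \int_1^n x^{-1}\,dx = 1+\log n$, giving $\sum_{k=1}^n w_k^2 \le \gamma^2(1+\log n)$.

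Plugging both estimates into the displayed bound yields
\begin{displaymath}
  \E[f(\bartheta_{n-1}) - f^\star] \le \frac{L \|\theta^\star-\theta_0\|_2^2 + \frac{R^2\gamma^2}{L}(1+\log n)}{2\gamma\sqrt{n}} = \frac{L \|\theta^\star-\theta_0\|_2^2}{2\gamma\sqrt{n}} + \frac{R^2\gamma(1+\log n)}{2L\sqrt{n}},
\end{displaymath}
which is exactly the claimed inequality; the hypothesis $n\ge 2$ is used only to keep the elementary sum estimates clean. There is no genuine obstacle here, since the argument is just a routine specialization of \refProp{prop:convex1}; the only mild care needed is to choose the $\sqrt{n}$ lower bound on $\sum_{k=1}^n k^{-1/2}$ (rather than the weaker $2(\sqrt{n}-1)$) so that the constants in the final bound match the statement precisely.
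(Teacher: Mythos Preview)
Your proposal is correct and mirrors the paper's own proof: both substitute $w_k=\gamma/\sqrt{k}$ into \refEq{eq:rateconvex}, bound $\sum_{k=1}^n k^{-1}\le 1+\log n$, and lower-bound $\sum_{k=1}^n k^{-1/2}\ge \sqrt{n}$ (the paper uses the integral estimate $2(\sqrt{n+1}-1)\ge\sqrt{n}$ for $n\ge 2$, which is exactly your first option). Your alternative inductive argument for the denominator is a minor cosmetic variation that even removes the need for $n\ge 2$ in that step.
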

\proofatend~\newline
   We choose weights of the form $w_n \defin \frac{\gamma}{\sqrt{n}}$. Then, we have
   \begin{displaymath}
       \sum_{k=1}^n w_k^2 \leq \gamma^2(1+\log{n}),
   \end{displaymath}
   by using the fact that $\sum_{k=1}^n \frac{1}{k} \leq 1+ \log(n)$. We also have for $n\geq 2$,
   \begin{displaymath}
       \sum_{k=1}^n w_k \geq 2 \gamma (\sqrt{n+1}-1) \geq \gamma \sqrt{n},
   \end{displaymath}
   where we use the fact that $\sum_{k=1}^n \frac{1}{\sqrt{k}} \geq
   2(\sqrt{n+1}-1)$, and the fact that $2(\sqrt{n+1}-1) \geq \sqrt{n}$ for all
   $n \geq 2$. 
   Plugging this inequalities into~(\ref{eq:rateconvex}) yields the desired result.
\endproofatend
Our analysis suggests to use weights of the
form~$O(1/\sqrt{n})$. In practice, we have found that choosing
$w_n = \sqrt{{n_0+1}} / \sqrt{{n_0+n}}$ performs well, where $n_0$ is tuned on a subsample of the training
set.

\subsection{Convergence Analysis - Strongly Convex Case}\label{subsec:strongconvex}
In this section, we introduce an additional assumption:
\begin{itemize}
   \item[\bfseries (B)] the functions $f_n$ are $\mu$-strongly convex.
\end{itemize}
We show that our method achieves a rate $O(1/n)$, which is
optimal up to a multiplicative constant for strongly convex functions (see~\cite{lan,nemirovski}).
\begin{proposition}[\bfseries Convergence Rate]\label{prop:stronglyconvex}~\newline
   Under assumptions {\bfseries (A)} and {\bfseries (B)}, with $\rho=L+\mu$. 
   Define $\beta \defin \frac{\mu}{\rho}$ and $w_n  \defin \frac{1+\beta}{1+\beta n}$.
   Then, 
   \begin{displaymath}
      \E[f(\hattheta_{n-1})-f^\star] + \frac{\rho}{2}\E[\|\theta^\star-\theta_n\|_2^2]  \leq \max\left( \frac{2 R^2}{\mu} ,\rho\| \theta^\star-\theta_0\|_2^2\right) \frac{1}{\beta n + 1} ~~~\text{for all}~n\geq 1,
   \end{displaymath}
   where $\hattheta_n$ is defined in Algorithm~\ref{alg:stochastic}, when choosing the averaging scheme called ``option 3''.
\end{proposition}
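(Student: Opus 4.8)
The plan is to bound the Lyapunov quantity $\Phi_n \defin \E[f(\hattheta_{n-1}) - f^\star] + G_n$, where $G_n \defin \E[\barg_n(\theta^\star) - \barg_n(\theta_n)]$ is the expected surrogate gap. Since $\barg_n$ is $\rho$-strongly convex and $\theta_n = \argmin_{\Theta}\barg_n$, we have $G_n \ge \frac{\rho}{2}\E[\|\theta^\star-\theta_n\|_2^2] \ge 0$, so a bound on $\Phi_n$ yields at once the asserted bound on $\E[f(\hattheta_{n-1})-f^\star] + \frac{\rho}{2}\E[\|\theta^\star-\theta_n\|_2^2]$. The core is a one-step recursion for $G_n$, obtained by sandwiching $\E[\barg_n(\theta^\star)]$ much as in the proof of Proposition~\ref{prop:convex1} (cf. Lemma~\ref{lemma:surrogates}). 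From above, using $\barg_n = (1-w_n)\barg_{n-1} + w_n g_n$ and the fact that $g_n - f_n \ge 0$ is differentiable with $L$-Lipschitz gradient vanishing at $\theta_{n-1}$ (hence $g_n(\theta^\star) - f_n(\theta^\star) \le \frac{L}{2}\|\theta^\star-\theta_{n-1}\|_2^2$), one gets $\E[\barg_n(\theta^\star)] \le (1-w_n)\E[\barg_{n-1}(\theta^\star)] + w_n f^\star + w_n L A_{n-1}$ with $A_{n-1}\defin\frac12\E[\|\theta^\star-\theta_{n-1}\|_2^2]$. From below, using optimality of $\theta_{n-1}$ for $\barg_{n-1}$, the interpolation $g_n(\theta_{n-1}) = f_n(\theta_{n-1})$, the $\mu$-strong convexity of $f_n$, and the bound $\|\partial f_n\|\le R$ from \textbf{(A)} to control the first-order term $\nabla f_n(\theta_{n-1})^\top(\theta_n-\theta_{n-1})$ via Young's inequality (which cancels the $\|\theta_n-\theta_{n-1}\|_2^2$ terms), one gets $\E[\barg_n(\theta_n)] \ge (1-w_n)\E[\barg_{n-1}(\theta_{n-1})] + w_n\E[f(\theta_{n-1})] - \frac{(Rw_n)^2}{2\mu}$.

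Subtracting the two bounds, the common term $\E[\barg_{n-1}(\theta_{n-1})]$ cancels and
\begin{displaymath}
   w_n\bigl(\E[f(\theta_{n-1})] - f^\star\bigr) + G_n \;\le\; (1-w_n)\,G_{n-1} + w_n L A_{n-1} + \frac{(Rw_n)^2}{2\mu}.
\end{displaymath}
The decisive point is that the choice $\rho = L+\mu$ makes this contractive: since $G_{n-1}\ge\rho A_{n-1}$ and $L/\rho = 1-\beta$, we have $w_n L A_{n-1} \le w_n(1-\beta)G_{n-1}$, so $w_n(\E[f(\theta_{n-1})] - f^\star) + G_n \le (1-\beta w_n)G_{n-1} + \frac{(Rw_n)^2}{2\mu}$. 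Combining this with the convexity of $f$ along the ``option~3'' averaging (which makes $\hattheta_{n-1}$ a convex combination of $\theta_0,\dots,\theta_{n-1}$) and with $\beta\le 1$ gives a recursion of the form $\Phi_n \le (1-\beta w_n)\Phi_{n-1} + \frac{(Rw_n)^2}{2\mu}$.

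It remains to unroll this with the prescribed weights $w_n = \frac{1+\beta}{1+\beta n}$, which are chosen precisely so that an integrating factor $a_n$ with $a_n(1-\beta w_n)\le a_{n-1}$ exists and so that ``option~3'' weights $\theta_{k-1}$ in $\hattheta_{n-1}$ proportionally to $a_k w_k$ --- the latter being what prevents a spurious logarithmic factor, by downweighting the early iterates whose surrogate error is largest. Multiplying the $G$-recursion by $a_n$, summing from $1$ to $n$, using the initialization $G_0 = \frac{\rho}{2}\|\theta^\star-\theta_0\|_2^2$ (from $\barg_0$) together with $a_nG_n\ge 0$, and applying Jensen's inequality to replace $\sum_k a_k w_k\E[f(\theta_{k-1})-f^\star]$ by $(\sum_k a_kw_k)\E[f(\hattheta_{n-1})-f^\star]$, yields a bound of the form $\Phi_n \le \bigl(\tfrac{\rho}{2}\|\theta^\star-\theta_0\|_2^2 + \tfrac{R^2}{2\mu}\sum_k a_kw_k^2\bigr)\big/ a_n$, which a short computation with these weights turns into the stated $\tfrac{1}{\beta n + 1}$ rate. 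Equivalently, and more transparently for the explicit constant, one proves $(\beta n + 1)\Phi_n \le C \defin \max\!\bigl(\tfrac{2R^2}{\mu},\rho\|\theta^\star-\theta_0\|_2^2\bigr)$ by induction on $n$: the base case $n=1$ (where $w_1 = 1$, $\barg_1 = g_1$, $\hattheta_0 = \theta_0$) follows from \textbf{(A)}--\textbf{(B)} by sandwiching $\E[g_1(\theta^\star)]$ as above, giving $\Phi_1 \le \frac{L}{2}\|\theta^\star-\theta_0\|_2^2 + \frac{R^2}{2\mu}$ and hence $(1+\beta)\Phi_1\le C$ since $(1+\beta)L\le\rho$ and $1+\beta\le 2$; the inductive step combines the one-step recursion with the identities $1-w_n = \frac{\beta(n-1)}{\beta n+1}$ and $L/\rho = 1-\beta$, the constant $C$ being exactly what is needed to absorb the accumulated error $\frac{(Rw_n)^2}{2\mu}$ into the contraction slack.

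The main obstacle is the standard one in stochastic optimization: $\theta_n$ depends on the fresh sample $\x_n$, so $\E[f_n(\theta_n)]$ cannot be replaced by $\E[f(\theta_n)]$. The fix --- routing the lower bound on $\E[\barg_n(\theta_n)]$ through $\theta_{n-1}$, which is independent of $\x_n$ so that $\E_{\x_n}[f_n(\theta_{n-1})] = f(\theta_{n-1})$, at the price of a first-order remainder that assumption \textbf{(A)} lets us absorb via Young's inequality into the strong-convexity terms --- is exactly where \textbf{(A)} and the $R^2/\mu$ factor enter. The remaining difficulty is purely bookkeeping: choosing the integrating factor / averaging weights and the constant $C$ so that the induction closes at precisely the stated $\max(\cdot,\cdot)$.
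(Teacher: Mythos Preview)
Your architecture is reasonable and close in spirit to the paper's: you sandwich $\E[\barg_n(\theta^\star)]$ from above and $\E[\barg_n(\theta_n)]$ from below, subtract, and try to close an induction on $(\beta n+1)\Phi_n\le C$. The problem is that the one-step recursion you obtain,
\[
\Phi_n \;\le\; (1-\beta w_n)\,\Phi_{n-1} \;+\; \frac{(Rw_n)^2}{2\mu}\quad\text{(or }\tfrac{(Rw_n)^2}{2\rho}\text{)},
\]
is too weak for the stated constant. Writing $\Psi_n \defin (\beta n+1)\Phi_n$, the induction step $\Psi_n\le C$ requires
\[
\frac{R^2(1+\beta)^2}{2\rho\,(1+\beta n)} \;\le\; C\cdot\frac{\beta^2}{1+\beta(n-1)},
\]
which for large $n$ (and with $C=\tfrac{2R^2}{\mu}$, $\rho\beta=\mu$) reduces to $(1+\beta)^2\le 4\beta$, i.e.\ $(1-\beta)^2\le 0$. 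This fails whenever $\beta<1$, so ``the constant $C$ being exactly what is needed to absorb the accumulated error'' is not correct; your recursion only yields the bound with a constant worse by a factor of order $1/\beta$ (the condition number).

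The missing ingredient is a second use of strong convexity. The paper tracks the auxiliary sequences $A_n,B_n,C_n$ separately and combines Lemma~\ref{lemma:surrogates} ($B_{n-1}\le f^\star+LA_{n-1}-\rho\xi_{n-1}+C_{n-1}$) with Lemma~\ref{lemma:AnBn} ($f^\star+\mu A_{n-1}\le B_{n-1}$). Since $\rho=L+\mu$ this gives $\rho\xi_{n-1}\le(\rho-2\mu)A_{n-1}+C_{n-1}$, hence
\[
A_n \;\le\; \Bigl(1-\tfrac{2\mu w_n}{\rho}\Bigr)A_{n-1}+\tfrac{w_n}{\rho}C_{n-1},
\]
a contraction factor $(1-2\beta w_n)$ rather than $(1-\beta w_n)$. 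That factor of two --- coming from exploiting \emph{both} the $\rho$-strong convexity of $\barg_{n-1}$ and the $\mu$-strong convexity of $f$ --- is precisely what makes the induction close at the stated $C$. In your scheme you spend the term $w_n(\E[f(\theta_{n-1})]-f^\star)$ entirely on Jensen to build $\E[f(\hattheta_{n-1})-f^\star]$, so you cannot also use the inequality $\E[f(\theta_{n-1})]-f^\star\ge\mu A_{n-1}$ to tighten the contraction. (A minor secondary point: your lower bound gives $\tfrac{(Rw_n)^2}{2\mu}$ via the $\mu$-strong convexity of $f_n$; using the $\rho$-strong convexity of $g_n$ as in Lemma~\ref{lemma:aux} gives the sharper $\tfrac{(Rw_n)^2}{2\rho}$, but even with this the induction does not close.)
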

\proofatend

We proceed in several steps, proving the convergence rates of several quantities of interest.

   \proofstep{Convergence rate of $C_n$}
   Let us show by induction that we have $C_n \leq \frac{R^2}{\rho}w_n$ for all
   $n\geq 1$. This is obviously true for $n=1$ by definitions of $w_1=1$ and
   $C_1=\frac{R^2}{2\rho}$.
   Let us now assume that it is true for $n-1$. We have
   \begin{equation}
      \begin{split}
         C_n & = (1-w_n)C_{n-1} + \frac{R^2}{2\rho}w_n^2 \\
             & \leq \frac{R^2}{\rho}w_n\left( (1-w_n)\frac{w_{n-1}}{w_n} + \frac{w_n}{2}  \right) \\
             & \leq \frac{R^2}{\rho}w_n\left( \frac{ \beta (n-1)}{\beta n + 1}\frac{\beta n + 1}{ \beta (n-1)+1} + \frac{1}{\beta n + 1}  \right) \\
             & \leq \frac{R^2}{\rho}w_n\left( \frac{ \beta (n-1)}{ \beta (n-1)+1} + \frac{1}{\beta (n-1) + 1}  \right) \\
             & = \frac{R^2}{\rho}w_n.  
      \end{split} \label{eq:Cn}
   \end{equation}
   We conclude by induction that this is true for all $n\geq 1$.

   \proofstep{Convergence rate of $A_n$}
   From Lemma~\ref{lemma:AnBn} and~\ref{lemma:surrogates}, we have for all $n\geq 2$,
   \begin{displaymath}
      \mu A_{n-1} \leq L A_{n-1} - \rho \xi_{n-1} + C_{n-1}.
   \end{displaymath}
   Multiplying this inequality by $w_n$,
   \begin{displaymath}
         2\mu w_n A_{n-1} \leq \rho w_n( A_{n-1} - \xi_{n-1}) + w_n C_{n-1},
   \end{displaymath}
   where the factor $2$ comes from the fact that $\rho=L+\mu$.
   By using the definition of $A_n$ in Eq.~(\ref{eq:Bn}), we obtain the relation
   \begin{displaymath}
      A_n \leq \left(1-\frac{2\mu w_n}{\rho}\right)A_{n-1} + \frac{w_n}{\rho} C_{n-1}.
   \end{displaymath}
   Let us now show by induction that we have, for all $n \geq 1$, the
   convergence rate $A_n \leq  \delta w_n$, where $\delta \defin \max\left( \frac{R^2}{\rho\mu} , \xi_0 \right)
   $. For $n=1$, we have that  and $w_1=1$, and thus $A_1 = \xi_0 \leq \delta$. 
   Assume now that we have $A_{n-1} \leq \delta w_{n-1}$ for some $n \geq 1$.
   Then, by using the convergence rate~(\ref{eq:Cn}) and the induction hypothesis, 
   \begin{displaymath}
      \begin{split}
         A_n & \leq \delta w_n \left( \left( 1-\frac{2\mu w_n}{\rho} \right) \frac{w_{n-1}}{w_n} + \frac{R^2 w_{n-1}}{\rho^2 \delta} \right) \\
             & \leq \delta w_n \left( \left( 1-\frac{2\mu w_n}{\rho} \right) \frac{w_{n-1}}{w_n} + \mu \frac{w_{n-1}}{\rho} \right)  \\
             & \leq \delta w_n   \left( \frac{\beta n + 1 - \frac{2\mu(1+\beta)}{\rho}}{\beta n + 1} \frac{\beta n + 1}{\beta (n-1) + 1} + \frac{\frac{\mu(1+\beta)}{\rho}}{\beta (n-1) + 1} \right) \\
             & = \delta w_n  \left( \frac{\beta n + 1 - \frac{\mu(1+\beta)}{\rho}}{\beta (n-1) + 1} \right) \\
             & \leq \delta w_n.
      \end{split}
   \end{displaymath}
   The last inequality uses the fact that $\frac{\mu(1+\beta)}{\rho} \geq \beta$ because $\beta \leq \frac{\mu}{L}$.
   we conclude by induction that $A_n \leq \delta w_n$ for all $n \geq 1$.

\proofstep{Convergence rate of $\E[f(\hattheta_n)-f^\star] + \rho \xi_n$}
We use again Lemma~\ref{lemma:surrogates}:
\begin{displaymath}
   B_n -f^\star + \rho \xi_n \leq LA_n + C_n,
\end{displaymath}
and we consider two possible cases
\begin{itemize}
   \item If $\frac{R^2}{\rho\mu} \geq \xi_0$, then 
\begin{displaymath}
   \begin{split}
      B_n -f^\star + \rho \xi_n & \leq \frac{R^2}{\rho}\left( 1 + \frac{L}{\mu} \right)w_n \\
       & = \frac{R^2}{\mu} w_n \\
       & \leq \frac{2R^2}{\mu (\beta n + 1)},
   \end{split}
\end{displaymath}
where we simply use the convergence rates of $A_n$ and $C_n$ computed before.

\item If instead $\frac{R^2}{\rho\mu} < \xi_0$, then 
\begin{displaymath}
   \begin{split}
      B_n -f^\star + \rho \xi_n & \leq \left( \frac{R^2}{\rho} + L \xi_0 \right)w_n \\
       & \leq \rho \xi_0 w_n \\
       & \leq \frac{2\rho \xi_0}{\beta n + 1}.
   \end{split}
\end{displaymath}
It is then easy to prove that $\E[f(\hattheta_n)-f^\star] \leq B_n$ by using Jensen's inequality, which allows us to conclude.
\end{itemize}

\endproofatend
The averaging scheme is slightly different than in the previous section and the
weights decrease at a different speed. Again, this rate applies to the proximal
gradient surrogates, which satisfy the constraint $\rho=L+\mu$.
In the next section, we analyze our scheme in a non-convex setting.

\subsection{Convergence Analysis - Non-Convex Case}\label{subsec:nonconvex}
Convergence results for non-convex problems are by nature weak, and difficult
to obtain for stochastic optimization~\cite{bottou,ghadimi}. In such a context, proving
convergence to a global (or local) minimum is out of reach, and classical
analyses study instead asymptotic stationary point conditions, which involve
directional derivatives (see~\cite{borwein,mairal17}).  Concretely, we
introduce the following assumptions:
\begin{itemize}
   \item[\bfseries (C)] $\Theta$ and the support~$\XX$ of the data are compact;
   \item[\bfseries (D)] The functions $f_n$ are uniformly bounded by some constant $M$;
   \item[\bfseries (E)] The weights $w_n$ are non-increasing, $w_1=1$, $\sum_{n \geq 1} w_n \!=\! +\infty$, and $\sum_{n \geq 1} w_n^2 \sqrt{n} \! < \! + \infty$;
   \item[\bfseries (F)] The directional derivatives $\nabla f_n(\theta,\theta'-\theta)$, and $\nabla f(\theta,\theta'-\theta)$  exist for all $\theta$ and $\theta'$ in~$\Theta$.
\end{itemize}
Assumptions~{\bfseries (C)} and~{\bfseries (D)} combined with~{\bfseries
(A)} are useful because they allow us to use some uniform convergence
results from the theory of empirical processes~\cite{vaart}.
In a nutshell, these assumptions ensure that the function class $\{ \x \mapsto
\ell(\x,\theta) : \theta \in \Theta \}$ is ``simple enough'', such that
a uniform law of large numbers applies. The assumption~{\bfseries (E)} is more
technical: it resembles classical conditions used for proving the convergence of
stochastic gradient descent algorithms, usually stating that the weights
$w_n$ should be the summand of a diverging sum while the sum of $w_n^2$ should
be finite; the constraint $\sum_{n \geq 1} w_n^2 \sqrt{n} < +\infty$ is 
slightly stronger. Finally,~{\bfseries (F)} is a mild assumption, which is useful to
characterize the stationary points of the problem.  A classical necessary
first-order condition~\cite{borwein} for~$\theta$ to be a local minimum of $f$
is indeed to have $\nabla f(\theta,\theta'-\theta)$ non-negative for
all~$\theta'$ in~$\Theta$.  We call such points $\theta$ the stationary points of
the function~$f$.
The next proposition is a generalization of a convergence result obtained
in~\cite{mairal7} in the context of sparse matrix factorization.

\begin{proposition}[\bfseries Non-Convex Analysis - Almost Sure Convergence]~\label{prop:nonconvex}\newline
   Under assumptions~{\bfseries (A)},~{\bfseries (C)},~{\bfseries (D)},~{\bfseries (E)},
   $(f(\theta_n))_{n \geq 0}$ converges with probability one. Under
   assumption~{\bfseries (F)}, we also have that 
\begin{displaymath}
  \liminf_{n \to +\infty} \inf_{\theta \in \Theta} \frac{\nabla
  \barf_n(\theta_{n},\theta-\theta_n)}{ \|\theta-\theta_n\|_2} \geq 0,
\end{displaymath}
where the function $\barf_n$ is a weighted empirical risk recursively defined
as $\barf_n = (1-w_n)\barf_{n-1} + w_n f_n$. It can be shown that $\barf_n$
uniformly converges to $f$.
\end{proposition}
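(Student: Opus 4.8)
The plan is to split the argument into two parts, following the template of~\cite{mairal7}. First I would establish almost sure convergence of $(f(\theta_n))_{n\geq 0}$; then, under the extra assumption \textbf{(F)}, I would derive the asymptotic stationarity condition. For the first part, the natural object to track is the sequence $(\bar g_n(\theta_n))_{n\geq 0}$ of minimal values of the approximate surrogate. I expect the update $\bar g_n = (1-w_n)\bar g_{n-1} + w_n g_n$ together with the minimization step to yield a recursion of the form $\bar g_n(\theta_n) \leq (1-w_n)\bar g_{n-1}(\theta_{n-1}) + w_n g_n(\theta_n) + (\text{something controllable})$, after using $g_n(\theta_{n-1}) = f_n(\theta_{n-1})$ and the strong convexity of $\bar g_{n-1}$ to bound $\bar g_{n-1}(\theta_n) \leq \bar g_{n-1}(\theta_{n-1}) - \frac{\rho}{2}\|\theta_n - \theta_{n-1}\|_2^2$. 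Taking conditional expectations and invoking a quasi-martingale / Robbins--Siegmund type argument (as in~\cite{bottou,mairal7}) should show that $\bar g_n(\theta_n)$ converges a.s.\ and that $\sum_n w_n(\bar g_n(\theta_n) - \text{lower terms})$ is summable; I would also need to control the stochastic fluctuations $w_n(g_n(\theta_{n-1}) - \E[g_n(\theta_{n-1})])$, whose conditional second moments are $O(w_n^2)$ and hence summable under \textbf{(E)}.

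The bridge from $\bar g_n$ to $f$ is a uniform law of large numbers: under \textbf{(A)}, \textbf{(C)}, \textbf{(D)}, the class $\{\x \mapsto \ell(\x,\theta):\theta\in\Theta\}$ is Glivenko--Cantelli with a quantitative rate (via covering numbers / Rademacher complexity, e.g.~\cite{vaart}), giving $\sup_{\theta}|\bar f_n(\theta) - f(\theta)| = O(\text{something}/\sqrt{n})$ in expectation, where $\bar f_n = (1-w_n)\bar f_{n-1} + w_n f_n$ is the same weighted empirical average as the surrogates but of the true losses. The weighting means $\bar f_n$ is not quite a uniform average; I would need a weighted version of the uniform deviation bound, and the condition $\sum_n w_n^2\sqrt{n} < \infty$ in \textbf{(E)} is exactly what makes the resulting perturbation series converge, so that $\bar f_n \to f$ uniformly a.s. Since $g_n \geq f_n$ with Lipschitz error gradient, one gets $\bar g_n \geq \bar f_n$ and the gap $\bar g_n - \bar f_n$ can be controlled through the accumulated quadratic error terms $\sum_k (\prod_{j>k}(1-w_j)) w_k \|\theta_j - \theta_{j-1}\|^2$; combining with the summability established above shows $\bar g_n(\theta_n) - \bar f_n(\theta_n) \to 0$ and hence $\bar f_n(\theta_n) - f(\theta_n) \to 0$, from which convergence of $f(\theta_n)$ follows.

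For the stationarity statement, I would use that $\theta_n$ minimizes the $\rho$-strongly convex $\bar g_n$ over the convex set $\Theta$, so $\nabla \bar g_n(\theta_n,\theta-\theta_n) \geq 0$ for all $\theta\in\Theta$; the first-order surrogate property (error gradient $L$-Lipschitz, vanishing at the anchor points) lets me transfer this to $\bar f_n$ up to an error that vanishes because $\|\theta_n-\theta_{n-1}\|\to 0$ and the accumulated perturbations are summable, yielding $\liminf_n \inf_\theta \nabla \bar f_n(\theta_n,\theta-\theta_n)/\|\theta-\theta_n\|_2 \geq 0$. The main obstacle, and the step I would spend the most care on, is the quantitative weighted uniform convergence $\bar f_n \to f$: one must handle the non-uniform weights, keep track of the exact exponent of $n$ produced by the empirical-process bound, and verify that \textbf{(E)}'s summability condition $\sum_n w_n^2\sqrt n<\infty$ is precisely strong enough to tame it — this is where the slightly unusual form of assumption \textbf{(E)} earns its keep, and where I would be most worried about hidden constants or an off-by-$\sqrt n$ in the rate.
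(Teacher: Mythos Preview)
Your overall architecture matches the paper's: track $\barg_n(\theta_n)$ via a quasi-martingale argument, use a weighted uniform law of large numbers to get $\barf_n\to f$ uniformly (and you correctly single out assumption \textbf{(E)} as the place where $\sum_n w_n^2\sqrt n<\infty$ is consumed), close the gap $\barg_n(\theta_n)-\barf_n(\theta_n)$, and finally transfer first-order optimality from $\barg_n$ to $\barf_n$. But several of the concrete mechanisms you propose do not work as written.

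\textbf{The recursion is set up backwards.} You write $\barg_{n-1}(\theta_n)\le \barg_{n-1}(\theta_{n-1})-\tfrac{\rho}{2}\|\theta_n-\theta_{n-1}\|_2^2$. Since $\theta_{n-1}$ \emph{minimizes} $\barg_{n-1}$, the inequality goes the other way: $\barg_{n-1}(\theta_n)\ge \barg_{n-1}(\theta_{n-1})+\tfrac{\rho}{2}\|\theta_n-\theta_{n-1}\|_2^2$. The paper's decomposition evaluates $\barg_n$ (not $\barg_{n-1}$) at the old point $\theta_{n-1}$: because $\theta_n$ minimizes $\barg_n$, one has $\barg_n(\theta_n)\le \barg_n(\theta_{n-1})=(1-w_n)\barg_{n-1}(\theta_{n-1})+w_n f_n(\theta_{n-1})$, and then inserts $\pm\,\barf_{n-1}(\theta_{n-1})$ so that the conditional expectation of the increment is bounded by $w_n\sup_\theta|f(\theta)-\barf_{n-1}(\theta)|\le C w_n^2\sqrt{n}$.

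\textbf{The gap $\barh_n(\theta_n)=\barg_n(\theta_n)-\barf_n(\theta_n)\to 0$ is the real missing idea.} Your proposed control via ``accumulated quadratic error terms $\sum_k(\prod_{j>k}(1-w_j))w_k\|\theta_j-\theta_{j-1}\|^2$'' does not bound $\barh_n(\theta_n)$: the surrogate error satisfies $h_k(\theta_n)\le \tfrac{L}{2}\|\theta_n-\theta_{k-1}\|_2^2$, with the \emph{non-consecutive} distance $\|\theta_n-\theta_{k-1}\|$, and for admissible weight sequences this sum does not vanish. The paper proceeds differently: from the quasi-martingale argument one gets not only a.s.\ convergence of $\barg_n(\theta_n)$ but also that $\sum_n w_n\,\barh_{n-1}(\theta_{n-1})<\infty$ a.s. One then invokes an elementary deterministic lemma---if $a_n\ge 0$, $b_n\ge 0$, $\sum a_n=\infty$, $\sum a_nb_n<\infty$ and $|b_{n+1}-b_n|\le Ka_n$, then $b_n\to 0$---with $a_n=w_n$ and $b_n=\barh_{n-1}(\theta_{n-1})$. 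The Lipschitz-in-$n$ bound $|b_{n+1}-b_n|=O(w_n)$ uses the stability estimate $\|\theta_n-\theta_{n-1}\|\le 2Rw_n/\rho$ together with uniform Lipschitzness of $\barh_n$. This lemma is the crux you are missing.

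\textbf{The stationarity transfer also needs a different reason.} You claim the error between $\nabla\barg_n$ and $\nabla\barf_n$ vanishes because $\|\theta_n-\theta_{n-1}\|\to 0$. But $\nabla\barh_n(\theta_n)=\sum_k w_n^k\nabla h_k(\theta_n)$ with $\nabla h_k(\theta_{k-1})=0$, so Lipschitzness only gives $\|\nabla h_k(\theta_n)\|\le L\|\theta_n-\theta_{k-1}\|$, again the non-consecutive distance. The paper instead uses that $\barh_n\ge 0$, $\barh_n(\theta_n)\to 0$, and $\nabla\barh_n$ is $L$-Lipschitz; applying the descent lemma at the point $\theta_n-\tfrac{1}{L}\nabla\barh_n(\theta_n)$ yields $\tfrac{1}{2L}\|\nabla\barh_n(\theta_n)\|_2^2\le \barh_n(\theta_n)\to 0$, which is exactly what is needed.
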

\proofatend

We generalize the proof of convergence for online matrix factorization of~\cite{mairal7}. The proof exploits Theorem~\ref{theo:martingales} about the convergence of
quasi-martingales~\citesup{fisk}, similarly as~\citet{bottou2} for proving the convergence of the stochastic gradient descent algorithm 
for non-convex functions. 

\proofstep{Almost sure convergence of $(\barg_n(\theta_n))_{n \geq 1}$}
The first step consists of applying a convergence theorem for the sequence $(\barg_n(\theta_n))_{n \geq 1}$ by bounding its positive expected variations. Define $Y_n \defin \barg_n(\theta_n)$. For $n\geq 2$, we have
   \begin{equation}
       \begin{split}
          Y_n \!-\! Y_{n-1} & = \barg_n(\theta_n) \!-\!  \barg_n(\theta_{n-1}) \!+ \! \barg_n(\theta_{n-1}) \!-\! \barg_{n-1}(\theta_{n-1}) \\
                                                        & = (\barg_n(\theta_n) \!-\!  \barg_n(\theta_{n-1})) \!+\! w_n( g_n(\theta_{n-1})\!-\! \barg_{n-1}(\theta_{n-1}))  \\
                                                        & = (\barg_n(\theta_n) \!-\!  \barg_n(\theta_{n-1})) \!+\! w_n(\barf_{n-1}(\theta_{n-1}) \!-\! \barg_{n-1}(\theta_{n-1}))  \!+\!  w_n ( g_n(\theta_{n-1})\!-\! \barf_{n-1}(\theta_{n-1})) \\
                                                        & = (\barg_n(\theta_n) \!-\!  \barg_n(\theta_{n-1})) \!+\! w_n(\barf_{n-1}(\theta_{n-1}) \!-\! \barg_{n-1}(\theta_{n-1}))  \!+\!  w_n ( f_n(\theta_{n-1})\!-\! \barf_{n-1}(\theta_{n-1})) \\
                                                        & \leq w_n ( f_n(\theta_{n-1})\!-\! \barf_{n-1}(\theta_{n-1})).
       \end{split} \label{eq:tmp}
   \end{equation}
   The final inequality comes from the inequality $\barg_n \geq \barf_n$,
   which is easy to show by induction starting from $n=1$ since $w_1=1$.  It follows,
   \begin{displaymath}
   \begin{split}
        \E[ \barg_n(\theta_n) - \barg_{n-1}(\theta_{n-1}) | \FF_{n-1}] & \leq w_n \E[ f_n(\theta_{n-1}) -  \barf_{n-1}(\theta_{n-1}) | \FF_{n-1}] \\
                                                                       & = w_n ( f(\theta_{n-1}) -   \barf_{n-1}(\theta_{n-1})) \\
                                                                       & \leq w_n \sup_{\theta \in \Theta} | f(\theta) -  \barf_{n-1}(\theta) |,
   \end{split}
   \end{displaymath}
   where $\FF_{n-1}$ is the filtration representing the past information before time $n$.
   Call  now
   \begin{displaymath}
      \delta_n \defin  \left\{ 
      \begin{array}{ll} 
            1 & ~~\text{if}~~            \E[ \barg_n(\theta_n) - \barg_{n-1}(\theta_{n-1}) | \FF_{n-1}] > 0 \\
            0 & ~~\text{otherwise.} 
      \end{array}
   \right.
   \end{displaymath}
   Then, the series below with non-negative summands converges:
   \begin{displaymath}
   \begin{split}
      \sum_{n=1}^\infty \E[ \delta_n (\barg_n(\theta_n) - \barg_{n-1}(\theta_{n-1}))] & = \sum_{n=1}^\infty \E[ \delta_n \E[ (\barg_n(\theta_n) - \barg_{n-1}(\theta_{n-1}))| \FF_{n-1}]] \\
         & \leq \sum_{n=1}^\infty \E\left[ w_n \sup_{\theta \in \Theta} | f(\theta) -  \barf_{n-1}(\theta) |\right] \\
         & \leq \sum_{n=1}^\infty  C w_n^2 \sqrt{n} < +\infty,
   \end{split}
   \end{displaymath}
   The second inequality comes from Lemma~\ref{lemma:uniform}.
   Since in addition $\barg_n$ is bounded below by some constant independent of
   $n$, we can apply Theorem~\ref{theo:martingales}.  This theorem tells us that
   $(\barg_n(\theta_n))_{n \geq 1}$ converges almost surely to an integrable
   random variable $g^\star$ and that $\sum_{n=1}^\infty \E[|\E[\barg_n(\theta_n) - \barg_{n-1}(\theta_{n-1})|\FF_{n-1}]|]$ converges almost surely.

\proofstep{Almost sure convergence of $(\barf_n(\theta_n))_{n \geq 1}$}
We will show by using Lemma~\ref{lemma:converg} that the non-positive term $\barf_n(\theta_{n})-
\barg_{n}(\theta_{n})$ almost surely converges to zero, and thus
$(\barf_n(\theta_n))_{n \geq 1}$ is also converging almost surely to $g^\star$.

We observe that 
$$\sum_{n=1}^\infty \E[|\E[\barg_n(\theta_n) - \barg_{n-1}(\theta_{n-1})|\FF_{n-1}]|] = \E\left[\sum_{n=1}^\infty |\E[\barg_n(\theta_n) - \barg_{n-1}(\theta_{n-1})|\FF_{n-1}]|\right] < +\infty.$$
Thus, the series $\sum_{n=1}^\infty |\E[\barg_n(\theta_n) -
\barg_{n-1}(\theta_{n-1})|\FF_{n-1}]|$ is absolutely convergent with
probability one, and the series $\sum_{n=1}^\infty \E[\barg_n(\theta_n) -
\barg_{n-1}(\theta_{n-1})|\FF_{n-1}]$ is also almost surely convergent.

We also remark that, using Lemma~\ref{lemma:uniform},  
\begin{displaymath}
  \E\left[ \sum_{n=1}^{+\infty} w_n|f(\theta_{n-1})- \barf_{n-1}(\theta_{n-1})| \right] \leq C \sum_{n=1}^{+\infty} w_n^2 \sqrt{n} < +\infty, 
\end{displaymath}
and thus $w_n(f(\theta_{n-1})- \barf_{n-1}(\theta_{n-1}))$ is the summand of an
absolutely convergent series with probability one. 

Taking the expectation of Eq.~(\ref{eq:tmp}) conditioned on~$\FF_{n-1}$, it remains that the
non-positive term $w_n (\barf_{n-1}(\theta_{n-1})- \barg_{n-1}(\theta_{n-1}))$ is also
necessarily the summand of an almost surely convergent series, since all other terms in the equation are summands of almost surely converging sums.
This is not sufficient to immediately conclude that $\barf_n(\theta_{n})-
\barg_{n}(\theta_{n})$ converges to zero almost surely, and thus we will use
Lemma~\ref{lemma:converg}. We have that $\sum_{n=1}^{+\infty} w_n$ diverges,
that $\sum_{n=1}^{+\infty} w_n (\barg_{n-1}(\theta_{n-1})- \barf_{n-1}(\theta_{n-1}))$ converges almost surely. Define
$X_n \defin (\barg_{n-1}(\theta_{n-1})- \barf_{n-1}(\theta_{n-1}))$. By definition of the surrogate functions, the differences
$h_n \defin g_n-f_n$ are differentiable and their gradients are $L$-Lipschitz continuous.
Since in addition $\Theta$ is compact and $\nabla h_n(\theta_{n-1})=0$, $\nabla
h_n$ is bounded by some constant $R'$ independent of $n$, and the function
$h_n$ is $R'$-Lipschitz.  This is therefore also the case for $\barh_n =
\barg_n-\barf_n$.
 \begin{displaymath}
    \begin{split}
       |X_{n+1}-X_n|  & = |\barh_n(\theta_n)- \barh_{n-1}(\theta_{n-1})| \\
                      & \leq |\barh_n(\theta_n)-\barh_n(\theta_{n-1})| + |\barh_n(\theta_{n-1}) - \barh_{n-1}(\theta_{n-1}) | \\
                      & \leq R'\|\theta_n-\theta_{n-1}\|_2 + |\barh_n(\theta_{n-1}) - \barh_{n-1}(\theta_{n-1}) | \\
                      & \leq \frac{2R R'}{\rho} w_n + w_n|  h_n(\theta_{n-1}) - \barh_{n-1}(\theta_{n-1})| \\
                      & = \frac{2R R'}{\rho} w_n + w_n|\barh_{n-1}(\theta_{n-1})| \\
                      & \leq O(w_n).
    \end{split}
 \end{displaymath} 
 The second inequality uses the fact that $\barh_n$ is $R'$-Lipschitz; The
 second inequality uses Lemma~\ref{lemma:stability_simple}; the last equality
 uses the fact that the functions $h_n$ are also bounded by
 some constant independent of $n$ (using the fact that $\nabla h_n$ is
 uniformly bounded).  We can now apply Lemma~\ref{lemma:converg}, and $X_n$
 converges to zero with probability one.  Thus, $(\barf_n(\theta_n))_{n \geq
 1}$ converges almost surely to $g^\star$.

\proofstep{Almost sure convergence of $(f(\theta_n))_{n \geq 1}$}
Since $(\barf_n(\theta_n))_{n \geq 1}$ converges almost surely, we simply use
Lemma~\ref{lemma:converg2}, which tells us that $\barf_n$ converges uniformly to $f$. Then, $(f(\theta_n))_{n \geq 1}$ converges almost
surely to $g^\star$.

\proofstep{Asymptotic Stationary Point Condition}
Let us call $\barh_n \defin \barg_n-\barf_n$, which can be shown to be
differentiable with a $L$-Lipschitz gradient by definition of the surrogate $g_n$.  For all $\theta$ in $\Theta$,
\begin{displaymath}
    \nabla \barf_n(\theta_n,\theta-\theta_n) = \nabla
    \barg_n(\theta_n,\theta-\theta_n) - \nabla
    \barh_n(\theta_n)^\top(\theta-\theta_n).
\end{displaymath}
Since $\theta_n$ is the minimizer of $\barg_n$, we have $\nabla
\barg_n(\theta_n,\theta-\theta_n) \geq 0$.
   
Since $\barh_n$ is differentiable and its gradient is $L$-Lipschitz continuous, we can apply
Lemma~\ref{lemma:upperlipschitz} to $\theta=\theta_n$ and
$\theta'=\theta_n-\frac{1}{L}\nabla \barh_n(\theta_n)$, which gives
$\barh_n(\theta') \leq \barh_n(\theta_n)-\frac{1}{2L}\|\nabla
\barh_n(\theta_n)\|_2^2$.
Since we have shown that $\barh_n(\theta_n) = \barg_n(\theta_n)-\barf(\theta_n)$ converges to zero and $\barh_n(\theta') \geq 0$, 
we have that $\|\nabla \barh_n(\theta_n)\|_2$ converges to zero.
Thus,
\begin{displaymath}
   \inf_{\theta \in \Theta} \frac{\nabla \barf_n(\theta_n,\theta-\theta_n)}{ \|\theta-\theta_n\|_2  } \geq - \|\nabla \barh_n(\theta_n)\|_2  \ton 0 \as
\end{displaymath}

\endproofatend
Even though $\barf_n$ converges uniformly to the expected cost $f$,
Proposition~\ref{prop:nonconvex} does not imply that the limit
points of $(\theta_n)_{n \geq 1}$ are stationary points of $f$. 
We obtain such a guarantee when the surrogates that are parameterized, an
assumption always satisfied when Algorithm~\ref{alg:stochastic} is used in
practice.
\begin{proposition}[\bfseries Non-Convex Analysis - Parameterized Surrogates]~\label{prop:nonconvex2}\newline
   Let us make the same assumptions as in Proposition~\ref{prop:nonconvex}, and let us
   assume that the functions $\barg_n$ are parameterized by
   some variables $\kappa_n$ living in a compact set $\KK$ of $\Real^d$. In other words,
   $\barg_n$ can be written as $g_{\kappa_n}$, with $\kappa_n$ in $\KK$.
   Suppose there exists a constant $K > 0$ such that $|g_{\kappa}(\theta) -
   g_{\kappa'}(\theta)| \leq K \| \kappa-\kappa'\|_2$ for all $\theta$ in $\Theta$
   and $\kappa,\kappa'$ in $\KK$.
   Then, every limit point $\theta_{\infty}$ of the sequence
   $(\theta_n)_{n\geq 1}$ is a stationary point of $f$---that is, for all $\theta$ in $\Theta$,
\begin{displaymath}
\nabla f(\theta_{\infty}, \theta-\theta_{\infty}) \geq 0.
\end{displaymath}
\end{proposition}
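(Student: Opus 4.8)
The statement is deterministic in appearance but is to be read as holding on the probability-one event carved out in Proposition~\ref{prop:nonconvex}; the plan is to fix a realization in that event, fix an arbitrary limit point $\theta_\infty$ of $(\theta_n)_{n\geq 1}$ with $\theta_{n_k}\to\theta_\infty$ along some subsequence, and manufacture out of the parameterization a bona fide first-order surrogate of $f$ at $\theta_\infty$. First I would use compactness of $\KK$ to pass to a further subsequence (not relabelled) along which $\kappa_{n_k}\to\kappa_\infty\in\KK$; the Lipschitz-in-parameter bound then gives $\sup_{\theta\in\Theta}|\barg_{n_k}(\theta)-g_{\kappa_\infty}(\theta)|\leq K\|\kappa_{n_k}-\kappa_\infty\|_2\to 0$, so $\barg_{n_k}$ converges uniformly on $\Theta$ to $\barg_\infty\defin g_{\kappa_\infty}$. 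Since Proposition~\ref{prop:nonconvex} also gives $\barf_n\to f$ uniformly on $\Theta$, the surrogate errors $\barh_{n_k}\defin\barg_{n_k}-\barf_{n_k}$ converge uniformly on $\Theta$ to $\barh_\infty\defin\barg_\infty-f$.

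The core step is to show that $\barg_\infty$ majorizes $f$ on $\Theta$, touches it at $\theta_\infty$, and has a smooth, $L$-Lipschitz-gradient error $\barh_\infty$ with $\nabla\barh_\infty(\theta_\infty)=0$. Majorization passes to the limit because $\barg_{n_k}\geq\barf_{n_k}$ for every $k$ (shown in the proof of Proposition~\ref{prop:nonconvex}); equality at $\theta_\infty$ follows from $\barh_{n_k}(\theta_{n_k})=\barg_{n_k}(\theta_{n_k})-\barf_{n_k}(\theta_{n_k})\to 0$ together with the uniform convergence $\barh_{n_k}\to\barh_\infty$ and $\theta_{n_k}\to\theta_\infty$. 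Differentiability of the limit needs a little care: each $\barh_{n_k}$ is differentiable, its gradient is $L$-Lipschitz, and (because $\nabla h_{n}(\theta_{n-1})=0$ and $\Theta$ is compact by~\textbf{(C)}) uniformly bounded, so by Arzel\`a--Ascoli I can pass to a sub-subsequence along which $\nabla\barh_{n_k}$ converges uniformly on $\Theta$; a uniform limit of functions together with a uniform limit of their gradients forces $\barh_\infty$ to be differentiable with gradient equal to that limit, which is again $L$-Lipschitz. Evaluating along this sub-subsequence, $\nabla\barh_\infty(\theta_\infty)=\lim_k\nabla\barh_{n_k}(\theta_{n_k})=0$ by the fact, established in Proposition~\ref{prop:nonconvex}, that $\|\nabla\barh_n(\theta_n)\|_2\to 0$. (Equivalently, once $\barh_\infty$ is known to be nonnegative with $L$-Lipschitz gradient and to vanish at $\theta_\infty$, applying Lemma~\ref{lemma:upperlipschitz} at $\theta_\infty$ and $\theta_\infty-\frac{1}{L}\nabla\barh_\infty(\theta_\infty)$ yields $\nabla\barh_\infty(\theta_\infty)=0$.)

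Next I would check that $\theta_\infty$ minimizes $\barg_\infty$ over $\Theta$: for any $\theta\in\Theta$, $\barg_\infty(\theta)=\lim_k\barg_{n_k}(\theta)\geq\lim_k\barg_{n_k}(\theta_{n_k})=\barg_\infty(\theta_\infty)$, since $\theta_{n_k}\in\argmin_{\Theta}\barg_{n_k}$. Because $\Theta$ is convex, first-order optimality gives $\nabla\barg_\infty(\theta_\infty,\theta-\theta_\infty)\geq 0$ for every $\theta\in\Theta$; here the directional derivative exists since $\barg_\infty=f+\barh_\infty$, with $f$ admitting directional derivatives by~\textbf{(F)} and $\barh_\infty$ differentiable. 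Splitting $\nabla\barg_\infty(\theta_\infty,\theta-\theta_\infty)=\nabla f(\theta_\infty,\theta-\theta_\infty)+\nabla\barh_\infty(\theta_\infty)^\top(\theta-\theta_\infty)=\nabla f(\theta_\infty,\theta-\theta_\infty)$ then gives $\nabla f(\theta_\infty,\theta-\theta_\infty)\geq 0$ for all $\theta\in\Theta$, which is the claim.

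The main obstacle is the second paragraph: transporting the vanishing-gradient information of Proposition~\ref{prop:nonconvex} to the limiting surrogate, i.e. making rigorous that $\barh_\infty$ inherits differentiability and an $L$-Lipschitz gradient from the $\barh_{n_k}$ and that its gradient at $\theta_\infty$ equals $\lim_k\nabla\barh_{n_k}(\theta_{n_k})$. Without it one is left only with $\nabla f(\theta_\infty,\theta-\theta_\infty)\leq\nabla\barg_\infty(\theta_\infty,\theta-\theta_\infty)$, which is not enough to conclude; with it, the inequality becomes an equality up to the vanishing term. The rest is routine bookkeeping with uniform convergence, compactness, and the definition of first-order surrogates.
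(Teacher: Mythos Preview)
Your proof is correct and follows the same route as the paper: pass to a subsequence along which $\kappa_{n_k}\to\kappa_\infty$ by compactness of $\KK$, use the Lipschitz-in-parameter assumption and Lemma~\ref{lemma:uniform} to get $\barg_{n_k}\to\barg_\infty$ and $\barf_{n_k}\to f$ uniformly, show $\theta_\infty\in\argmin_\Theta\barg_\infty$ by passing $\barg_{n_k}(\theta)\geq\barg_{n_k}(\theta_{n_k})$ to the limit, and conclude via the splitting $\nabla f(\theta_\infty,\theta-\theta_\infty)=\nabla\barg_\infty(\theta_\infty,\theta-\theta_\infty)-\nabla\barh_\infty(\theta_\infty)^\top(\theta-\theta_\infty)$ with $\nabla\barh_\infty(\theta_\infty)=0$.

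The only technical difference is in how you establish differentiability of $\barh_\infty$ at $\theta_\infty$. The paper avoids Arzel\`a--Ascoli entirely: it writes the Taylor expansion $\barh_{n_k}(\theta_{n_k}+\z)=\barh_{n_k}(\theta_{n_k})+\nabla\barh_{n_k}(\theta_{n_k})^\top\z+O(\|\z\|_2^2)$ with the $O$-constant independent of $k$ (from the uniform $L$-Lipschitz bound on the gradients), and sends $k\to\infty$ using uniform convergence of $\barh_{n_k}$ and $\|\nabla\barh_{n_k}(\theta_{n_k})\|_2\to 0$ to obtain $\barh_\infty(\theta_\infty+\z)=\barh_\infty(\theta_\infty)+O(\|\z\|_2^2)$, which gives differentiability of $\barh_\infty$ only at the single point $\theta_\infty$ with gradient zero. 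Your Arzel\`a--Ascoli argument buys a globally $C^1$ limit with $L$-Lipschitz gradient at the cost of one more subsequence extraction; the paper's argument is leaner but yields only what is strictly needed.
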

\proofatend
Since $\Theta$ is compact according to assumption~{\bfseries (C)}, the sequence
$(\theta_n)_{n\geq 1}$ admits limit points.  Let us consider a converging
subsequence $(n_k)_{k \geq 1}$ to a limit point $\theta_{\infty}$ in $\Theta$.
In this converging subsequence, we can also find a subsequence $(n_{k'})_{k'
\geq 1}$ such that $\kappa_{n_{k'}}$ converges to a point $\kappa_{\infty}$ in
$\KK$ (which is compact).  For the sake of simplicity, and without loss of
generality, we remove the indices $k$ and $k'$ from the notation and assume
that $\theta_n$ converges to $\theta_{\infty}$, while $\kappa_n$ converges to
$\kappa_{\infty}$. It is then easy to see that the functions $\barg_n$ converge
uniformly to $\barg_{\infty} \defin g_{\kappa_\infty}$, given the assumptions made
in the proposition.

Defining $\barh_{\infty} \defin \barg_{\infty} - f$, we have for all $\theta$ in $\Theta$:
\begin{displaymath}
    \nabla f(\theta_\infty,\theta-\theta_\infty) = \nabla
    \barg_\infty(\theta_\infty,\theta-\theta_\infty) - \nabla
    \barh_\infty(\theta_\infty,\theta-\theta_\infty).
\end{displaymath}
To prove the proposition, we will first show that $\nabla \barg_\infty(\theta_\infty,\theta-\theta_\infty) \geq 0$ and then that 
$\nabla \barh_\infty(\theta_\infty,\theta-\theta_\infty)=0$.

\proofstep{Proof of $\nabla \barg_\infty (\theta_\infty,\theta-\theta_\infty)\geq 0$}
It is sufficient to show that $\theta_{\infty}$ is a minimizer of $\barg_{\infty}$. This is 
straightforward, by taking the limit when $n$ goes to infinity of
\begin{displaymath}
   \barg_n(\theta) \geq \barg_n(\theta_n),
\end{displaymath}
where we use the uniform convergence of $\barg_n$.

\proofstep{Proof of $\nabla \barh_\infty(\theta_\infty,\theta-\theta_\infty)=0$}
Since both $\barf_n$ and $\barg_n$ converges uniformly (according to
Lemma~\ref{lemma:uniform} for $\barf_n$), we have that $\barh_n$ converges
uniformly to $\barh_{\infty}$. Since $\barh_n$ is differentiable with a
$L$-Lipschitz gradient, we have for all vector $\z$ in $\Real^p$,
\begin{displaymath}
   \barh_{n}(\theta_n + \z) = \barh_n(\theta_n) + \nabla \barh_n(\theta_n)^\top \z + O( \|\z\|_2^2),
\end{displaymath} 
where the constant in $O$ is independent of $n$.
By taking the limit when $n$ goes to infinity, it remains
\begin{displaymath}
   \barh_{\infty}(\theta_\infty + \z) = \barh_\infty(\theta_\infty)  + O( \|\z\|_2^2),
\end{displaymath} 
since we have shown in the proof of Proposition~\ref{prop:nonconvex} that $\|\nabla
\barh_n(\theta_n)\|_2$ converges to zero.
Since~$\barh_\infty$ admits a first order extension around $\theta_\infty$ it is differentiable
at this point and furthermore, $\nabla \barh_\infty(\theta_\infty) = 0$.
This is sufficient to conclude.
\endproofatend

Finally, we show that our non-convex convergence analysis can be extended
beyond first-order surrogate functions---that is, when $g_n$ does not satisfy
exactly Definition~\ref{def:surrogate_batch}. This is possible when the
objective has a particular partially separable structure, as shown in the next
proposition.  This extension was motivated by the non-convex sparse estimation
formulation of Section~\ref{sec:exp}, where such a structure appears.
\begin{proposition}[\bfseries Non-Convex Analysis - Partially Separable Extension]~\label{prop:nonconvex3}\newline
   Assume that the functions $f_n$ split into $f_n(\theta) \!=\!
   f_{0,n}(\theta) \!+\! \sum_{k=1}^K f_{k,n}(\gamma_k(\theta))$, where the
   functions $\gamma_k\!:\! \Real^p \! \to \!\Real$ are convex and
   $R$-Lipschitz, and the $f_{k,n}$ are non-decreasing for $k \geq 1$.  Consider
   $g_{n,0}$ in $\S_{L_0,\rho_1}(f_{0,n},\theta_{n-1})$, and some non-decreasing functions $g_{k,n}$ 
   in $\S_{L_k,0}(f_{k,n},\gamma_k(\theta_{n-1}))$.
   Instead of choosing $g_n$ in $\S_{L,\rho}(f_n,\theta_{n-1})$ in Alg~\ref{alg:stochastic}, replace it by $g_n\! \defin\! \theta\! \mapsto \!g_{0,n}(\theta) \!+\! g_{k,n}(\gamma_k(\theta))$.

   Then, Propositions~\ref{prop:nonconvex} and~\ref{prop:nonconvex2} still hold.
\end{proposition}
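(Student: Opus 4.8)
The plan is to re-run the proofs of Propositions~\ref{prop:nonconvex} and~\ref{prop:nonconvex2}, checking that every step only used structural properties of $g_n$ and $\barg_n$ that the composite construction still enjoys, the sole delicate point being that $\barg_n$ and the error $\barh_n=\barg_n-\barf_n$ are no longer differentiable once composed with the $\gamma_k$. Write $g_n = g_{0,n} + \sum_{k=1}^K g_{k,n}\circ\gamma_k$. Three facts are immediate: (i) $g_n\ge f_n$ pointwise, since $g_{0,n}\ge f_{0,n}$ and $g_{k,n}\ge f_{k,n}$ applied at $\gamma_k(\theta)$; (ii) $g_n(\theta_{n-1})=f_n(\theta_{n-1})$, since equality holds for each summand at $\theta_{n-1}$, resp.\ at $\gamma_k(\theta_{n-1})$; (iii) $g_n$ is $\rho_1$-strongly convex, because $g_{0,n}$ is and each $g_{k,n}\circ\gamma_k$ is convex (a non-decreasing convex function composed with a convex function). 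Since $w_1=1$, an induction gives that each $\barg_n$ is a convex combination of $g_1,\dots,g_n$, hence $\barg_n\ge\barf_n$ and $\barg_n$ is $\rho_1$-strongly convex; and, the recursions being linear, $\barg_n=\barg_{0,n}+\sum_k\barg_{k,n}\circ\gamma_k$ and $\barf_n=\barf_{0,n}+\sum_k\barf_{k,n}\circ\gamma_k$, with $\barg_{k,n},\barf_{k,n}$ the corresponding weighted averages of $g_{k,j},f_{k,j}$. Setting $\barh_{0,n}\defin\barg_{0,n}-\barf_{0,n}$ and $\bar\varphi_{k,n}\defin\barg_{k,n}-\barf_{k,n}$, each of these is non-negative (on $\Real^p$, resp.\ $\Real$), differentiable, with an $L_0$- (resp.\ $L_k$-) Lipschitz gradient; and because $\gamma_k(\Theta)$ is a compact interval on which the gradient of each $g_{k,j}-f_{k,j}$ vanishes at $\gamma_k(\theta_{j-1})$, each $\bar\varphi_{k,n}$ is bounded and Lipschitz on $\gamma_k(\Theta)$, uniformly in $n$. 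Hence $\barh_n=\barh_{0,n}+\sum_k\bar\varphi_{k,n}\circ\gamma_k$ is bounded and $R'$-Lipschitz on $\Theta$, uniformly in $n$.

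With these facts, the proof of Proposition~\ref{prop:nonconvex} goes through essentially unchanged: the quasi-martingale bound on $Y_n=\barg_n(\theta_n)$ used only (i), (ii), $\barg_n\ge\barf_n$, the identity $\E[f_n(\theta_{n-1})\mid\FF_{n-1}]=f(\theta_{n-1})$, and Lemma~\ref{lemma:uniform} on $\barf_n$; the stability estimate $\|\theta_n-\theta_{n-1}\|_2=O(w_n)$ still follows, as in the original argument, from the uniform strong convexity of $\barg_n$ together with the fact that $g_n$ and $\barg_{n-1}$ are uniformly Lipschitz on $\Theta$ (each being the sum of an $R$-Lipschitz weighted empirical risk and a uniformly Lipschitz error term); and the bound $|X_{n+1}-X_n|=O(w_n)$ for $X_n=\barh_{n-1}(\theta_{n-1})$ again holds, using the uniform Lipschitz/boundedness of $\barh_n$, the estimate $\|\theta_n-\theta_{n-1}\|_2=O(w_n)$, and $g_n(\theta_{n-1})-f_n(\theta_{n-1})=0$. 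Lemma~\ref{lemma:converg} then yields $\barh_n(\theta_n)\to 0$ a.s., so $\barg_n(\theta_n)$, $\barf_n(\theta_n)$ and $f(\theta_n)$ converge a.s.\ to a common limit. For the stationarity assertion the new ingredient is the decomposition: $\barh_n(\theta_n)=\barh_{0,n}(\theta_n)+\sum_k\bar\varphi_{k,n}(\gamma_k(\theta_n))$ is a finite sum of non-negative terms tending to $0$, so each term tends to $0$; applying Lemma~\ref{lemma:upperlipschitz} to $\barh_{0,n}$ on $\Theta$ and to each $\bar\varphi_{k,n}$ on $\Real$ (where it is non-negative) shows $\|\nabla\barh_{0,n}(\theta_n)\|_2\to 0$ and $|\bar\varphi'_{k,n}(\gamma_k(\theta_n))|\to 0$. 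By additivity and the chain rule for directional derivatives,
\begin{displaymath}
  \nabla\barf_n(\theta_n,\theta-\theta_n) = \nabla\barg_n(\theta_n,\theta-\theta_n) - \nabla\barh_{0,n}(\theta_n)^\top(\theta-\theta_n) - \sum_{k=1}^K \bar\varphi'_{k,n}(\gamma_k(\theta_n))\,\nabla\gamma_k(\theta_n,\theta-\theta_n);
\end{displaymath}
the first term is $\ge 0$ since $\theta_n$ minimizes the convex $\barg_n$ over $\Theta$, while $|\nabla\gamma_k(\theta_n,\theta-\theta_n)|\le R\|\theta-\theta_n\|_2$ bounds the remaining terms by $o(1)\,\|\theta-\theta_n\|_2$, so dividing by $\|\theta-\theta_n\|_2$, taking the infimum over $\theta$ and the $\liminf$ over $n$ gives the claimed inequality.

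For Proposition~\ref{prop:nonconvex2} I would repeat the compactness/subsequence argument: along a subsequence on which $\theta_n\to\theta_\infty$ and the (compactly parameterized) pieces converge uniformly, $\barg_n\to\barg_\infty$, $\barh_{0,n}\to\barh_{0,\infty}$ and $\bar\varphi_{k,n}\to\bar\varphi_{k,\infty}$ uniformly; $\theta_\infty$ minimizes $\barg_\infty$, so $\nabla\barg_\infty(\theta_\infty,\theta-\theta_\infty)\ge 0$. Passing to the limit in the first-order Taylor expansions of $\barh_{0,n}$ around $\theta_n$ and of $\bar\varphi_{k,n}$ around $\gamma_k(\theta_n)$ — whose gradients tend to $0$ by the previous paragraph — shows $\barh_{0,\infty}$ is differentiable at $\theta_\infty$ with zero gradient and $\bar\varphi_{k,\infty}$ is differentiable at $\gamma_k(\theta_\infty)$ with zero derivative; the chain rule then gives $\nabla(\barg_\infty-f)(\theta_\infty,\theta-\theta_\infty)=0$, hence $\nabla f(\theta_\infty,\theta-\theta_\infty)=\nabla\barg_\infty(\theta_\infty,\theta-\theta_\infty)\ge 0$.

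The main obstacle, and the part deserving the most care, is precisely this bookkeeping around non-smoothness: one must check that the decomposition $\barh_n=\barh_{0,n}+\sum_k\bar\varphi_{k,n}\circ\gamma_k$ has each piece non-negative with a uniformly Lipschitz gradient on the relevant compact set — so that the $O(w_n)$ control of $|X_{n+1}-X_n|$ and the strong convexity used for stability survive — and that the sum and chain rules for directional derivatives correctly replace the now-meaningless statement ``$\|\nabla\barh_n(\theta_n)\|_2\to 0$'' by the two componentwise limits above. Everything else in the proofs of Propositions~\ref{prop:nonconvex} and~\ref{prop:nonconvex2} is insensitive to the change.
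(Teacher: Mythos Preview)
Your proposal is correct and follows essentially the same route as the paper's proof: you verify the three structural properties $g_n\ge f_n$, $g_n(\theta_{n-1})=f_n(\theta_{n-1})$, and $\rho_1$-strong convexity of $g_n$; observe that the uniform Lipschitz/boundedness of $\barh_n$ survives so that the quasi-martingale and stability arguments of Proposition~\ref{prop:nonconvex} carry over verbatim; and then handle the stationarity step by decomposing $\barh_n(\theta_n)$ into the non-negative summands $\barh_{0,n}(\theta_n)$ and $\bar\varphi_{k,n}(\gamma_k(\theta_n))$, driving each gradient (resp.\ derivative) to zero via Lemma~\ref{lemma:upperlipschitz}. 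The only cosmetic difference is that you invoke the chain rule for directional derivatives directly to write $\nabla(\bar\varphi_{k,n}\circ\gamma_k)(\theta_n,\theta-\theta_n)=\bar\varphi'_{k,n}(\gamma_k(\theta_n))\,\nabla\gamma_k(\theta_n,\theta-\theta_n)$, whereas the paper derives the same bound through a first-order Taylor expansion of $\barh_{1,n}$ combined with the $R$-Lipschitz property of~$\gamma_1$; both arguments are equivalent here since $\bar\varphi_{k,n}$ is differentiable and $\gamma_k$ is convex and Lipschitz.
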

\proofatend
First we notice that 
\begin{itemize}
   \item $g_n \geq f_n$;
   \item $g_n(\theta_{n-1}) = f_n(\theta_{n-1})$;
   \item $g_n$ is $\rho_1$-strongly convex since $\theta \mapsto g_{k,n}(\gamma_k(\theta))$ can be shown to be convex, following elementary composition rules for convex functions (see \mycite{boyd}, Section 3.2.4).
\end{itemize}
Thus, the only property missing is the smoothness of the approximation error
$h_n \defin g_n - f_n$.  Rather than writing again a full proof, we now simply
review the different places where this property is used, and which
modifications should be made to the proofs of Propositions~\ref{prop:nonconvex}
and~\ref{prop:nonconvex2}.

In the second step of this proof, we require the functions $h_n$ to be
uniformly Lipschitz and uniformly bounded. It easy to check that it is still
the case with the assumptions we made in Proposition~\ref{prop:nonconvex3}.

The last step about the asymptotic point condition is however more problematic,
where we cannot show anymore that the quantity $\nabla \barh_n(\theta_n)$
converges to zero (since $\barh_n$ is not differentiable anymore).  Instead, we
need to show that the directional derivative $\frac{\nabla \barh_n(\theta_n,
\theta-\theta_n)}{\|\theta-\theta_n\|}$ uniformly converges to zero on
$\Theta$.

We will show the result for $K=1$; it will be easy to extend it to any arbitrary~$K > 2$.
We remark that
\begin{displaymath}
   \nabla \barh_n(\theta_n,\theta-\theta_n) =    \nabla \barh_{0,n}(\theta_n)^\top(\theta-\theta_n) + \lim_{t \to 0^+} \frac{ \barh_{1,n}(\gamma_1(\theta_{n}+t(\theta-\theta_n))) - \barh_{1,n}(\gamma_1(\theta_n))}{t}, 
\end{displaymath}
where $\barh_{0,n}$ and $\barh_{1,n}$ are defined similarly as $\barh_n$ for the
functions $h_{0,n}\defin g_{0,n}-f_{0,n}$ and $h_{1,n}\defin g_{1,n}-f_{1,n}$ respectively.  Since $\barh_n(\theta_n)$ is shown
to converge to zero, we have that the non-negative quantities
$\barh_{0,n}(\theta_n)$ and $\barh_{1,n}(\gamma_1(\theta_n))$ converge to zero as well. Since~$\barh_{0,n}$ and $\barh_{1,n}$ are differentiable and their gradients are Lipschitz, we use similar
arguments as in the proof of Proposition~\ref{prop:nonconvex}, and we have that $\nabla \barh_{0,n}(\theta_n)$ and $\barh_{1,n}'(\gamma_1(\theta_n))$ converge to zero (where $\barh_{1,n}'$ is the derivative of $\barh_{1,n}$.
Concerning the second term, we can make the following Taylor expansion for $\barh_{1,n}$:
\begin{displaymath}
   \barh_{1,n}( \gamma_1(\theta_n+\z)) = \barh_{1,n}(\gamma_1(\theta_n)) + \barh_{1,n}^{\prime}(\gamma_1(\theta_n))(\gamma_1(\theta_{n}+\z) - \gamma_1(\theta_n)) + O\left( ( \gamma_1(\theta_{n}+\z) - \gamma_1(\theta_n))^2\right),
\end{displaymath}
where the constant in the $O$ notation is independent of $\theta_n$ and $\z$ (since the derivative is $L_1$-Lipschitz).
Plugging $\z \defin t (\theta-\theta_n)$ in this last equation, and using the Lipschitz property of $\gamma_1$, we have 
\begin{displaymath}
   \lim_{t \to 0^+} \left|\frac{ \barh_{1,n}( \gamma_1(\theta_{n}+t(\theta-\theta_n))) - \barh_{1,n}(\gamma_1(\theta_n))}{t} \right| \leq |\barh_{1,n}^{\prime}(\gamma_1(\theta_n))| \|\theta-\theta_n\|.
\end{displaymath}
Since~$\barh_{1,n}^{\prime}(\gamma_1(\theta_n))$ converges to zero, we can conclude
the proof of the modified Proposition~\ref{prop:nonconvex}.

The proof of Proposition~\ref{prop:nonconvex2} can be modified with similar arguments.
\endproofatend

\section{Applications and Experimental Validation}\label{sec:exp}
In this section, we introduce different applications, and provide 
numerical experiments. A C++/Matlab implementation is available in the software package 
SPAMS~\cite{mairal7}.\footnote{\url{http://spams-devel.gforge.inria.fr/}.}  All experiments were performed on a
single core of a 2GHz Intel CPU with $64$GB of RAM.

\subsection{Stochastic Proximal Gradient Descent Algorithm}\label{sec:logistic}
Our first application is a stochastic proximal gradient descent method,
which we call SMM (Stochastic Majorization-Minimization), for solving problems of the
form:
\begin{equation}
   \min_{\theta \in \Theta} \E_{\x}[\ell(\x,\theta)] + \psi(\theta),\label{eq:prox}
\end{equation}
where $\psi$ is a convex deterministic regularization function, and the
functions $\theta \mapsto \ell(\x,\theta)$ are differentiable and their
gradients are $L$-Lipschitz continuous. We can thus use
the proximal gradient surrogate presented in
Section~\ref{sec:batch}. Assume that a weight sequence $(w_n)_{n \geq 1}$ is chosen such that $w_1\!=\!1$.
By defining some other weights~$w_n^i$ recursively as $w_n^i
\!\defin\!(1\!-\!w_n)w_n^{i-1}$ for $i \!<\!  n$ and $w_n^n\! \defin\! w_n$,
our scheme yields the update rule:
\begin{equation}
    \theta_{n} \leftarrow \argmin_{\theta \in \Theta} \sum_{i=1}^n w_n^i \left[ \textstyle \nabla f_i(\theta_{i-1})^\top \theta + \frac{L}{2} \|\theta-\theta_{i-1}\|_2^2 + \psi(\theta)   \right].\tag{SMM}
\end{equation}
Our algorithm is related to FOBOS~\cite{duchi2},
to SMIDAS~\cite{shwartz} or the truncated
gradient method~\cite{langford} (when $\psi$ is the $\ell_1$-norm).  These three algorithms use indeed the
following update rule:
\begin{equation}
   \theta_{n} \leftarrow \argmin_{\theta \in \Theta} \textstyle \nabla f_n(\theta_{n-1})^\top \theta + \frac{1}{2\eta_n} \|\theta-\theta_{n-1}\|_2^2 + \psi(\theta),\tag{FOBOS}
\end{equation}
Another related scheme is the regularized dual averaging (RDA) of~\cite{xiao},
which can be written as
\begin{equation}
   \theta_{n} \leftarrow \argmin_{\theta \in \Theta} \frac{1}{n}\sum_{i=1}^n 
   \textstyle \nabla f_i(\theta_{i-1})^\top \theta   + \frac{1}{2\eta_n}
 \|\theta\|_2^2 + \psi(\theta).\tag{RDA}
\end{equation}
Compared to these approaches, our scheme includes a weighted average of previously seen gradients,
and a weighted average of the past iterates. Some links can also be drawn with approaches such as the
``approximate follow the leader'' algorithm of~\cite{hazan3} and other works~\cite{hu,lan}.

We now evaluate the performance of our method for $\ell_1$-logistic regression.
In summary, the datasets consist of pairs~$(y_i,\x_i)_{i=1}^N$, where the $y_i$'s are
in~$\{-1,+1\}$, and the $\x_i$'s are in $\Real^p$ with unit
$\ell_2$-norm. The function~$\psi$ in~(\ref{eq:prox}) is the $\ell_1$-norm:
$\psi(\theta) \defin \lambda \|\theta\|_1$, and $\lambda$ is a regularization
parameter; the functions $f_i$ are logistic losses: $f_i(\theta)\defin
\log(1+e^{-y_i \x_i^\top \theta})$. One part of each dataset is devoted to
training, and another part to testing. We used weights of the form $w_n
\!\defin\! \sqrt{{(n_0+1)}/(n+n_0)}$, where~$n_0$ is automatically adjusted at
the beginning of each experiment by performing one pass on $5\%$ of the training data.  We
implemented SMM in C++ and exploited the sparseness of the datasets,
such that each update has a computational complexity of the order
$O(s)$, where~$s$ is the number of non zeros in $\nabla f_n(\theta_{n-1})$;
such an implementation is non trivial but proved to be very efficient.

We consider three datasets described in the table below.
\textsf{rcv1} and \textsf{webspam} are obtained from the 2008 Pascal
large-scale learning
challenge.\footnote{\url{http://largescale.ml.tu-berlin.de}.} \textsf{kdd2010}
is available from the LIBSVM
website.\footnote{\url{http://www.csie.ntu.edu.tw/~cjlin/libsvm/}.} 
\begin{center}
\begin{tabular}{|l|c|c|c|c|c|}
\hline
name & $N_{\text{tr}}$ (train) & $N_{\text{te}}$ (test) & $p$ & density ($\%$) & size (GB) \\ 
\hline
\textsf{rcv1} & $781\,265$ & $23\,149$ & $47\,152$ & $0.161$ & $0.95$\\
\hline
\textsf{webspam} & $250\,000$ & $100\,000$ & $16\,091\,143$ & $0.023$ & $14.95$ \\
\hline
\textsf{kdd2010} & $10\,000\,000$ & $9\,264\,097$ & $28\,875\,157$ & $10^{-4}$ & $4.8$ \\
\hline
\end{tabular}
\end{center}

We compare our implementation with state-of-the-art publicly available solvers:
the batch algorithm FISTA of~\cite{beck} implemented in the C++ SPAMS
toolbox and LIBLINEAR v1.93~\cite{fan2}. LIBLINEAR is based on
a working-set algorithm, and, to the best of our knowledge, is one of the
most efficient available solver for $\ell_1$-logistic regression with sparse
datasets. Because~$p$ is large, the incremental majorization-minimization method of~\cite{mairal17} could
not run for memory reasons.
We run every method on $1,2,3,4,5,10$ and $25$ epochs (passes over the training set),
for three regularization regimes, respectively yielding a solution with
approximately $100, 1\,000$ and $10\,000$ non-zero coefficients. 
We report results for the medium regularization in
Figure~\ref{fig:exp_l1} and provide the rest as supplemental material. FISTA is not represented in this figure since it required
more than $25$ epochs to achieve reasonable values.
Our conclusion is that \emph{SMM often provides a reasonable
solution after one epoch, and outperforms LIBLINEAR in the low-precision
regime. For high-precision regimes, LIBLINEAR should be preferred.}
Such a conclusion is often obtained when comparing batch and stochastic algorithms~\cite{bottou},
but matching the performance of LIBLINEAR is very challenging.
\begin{figure}[hbtp]
   \centering
   \includegraphics[width=0.3\linewidth]{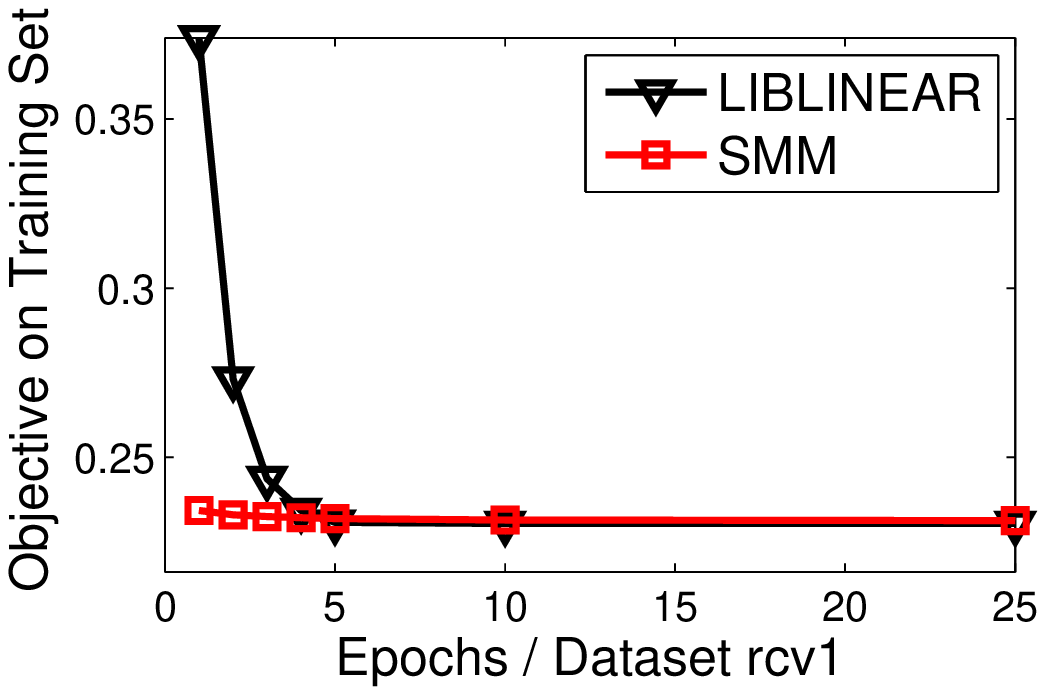}
   \includegraphics[width=0.3\linewidth]{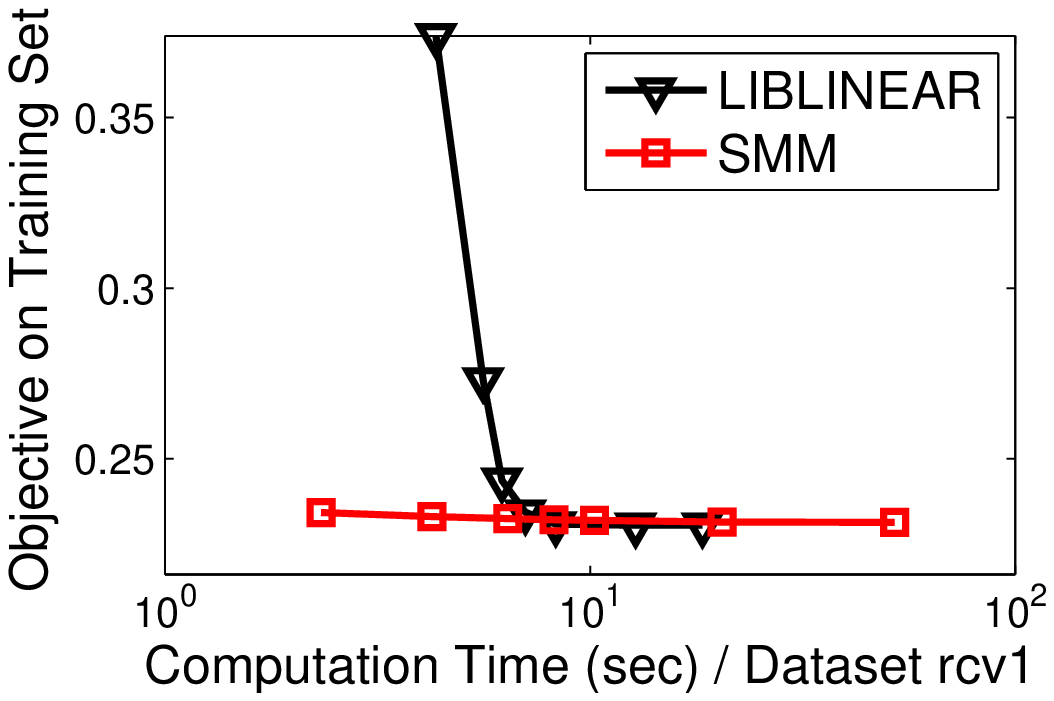}
   \includegraphics[width=0.3\linewidth]{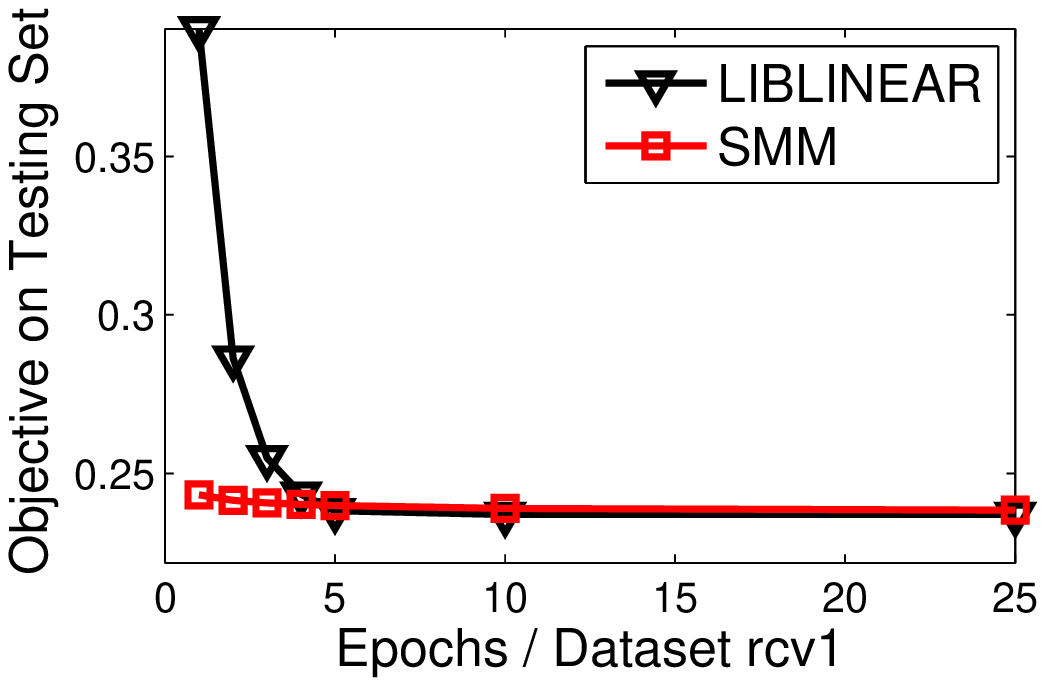}\\
   \includegraphics[width=0.3\linewidth]{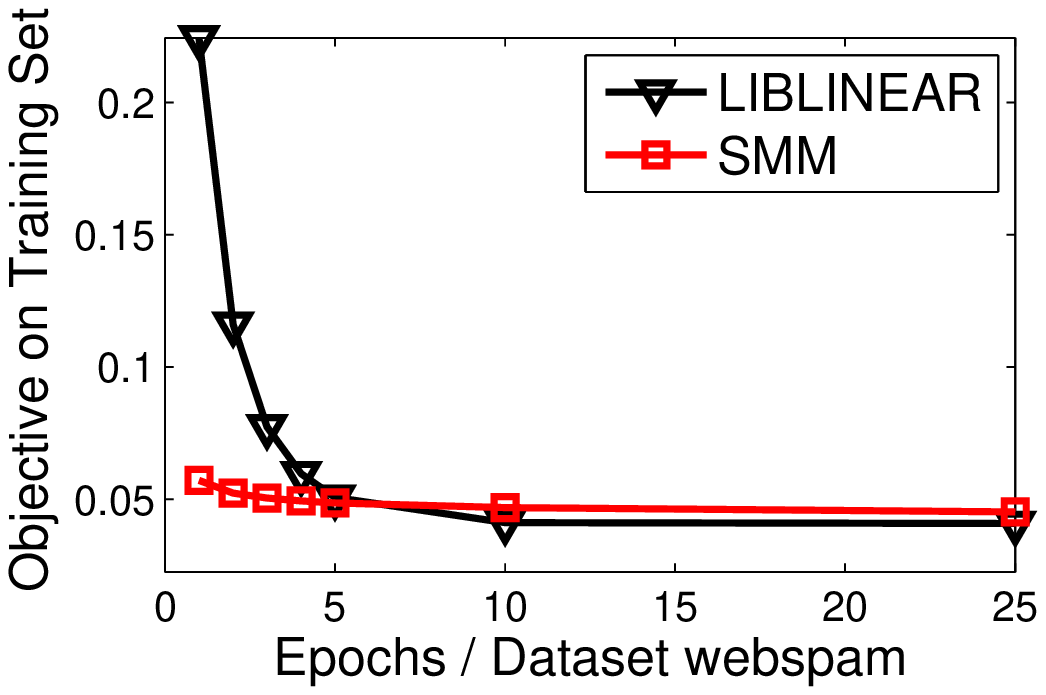}
   \includegraphics[width=0.3\linewidth]{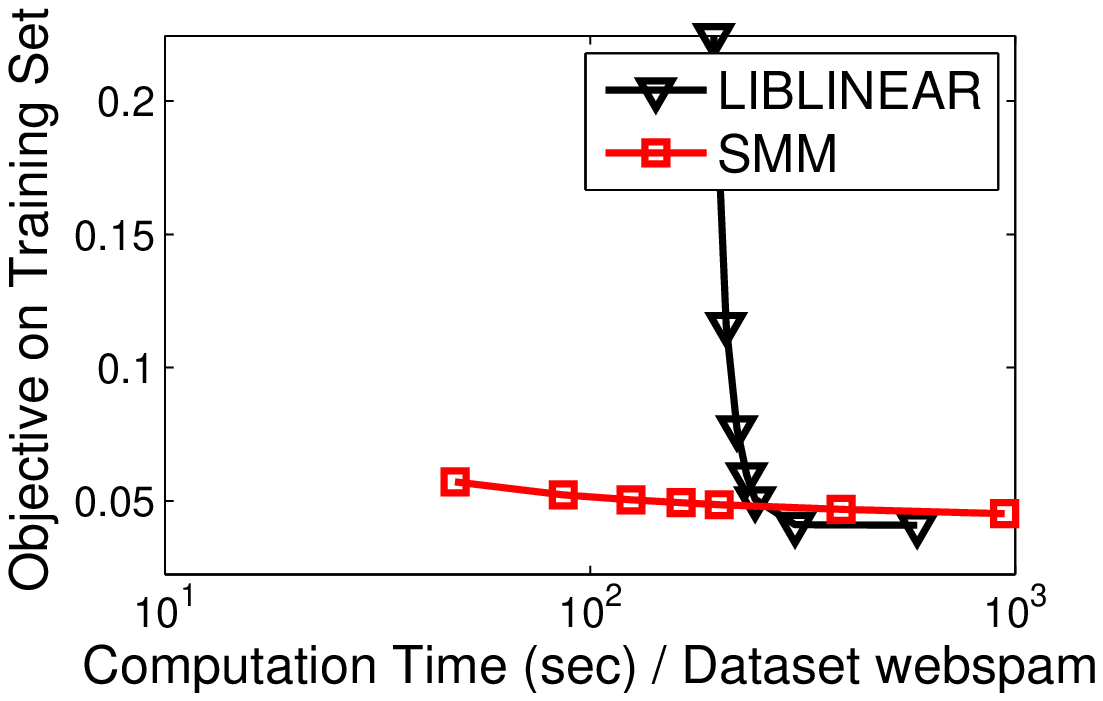}
   \includegraphics[width=0.3\linewidth]{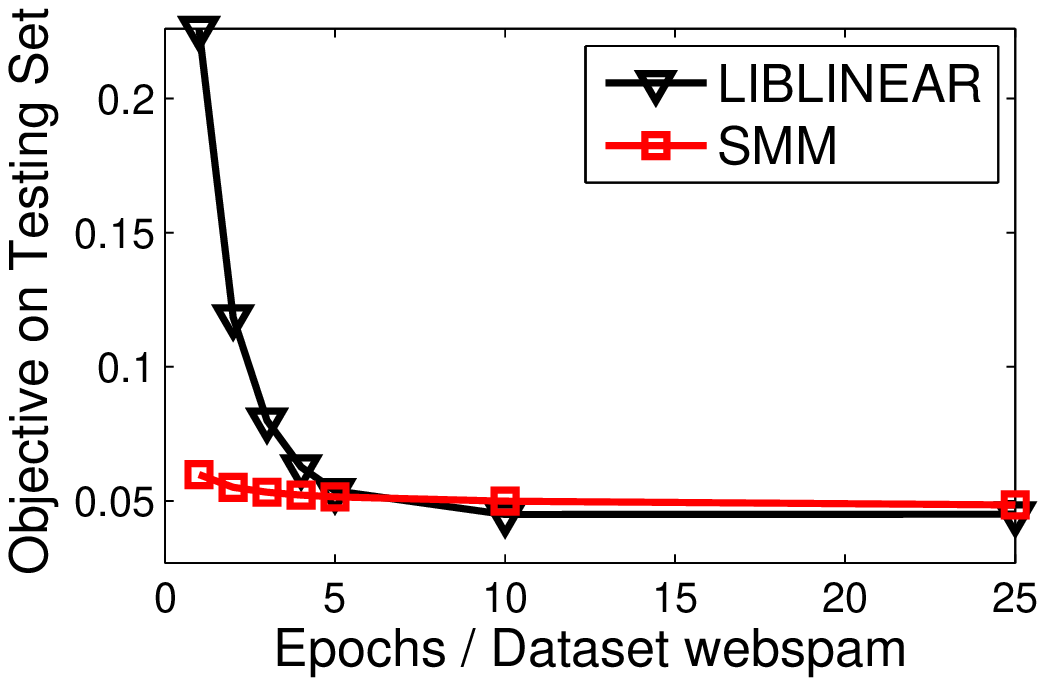}\\
   \includegraphics[width=0.3\linewidth]{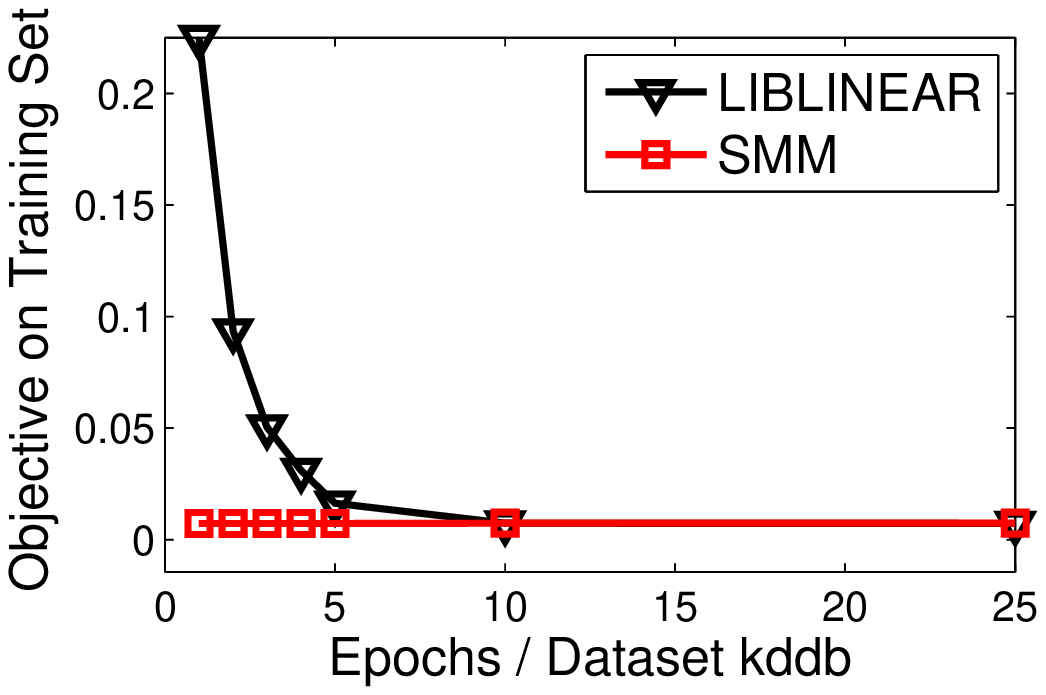}
   \includegraphics[width=0.3\linewidth]{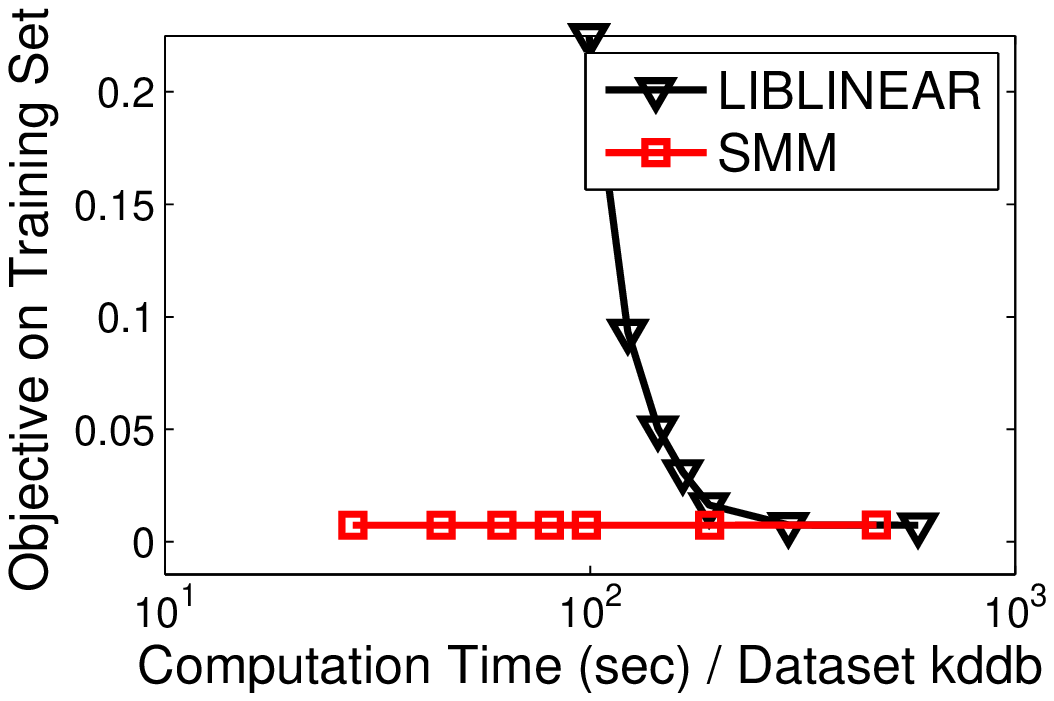}
   \includegraphics[width=0.3\linewidth]{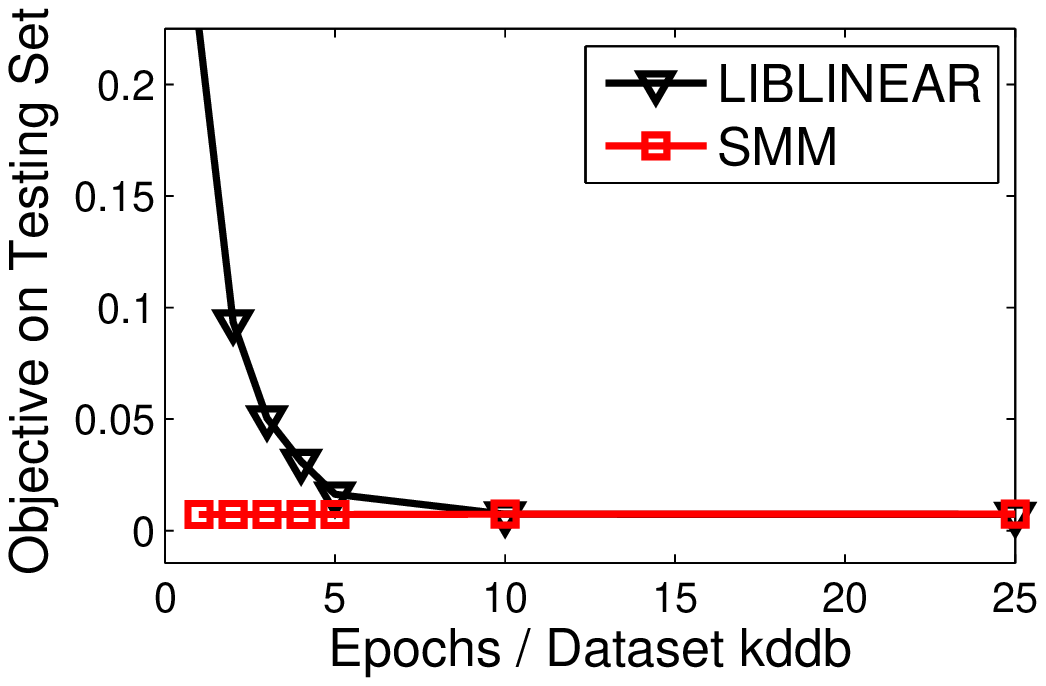}
   \vspace*{-0.2cm}
   \caption{Comparison between LIBLINEAR and SMM for the medium regularization regime. }
   \label{fig:exp_l1}
\end{figure}

\subsection{Online DC Programming for Non-Convex Sparse Estimation}\label{sec:dcprog}
We now consider the same experimental setting as in the previous section, but with a
non-convex regularizer $\psi: \theta \mapsto \lambda \sum_{j=1}^p
\log(|\theta[j]|+\varepsilon)$, where $\theta[j]$ is the $j$-th entry in $\theta$. A classical way for minimizing the regularized
empirical cost $\frac{1}{N}\sum_{i=1}^N f_i(\theta) + \psi(\theta)$ is to
resort to DC programming. It consists of solving a sequence of
reweighted-$\ell_1$ problems~\cite{gasso}.  A current estimate $\theta_{n-1}$
is updated as a solution of $\min_{\theta \in \Theta}\frac{1}{N}\sum_{i=1}^N
f_i(\theta) + \lambda \sum_{j=1}^p \eta_j|\theta[j]|$, where $\eta_j \defin 1/(|\theta_{n-1}[j]|+\varepsilon)$.

In contrast to this ``batch'' methodology, we can use our framework to address
the problem online. At iteration~$n$ of Algorithm~\ref{alg:stochastic},  we
define the function~$g_n$ according to Proposition~\ref{prop:nonconvex3}:
\begin{displaymath}
   \textstyle
g_n: \theta \mapsto f_n(\theta_{n-1}) + \nabla f_n(\theta_{n-1})^\top (\theta-\theta_{n-1}) +
\frac{L}{2}\|\theta-\theta_{n-1}\|_2^2 + \lambda \sum_{j=1}^p
\frac{|\theta[j]|}{|\theta_{n-1}[j]|+\varepsilon},
\end{displaymath}
We compare our online DC programming algorithm against the batch one, and report the results
in Figure~\ref{fig:exp_log}, with $\varepsilon$ set to $0.01$.
We conclude that \emph{the batch
reweighted-$\ell_1$ algorithm always converges after $2$ or $3$
weight updates, but suffers from local minima issues. The stochastic
algorithm exhibits a slower convergence, but provides significantly better
solutions.} Whether or not there are good theoretical reasons for this fact remains
to be investigated.
Note that it would have been more rigorous to choose a bounded set~$\Theta$, which is
required by Proposition~\ref{prop:nonconvex3}. In practice, it turns not to be necessary 
for our method to work well; the iterates~$\theta_{n}$ have indeed remained
in a bounded set.
\begin{figure}[hbtp]
   \centering
   \includegraphics[width=0.245\linewidth,trim=10 0 5 0]{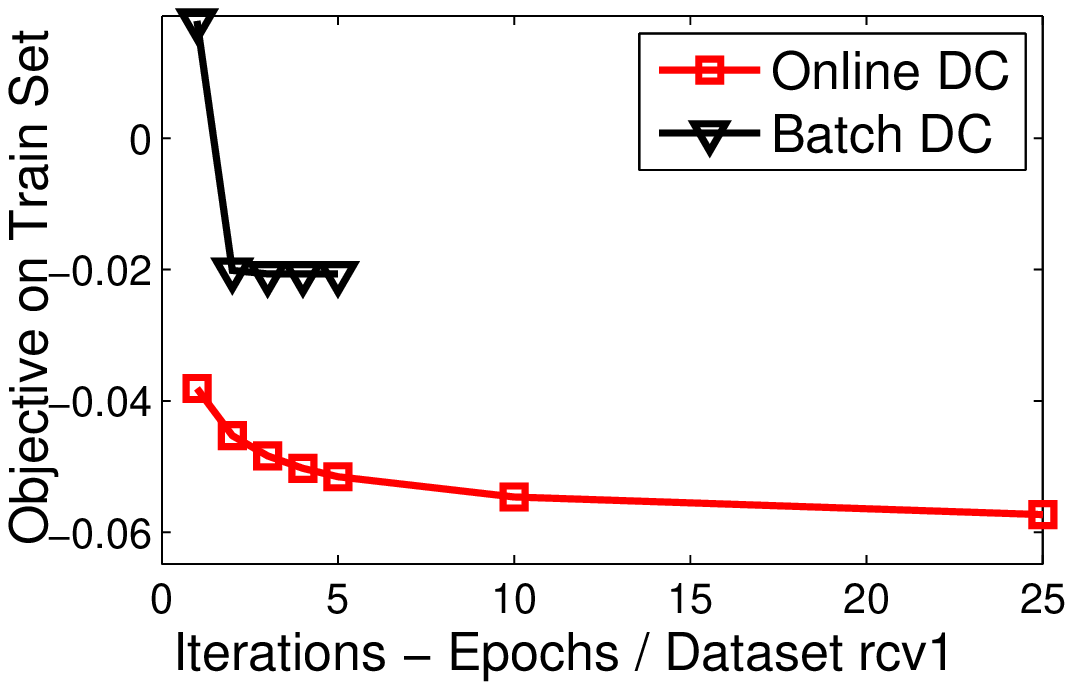}
   \includegraphics[width=0.245\linewidth,trim=10 0 5 0]{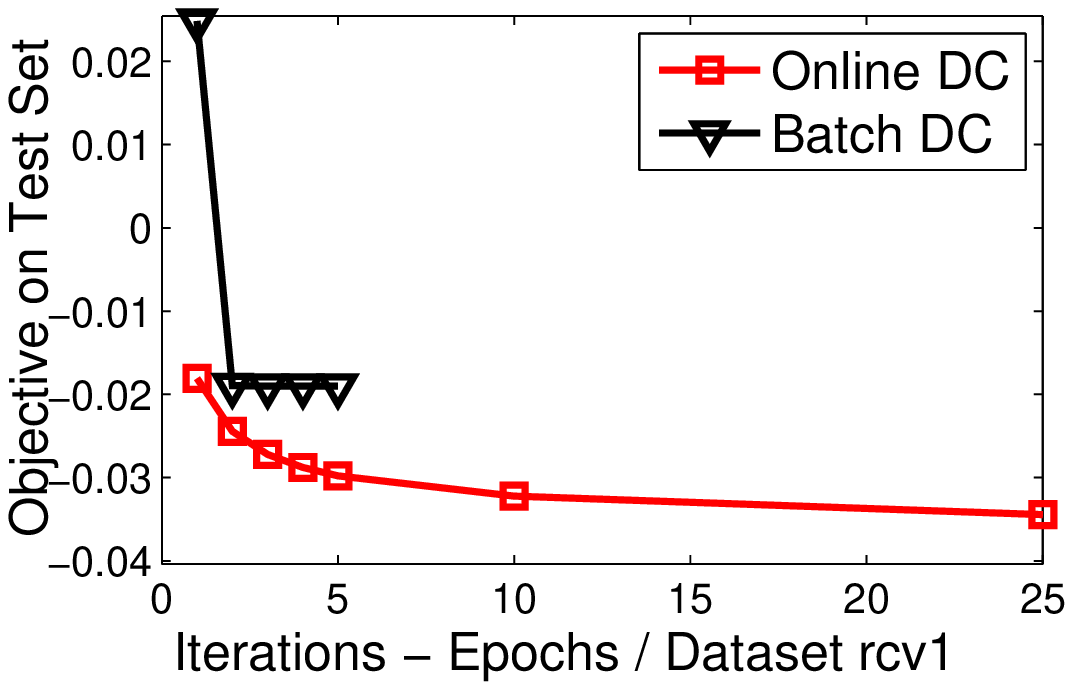}
   \includegraphics[width=0.245\linewidth,trim=10 0 5 0]{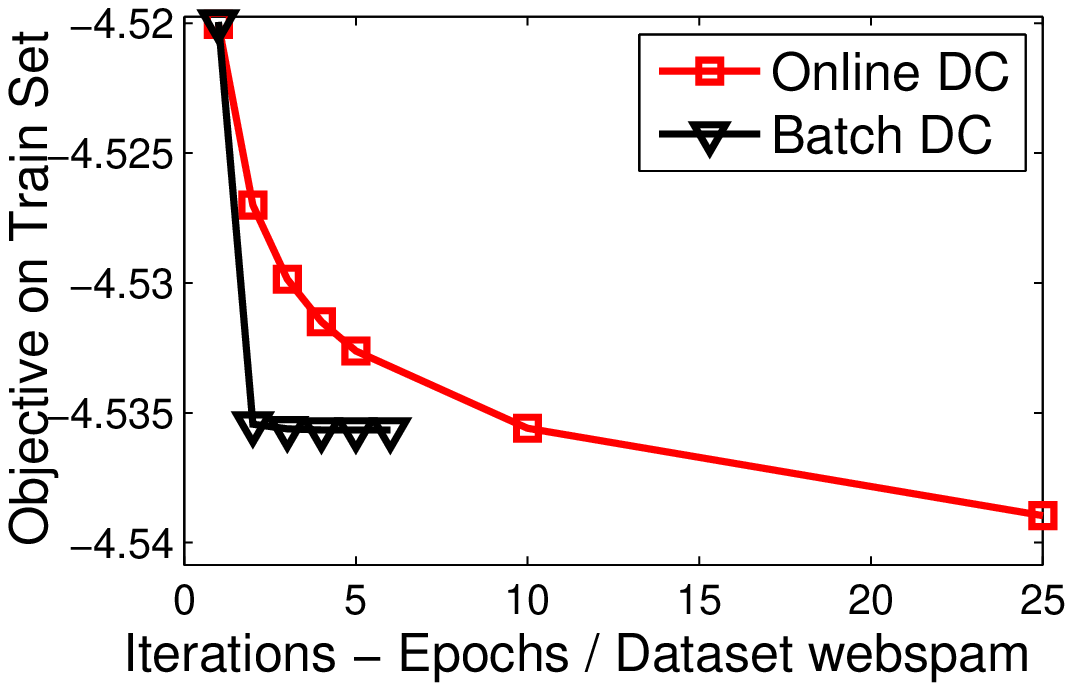}
   \includegraphics[width=0.245\linewidth,trim=10 0 5 0]{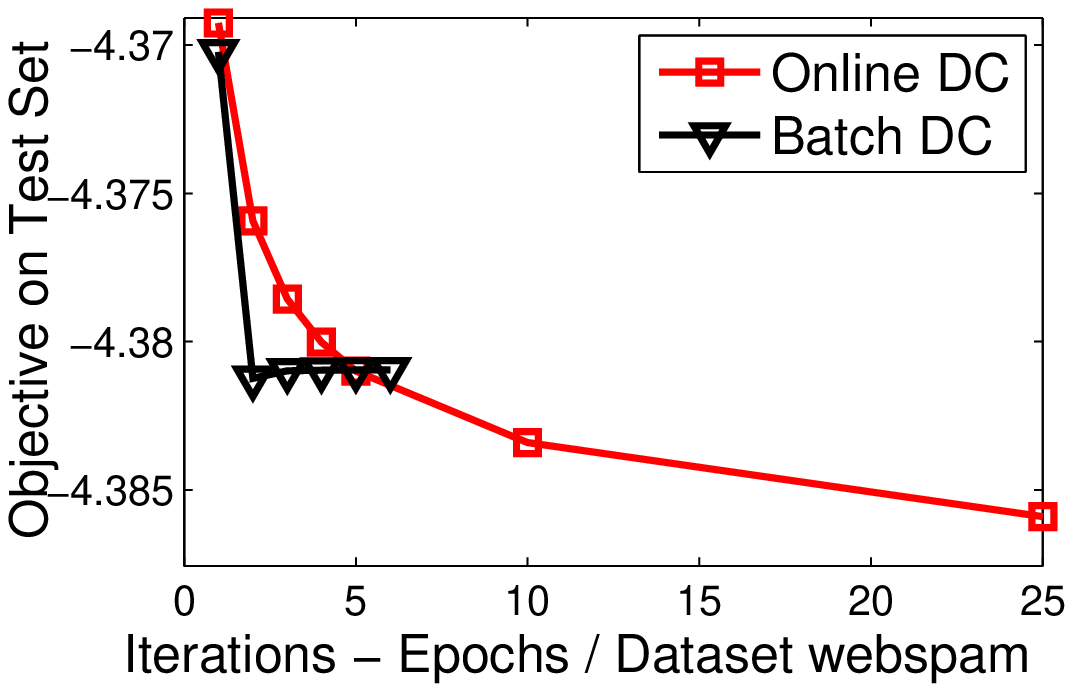}
   \vspace*{-0.6cm}
   \caption{Comparison between batch and online DC programming, with medium regularization for the datasets~\textsf{rcv1} and~\textsf{webspam}. Additional plots are provided in the supplemental material. Note that each iteration in the batch setting can perform several epochs (passes over training data).}\label{fig:exp_log}
   \vspace*{-0.3cm}
\end{figure}

\subsection{Online Structured Sparse Coding}\label{sec:sparsecoding}
In this section, we show that we can bring new functionalities to existing
matrix factorization techniques~\cite{jenatton2,mairal7}.
We are given a large collection of signals $(\x_i)_{i=1}^N$ in $\Real^m$, and
we want to find a dictionary $\D$ in $\Real^{m \times K}$ that can represent
these signals in a sparse way. The quality of $\D$ is measured through the
loss $\ell(\x,\D) \defin \min_{\alphab \in \Real^K} \frac{1}{2}
\|\x-\D\alphab\|_2^2 + \lambda_1 \|\alphab\|_1 +
\frac{\lambda_2}{2}\|\alphab\|_2^2$, where the $\ell_1$-norm can be replaced by
any convex regularizer, and the squared loss by any convex smooth loss.

Then, we are interested in minimizing the following expected cost:
\begin{displaymath}
   \min_{\D \in \Real^{m \times K}}  \E_\x\left[\ell(\x,\D) \right]  + \varphi(\D),
\end{displaymath}
where $\varphi$ is a regularizer for $\D$. In the online learning approach
of~\cite{mairal7}, the only way to regularize~$\D$ is to use a constraint set,
on which we need to be able to project efficiently; this is unfortunately not
always possible. In the matrix factorization framework of~\cite{jenatton2}, it
is argued that some applications can benefit from a structured
penalty~$\varphi$, but the approach of~\cite{jenatton2} is not easily amenable
to stochastic optimization. Our approach makes it possible by using the
proximal gradient surrogate
\begin{equation}
   g_n : \D \mapsto  \ell(\x_n,\D_{n-1}) + \trace\left(\nabla_\D \ell(\x_n,\D_{n-1})^\top (\D-\D_{n-1})\right) + \textstyle \frac{L}{2}\|\D-\D_{n-1}\|_\text{F}^2 + \varphi(\D).\label{eq:surrogate_dict}
\end{equation}
It is indeed possible to show that~$\D \mapsto \ell(\x_n,\D)$ is
differentiable, and its gradient is Lipschitz continuous with a constant~$L$
that can be explicitly computed~\cite{mairal7,mairal17}.

We now design a proof-of-concept experiment. We consider a set of $N\!=\!400\,000$
whitened natural image patches $\x_n$ of size $m\!=\!20 \times 20$ pixels. We visualize some elements from 
a dictionary~$\D$ trained by SPAMS~\cite{mairal7} on the left of Figure~\ref{fig:exp_sp1};
the dictionary elements are almost sparse, but have some residual noise among
the small coefficients. Following~\cite{jenatton2}, we propose to use a
regularization function~$\varphi$ encouraging neighbor pixels 
to be set to zero together, thus leading to a sparse structured dictionary. We consider
the collection~$\GG$ of all groups of variables corresponding to squares of $4$ neighbor pixels in~$\{1,\ldots,m\}$.
Then, we define~$\varphi(\D) \defin \gamma_1\sum_{j=1}^K
\sum_{g \in \GG} \max_{k \in g}{|\d_j[k]|} + \gamma_2\|\D\|_\text{F}^2$, where $\d_j$ is the $j$-th column
of $\D$. The penalty~$\varphi$ is a structured sparsity-inducing penalty that encourages groups of variables~$g$
to be set to zero together~\cite{jenatton2}.
Its proximal operator can be computed efficiently~\cite{mairal10}, and it is thus easy to use the
surrogates~(\ref{eq:surrogate_dict}). We set $\lambda_1\!=\!0.15$ and
$\lambda_2\!=\!0.01$; after trying a few values for~$\gamma_1$ and~$\gamma_2$ at a
reasonable computational cost, we obtain dictionaries with the desired regularization effect, as
shown in Figure~\ref{fig:exp_sp1}. Learning one dictionary of size $K\!=\! 256$
took a few minutes when performing one pass on the training data with
mini-batches of size $100$.  This experiment demonstrates that our approach is
more flexible and general than~\cite{jenatton2} and~\cite{mairal7}.
Note that it is possible to show that when $\gamma_2$ is large enough, the
iterates $\D_n$ necessarily remain in a bounded set, and thus our convergence analysis presented in Section~\ref{subsec:nonconvex}
applies to this experiment.

\begin{figure}[hbtp]
   \centering
   \includegraphics[width=0.2\linewidth]{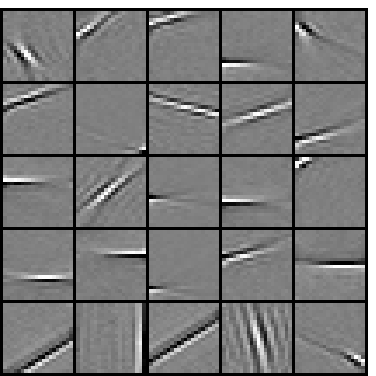}\hfill
   \includegraphics[width=0.2\linewidth]{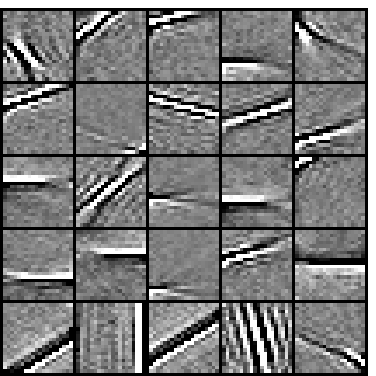} \hfill\hfill
   \includegraphics[width=0.2\linewidth]{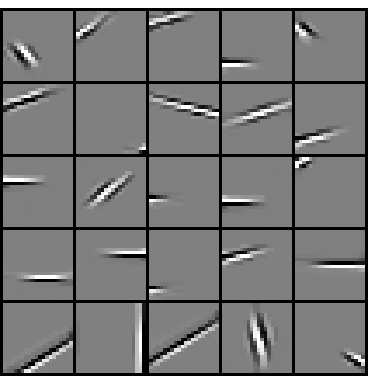}\hfill
   \includegraphics[width=0.2\linewidth]{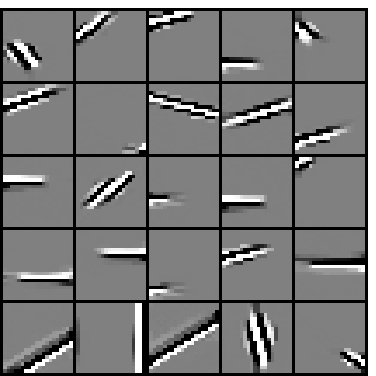}
   \vspace*{-0.2cm}
   \caption{Left: Two visualizations of $25$ elements from a larger dictionary obtained by the toolbox SPAMS~\cite{mairal7}; the second view amplifies the small coefficients. Right: the corresponding views of the dictionary elements obtained by our approach after initialization with the dictionary on the left.}
   \label{fig:exp_sp1}
\end{figure}
\vspace*{-0.3cm}

\section{Conclusion}\label{sec:ccl}
\vspace*{-0.1cm}
In this paper, we have introduced a stochastic majorization-minimization
algorithm that gracefully scales to millions of training samples. We have shown
that it has strong theoretical properties and some practical value in the
context of machine learning. We have derived from our framework several new
algorithms, which have shown to match or outperform the state of the art for
solving large-scale convex problems, and to open up new possibilities for non-convex ones. 
In the future, we would like to study surrogate functions that can exploit the
curvature of the objective function, which we believe is a crucial
issue to deal with badly conditioned datasets.

\subsubsection*{Acknowledgments}
This work was supported by
the Gargantua project (program Mastodons - CNRS).

\newpage
{\small
\bibliographystyle{plain}
\bibliography{abbrev,main}
}

\newpage
\appendix
\newcommand{\mycitep}{\citepsup}
\newcommand{\mycitet}{\citetsup}
\newcommand{\mycite}{\citesup}
\section{Mathematical Background and Useful Results}
In this paper, we use subdifferential calculus for convex functions. The
definition of subgradients and directional derivatives can be found in
classical textbooks, see, e.g.,~\cite{borwein}, \mycite{nocedal}. We denote by $\partial
f(\theta)$ the subdifferential of a convex function $f$ at a point $\theta$.
Other definitions can be found in the appendix of~\cite{mairal17}, which uses a
similar notation as ours.

In this section, we present several classical optimization and probabilistic
tools, which we use in our paper.  The first lemma is a classical quadratic
upper-bound for differentiable functions with a Lipschitz gradient. It can 
be found for instance in Lemma 1.2.3 of \mycite{nesterov4}, or in the appendix of~\cite{mairal17}.
\begin{lemma}[\bfseries Convex Surrogate for Functions with Lipschitz Gradient]~\label{lemma:upperlipschitz}\newline
Let $f: \Real^p \to \Real$ be differentiable and $\nabla f$ be $L$-Lipschitz continuous. Then, for all $\theta,\theta'$ in $\Real^p$,
   \begin{equation}
       |f(\theta') - f(\theta) - \nabla f(\theta)^\top (\theta'-\theta)| \leq \frac{L}{2}\|\theta-\theta'\|_2^2.\label{eq:lipschitz}
   \end{equation}
\end{lemma}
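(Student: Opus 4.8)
The plan is to reduce the inequality to the fundamental theorem of calculus applied along the segment joining $\theta$ and $\theta'$, and then bound the resulting integral by combining the Lipschitz continuity of $\nabla f$ with the Cauchy--Schwarz inequality.

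First I would introduce the auxiliary one-dimensional function $\phi:[0,1]\to\Real$ given by $\phi(t)\defin f(\theta+t(\theta'-\theta))$. Since $f$ is differentiable, the chain rule gives $\phi'(t)=\nabla f(\theta+t(\theta'-\theta))^\top(\theta'-\theta)$, and the map $t\mapsto\nabla f(\theta+t(\theta'-\theta))$ is continuous because an $L$-Lipschitz map is continuous; hence $\phi'$ is integrable on $[0,1]$ and $\phi(1)-\phi(0)=\int_0^1\phi'(t)\,dt$. Writing this out and subtracting $\nabla f(\theta)^\top(\theta'-\theta)=\int_0^1\nabla f(\theta)^\top(\theta'-\theta)\,dt$ from both sides yields
\[
 f(\theta') - f(\theta) - \nabla f(\theta)^\top(\theta'-\theta) = \int_0^1 \bigl(\nabla f(\theta+t(\theta'-\theta)) - \nabla f(\theta)\bigr)^\top(\theta'-\theta)\,dt.
\]

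Then I would take absolute values, bring them inside the integral, and apply the Cauchy--Schwarz inequality pointwise in $t$ to bound the integrand by $\|\nabla f(\theta+t(\theta'-\theta)) - \nabla f(\theta)\|_2\,\|\theta'-\theta\|_2$. The $L$-Lipschitz continuity of $\nabla f$ bounds the first factor by $L\,\|t(\theta'-\theta)\|_2 = Lt\,\|\theta'-\theta\|_2$, so the integrand is at most $Lt\,\|\theta-\theta'\|_2^2$; since $\int_0^1 Lt\,dt = L/2$, we obtain exactly $\frac{L}{2}\|\theta-\theta'\|_2^2$, which is the claim. There is essentially no real obstacle here: the only minor points requiring care are the integrability justification noted above (so that the integral form of the fundamental theorem of calculus is legitimate) and the observation that the right-hand side is symmetric in $\theta$ and $\theta'$, so a single computation suffices with no separate case. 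I would close by remarking that this is the classical descent lemma and pointing to the references already cited for it.
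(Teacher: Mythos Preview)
Your proof is correct and is exactly the standard argument for the descent lemma. The paper itself does not give a proof of this statement: it simply refers to Lemma~1.2.3 of Nesterov's book and to the appendix of~\cite{mairal17}, where the same integral-along-the-segment argument you wrote appears. So there is nothing to compare---your derivation is the classical one those references contain.
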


The next lemma is a simple relation, which will allow us to identify the subdifferential
of a convex function with the one of its surrogate at a particular point.
\begin{lemma}[\bfseries Surrogate Functions and Subdifferential]~\label{lemma:subdiff}\newline
Assume that $f,g: \Real^p \to \Real$ are convex, and that $h\defin g-f$ is
differentiable at $\theta$ in $\Real^p$ with $\nabla h(\theta)=0$. Then, $\partial f(\theta) = \partial g(\theta)$.
\end{lemma}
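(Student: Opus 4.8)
The plan is to argue through one-sided directional derivatives rather than through the global subgradient inequality. The naive attempt---take $s \in \partial f(\theta)$ and try to verify $g(\theta') \geq g(\theta) + s^\top(\theta'-\theta)$ for $g = f+h$---fails, because it would force $h(\theta') \geq h(\theta)$ for all $\theta'$, which does not follow from $\nabla h(\theta) = 0$ alone: $h$ is only a difference of convex functions and need not be convex. Instead I would use the standard characterization that, for a convex function $\phi: \Real^p \to \Real$ finite at $\theta$,
\[
   \partial \phi(\theta) = \{ s \in \Real^p : s^\top d \leq \phi'(\theta; d)\ \text{for all } d \in \Real^p \},
\]
where $\phi'(\theta; d) \defin \lim_{t \to 0^+} \tfrac{1}{t}\bigl(\phi(\theta+td) - \phi(\theta)\bigr)$ is the directional derivative, which exists for every direction $d$ by convexity of $\phi$ (see, e.g., \cite{borwein}).

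The key step is then to show that $f$ and $g$ have identical directional derivatives at $\theta$. Writing $g = f + h$, the difference quotient splits linearly:
\[
   \frac{g(\theta+td) - g(\theta)}{t} = \frac{f(\theta+td) - f(\theta)}{t} + \frac{h(\theta+td) - h(\theta)}{t}.
\]
Letting $t \to 0^+$, the first term converges to $f'(\theta; d)$ since $f$ is convex, while the second converges to $\nabla h(\theta)^\top d = 0$ since $h$ is differentiable at $\theta$ with $\nabla h(\theta) = 0$. Hence $g'(\theta; d)$ exists and equals $f'(\theta; d)$ for every $d$.

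Finally I would combine the two: applying the characterization above to both $f$ and $g$ and using $g'(\theta; \cdot) = f'(\theta; \cdot)$, the sets $\partial f(\theta)$ and $\partial g(\theta)$ are defined by exactly the same family of linear inequalities, so they coincide. I do not expect a genuine obstacle here; the only subtlety worth flagging is the first one---recognizing that the global subgradient inequality is the wrong handle and that the local, directional-derivative description of $\partial\phi(\theta)$ is precisely what turns the hypothesis $\nabla h(\theta)=0$ into the desired equality.
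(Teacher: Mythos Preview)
Your proposal is correct and follows essentially the same approach as the paper: both arguments observe that $f$ and $g$ share the same directional derivatives at $\theta$ because $h$ is differentiable there with $\nabla h(\theta)=0$, and then invoke the directional-derivative characterization of the subdifferential (the paper cites Proposition~3.1.6 of \cite{borwein}) to conclude $\partial f(\theta)=\partial g(\theta)$. Your additional remark on why the global subgradient inequality is the wrong handle is a nice clarification not present in the paper, but the core argument is identical.
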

\begin{proof}
It is easy to show that~$g$ and $f$ have the same directional derivatives
at $\theta$ since $h$ is differentiable and $\nabla h(\theta)=0$. This is
sufficient to conclude that $\partial g (\theta)=\partial
f(\theta)$ by using Proposition 3.1.6 of \cite{borwein}, a simple lemma
relating directional derivatives and subgradients.
\end{proof}

The following lemma is a lower bound for strongly
convex functions. It can be found for instance in~\mycite{nesterov7}.
\begin{lemma}[\bfseries Lower Bound for Strongly Convex
   Functions]~\label{lemma:lower}\newline
 Let $f: \Real^p \to \Real$ be a $\mu$-strongly convex function. Let $z$ be in $\partial f(\kappa)$ for some $\kappa$ in $\Real^p$. Then, the following inequality holds for all $\theta$ in $\Real^p$:
 \begin{displaymath}
     f(\theta) \geq f(\kappa) + z^\top(\theta-\kappa) + \frac{\mu}{2}\|\theta-\kappa\|_2^2.
 \end{displaymath}
\end{lemma}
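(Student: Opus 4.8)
The plan is to reduce the claim to the ordinary subgradient inequality for convex functions. First I would introduce $\phi \defin \theta \mapsto f(\theta) - \frac{\mu}{2}\|\theta\|_2^2$, which is convex precisely because $f$ is $\mu$-strongly convex. Since the quadratic $\theta \mapsto \frac{\mu}{2}\|\theta\|_2^2$ is finite-valued and everywhere differentiable (with gradient $\mu\theta$), the Moreau--Rockafellar subdifferential sum rule applies without any qualification condition, so $\partial \phi(\kappa) = \partial f(\kappa) - \mu\kappa$; in particular, $z \in \partial f(\kappa)$ implies $z - \mu\kappa \in \partial \phi(\kappa)$.

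Next I would invoke the defining property of a subgradient of the convex function $\phi$ at $\kappa$, namely $\phi(\theta) \geq \phi(\kappa) + (z-\mu\kappa)^\top(\theta-\kappa)$ for all $\theta$ in $\Real^p$. Substituting $\phi = f - \frac{\mu}{2}\|\cdot\|_2^2$ on both sides and rearranging leaves
\begin{displaymath}
   f(\theta) \geq f(\kappa) + z^\top(\theta-\kappa) + \left( \tfrac{\mu}{2}\|\theta\|_2^2 - \tfrac{\mu}{2}\|\kappa\|_2^2 - \mu\kappa^\top(\theta-\kappa) \right).
\end{displaymath}
The bracketed remainder equals $\frac{\mu}{2}\big(\|\theta\|_2^2 - 2\kappa^\top\theta + \|\kappa\|_2^2\big) = \frac{\mu}{2}\|\theta-\kappa\|_2^2$ by completing the square, which is exactly the asserted lower bound.

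There is no genuine obstacle here: the only two points deserving a sentence are the legitimacy of the subdifferential sum rule (granted by differentiability of the quadratic term) and the elementary completion of the square. If one preferred to bypass subdifferential calculus, an alternative is to use the segment form of $\mu$-strong convexity, $f(\kappa + t(\theta-\kappa)) \leq (1-t)f(\kappa) + t f(\theta) - \frac{\mu}{2}t(1-t)\|\theta-\kappa\|_2^2$ for $t \in (0,1]$, divide by $t$, let $t \to 0^+$ to obtain $\nabla f(\kappa,\theta-\kappa) \leq f(\theta) - f(\kappa) - \frac{\mu}{2}\|\theta-\kappa\|_2^2$, and then use $z^\top(\theta-\kappa) \leq \nabla f(\kappa,\theta-\kappa)$ for $z \in \partial f(\kappa)$; but the subgradient route above is shorter.
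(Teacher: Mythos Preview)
Your proof is correct and follows essentially the same idea as the paper: subtract a quadratic from $f$ to obtain a convex function, transfer the subgradient, and apply the convex subgradient inequality. The only cosmetic difference is that the paper centers the quadratic at $\kappa$, defining $l(\theta)=f(\theta)-\frac{\mu}{2}\|\theta-\kappa\|_2^2$, so that $\nabla(l-f)(\kappa)=0$ and Lemma~\ref{lemma:subdiff} gives $z\in\partial l(\kappa)$ directly---this avoids both the Moreau--Rockafellar rule and the completion-of-the-square step.
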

\begin{proof}
    The function $l: \theta \mapsto f(\theta) - \frac{\mu}{2}\|\theta-\kappa\|_2^2$ is
    convex by definition of strong convexity, and $l-f$ is differentiable with
    $\nabla(l-f)(\kappa)=0$.  We apply Lemma~\ref{lemma:subdiff}, which tells
    us that $z$ is in $\partial l(\kappa)$. This is sufficient to conclude, by
    noticing that a convex function is always above its tangents.
\end{proof}
The next lemma is also classical (see the appendix of~\cite{mairal17}). 
\begin{lemma}[\bfseries Second-Order Growth Property]~\label{lemma:second}\newline
 Let $f: \Real^p \to \Real$ be a $\mu$-strongly convex function and $\Theta \subseteq \Real^p$ be a convex set.
 Let $\theta^\star$ be the minimizer of $f$ on~$\Theta$. Then, the following condition holds for all $\theta$ in $\Theta$:
 \begin{displaymath}
     f(\theta) \geq f(\theta^\star) + \frac{\mu}{2}\|\theta-\theta^\star\|_2^2.
 \end{displaymath}
\end{lemma}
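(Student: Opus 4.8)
The plan is to combine the first-order optimality condition for constrained convex minimization with the strong-convexity lower bound of Lemma~\ref{lemma:lower}. Since $\theta^\star$ minimizes the convex function $f$ over the convex set $\Theta$, for every $\theta$ in $\Theta$ the whole segment $[\theta^\star,\theta]$ lies in $\Theta$, so the one-sided directional derivative satisfies $\nabla f(\theta^\star,\theta-\theta^\star)\geq 0$. Equivalently, there is a subgradient $z$ in $\partial f(\theta^\star)$ with $z^\top(\theta-\theta^\star)\geq 0$ for all $\theta$ in $\Theta$; this is the standard variational characterization of a constrained minimizer of a convex function (see Proposition~3.1.6 and the surrounding discussion in~\cite{borwein}, which relates directional derivatives to subgradients).

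First I would invoke this optimality condition to produce such a $z$ in $\partial f(\theta^\star)$ (it suffices to have one $z$ with $z^\top(\theta-\theta^\star)\ge 0$ on all of $\Theta$, but even a $\theta$-dependent choice via the directional derivative causes no difficulty). Next I would apply Lemma~\ref{lemma:lower} with $\kappa=\theta^\star$ and this $z$, which gives
\[
   f(\theta) \geq f(\theta^\star) + z^\top(\theta-\theta^\star) + \frac{\mu}{2}\|\theta-\theta^\star\|_2^2 \quad\text{for all }\theta\in\Theta.
\]
Finally, discarding the non-negative term $z^\top(\theta-\theta^\star)\geq 0$ yields exactly the claimed second-order growth inequality.

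The only delicate step is the first one: justifying that a minimizer over a convex set admits a subgradient forming a non-obtuse angle with every feasible direction. In the unconstrained case this is immediate since $0\in\partial f(\theta^\star)$; in general it follows from the fact that the directional derivative of a convex function is the support function of its subdifferential, combined with $\nabla f(\theta^\star,\theta-\theta^\star)\geq 0$, or equivalently from $0\in\partial f(\theta^\star)+N_\Theta(\theta^\star)$. Everything after that is a single substitution into Lemma~\ref{lemma:lower}, so I expect no further obstacle.
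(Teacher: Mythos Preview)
The paper does not actually supply a proof of this lemma; it only labels the result as classical and refers to the appendix of~\cite{mairal17}. Your argument is correct and is exactly the standard one: the first-order optimality condition for a convex function over a convex set yields a subgradient $z\in\partial f(\theta^\star)$ with $z^\top(\theta-\theta^\star)\geq 0$ on $\Theta$, and then Lemma~\ref{lemma:lower} with $\kappa=\theta^\star$ gives the claim after dropping the non-negative linear term. Your discussion of the only nontrivial point, namely the existence of such a $z$ via $0\in\partial f(\theta^\star)+N_\Theta(\theta^\star)$ or equivalently via the support-function characterization of the directional derivative, is accurate and sufficient.
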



We now introduce a sequence of probabilistic tools, which we use in
our convergence analysis for non-convex functions.
The first one is a classical theorem on quasi-martingales, which was used
in~\cite{bottou2} for proving the convergence of the stochastic
gradient descent algorithm.
\begin{theorem}[\bfseries Convergence of Quasi-Martingales.]\label{theo:martingales}~\newline
    This presentation follows \cite{bottou2} and Proposition 9.5 and Theorem
    9.4 of~\mycite{metivier}. The original theorem is due to~\mycite{fisk}. Let $(\FF_n)_{n \geq 0}$ be an increasing family of $\sigma$-fields.
    Let $(X_n)_{n\geq 0}$ be a real stochastic process such that every random
    variable $X_n$ is bounded below by a constant independent of $n$, and
    $\FF_n$-measurable.
    Let 
    \begin{displaymath} 
       \delta_n \defin \left\{ 
       \begin{array}{ll}
          1 & ~~\text{if}~~ \E[X_{n+1}-X_n | {\FF}_n]>0, \\
          0 & ~~\text{otherwise.}
       \end{array}
       \right.
    \end{displaymath}
    If the series $\sum_{n=0}^\infty
    \E[\delta_n(X_{n+1}-X_n)]$ converges, then $(X_n)_{n \geq 0}$ is a quasi-martingale and
    converges almost surely to an integrable random variable $X_\infty$. Moreover,
    \begin{displaymath}
       \sum_{n=0}^\infty \E\big[|\E[X_{n+1}-X_n | {\FF}_n]|\big] < \infty.
    \end{displaymath}
 \end{theorem}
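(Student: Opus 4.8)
The plan is to reduce the statement to Doob's supermartingale convergence theorem by subtracting off the predictable upward drift of $(X_n)$. First I would note that adding a constant to every $X_n$ alters neither the increments $X_{n+1}-X_n$ nor the indicators $\delta_n$, so without loss of generality $X_n\ge 0$ for all $n$; each $X_n$ is also integrable, as is implicit in the statement's use of $\E$. Write $Y_n\defin\E[X_{n+1}-X_n\mid\FF_n]$, $Y_n^+\defin\max(Y_n,0)$ and $Y_n^-\defin\max(-Y_n,0)$. Since $\delta_n Y_n = Y_n^+$ is $\FF_n$-measurable, $\E[\delta_n(X_{n+1}-X_n)]=\E[\delta_n Y_n]=\E[Y_n^+]$, so the hypothesis says precisely that $\sum_{n\ge 0}\E[Y_n^+]<+\infty$.

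Next I would introduce the nondecreasing, predictable process $B_n\defin\sum_{k=0}^{n-1}Y_k^+$ and the compensated process $W_n\defin X_n-B_n$. By monotone convergence $\E[B_\infty]=\sum_{k\ge 0}\E[Y_k^+]<+\infty$, so $B_n\uparrow B_\infty$ almost surely with $B_\infty$ integrable. A one-line computation gives $\E[W_{n+1}-W_n\mid\FF_n]=Y_n-Y_n^+=-Y_n^-\le 0$, so $W$ is a supermartingale; moreover $W_n=X_n-B_n\ge -B_\infty$, hence $\E[W_n^-]\le\E[B_\infty]<+\infty$ uniformly in $n$. Doob's supermartingale convergence theorem (in the form requiring only $\sup_n\E[W_n^-]<+\infty$) then yields that $W_n$ converges almost surely to an integrable $W_\infty$, and therefore $X_n=W_n+B_n$ converges almost surely to the integrable limit $X_\infty\defin W_\infty+B_\infty$.

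For the ``moreover'' claim --- which is exactly what makes $(X_n)$ a quasi-martingale --- I would telescope. From $\E[Y_n]=\E[X_{n+1}]-\E[X_n]$ and $\E[Y_n^-]=\E[Y_n^+]-\E[Y_n]$, summing over $n=0,\dots,N-1$ gives
\[
\sum_{n=0}^{N-1}\E[Y_n^-]=\sum_{n=0}^{N-1}\E[Y_n^+]+\E[X_0]-\E[X_N]\le\sum_{n\ge 0}\E[Y_n^+]+\E[X_0],
\]
where the inequality uses $X_N\ge 0$. Letting $N\to+\infty$ shows $\sum_n\E[Y_n^-]<+\infty$, hence $\sum_n\E[|Y_n|]=\sum_n\bigl(\E[Y_n^+]+\E[Y_n^-]\bigr)<+\infty$, which is the stated conclusion (the same computation also bounds $\sup_N\E[X_N]$, so $\sup_n\E[|X_n|]<+\infty$).

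I expect the only genuinely delicate point to be the invocation of the convergence theorem: the compensated process $W_n$ is not bounded below by a constant but only by the integrable random variable $-B_\infty$, so one must use the version of Doob's theorem that needs merely $\sup_n\E[W_n^-]<+\infty$ rather than pathwise boundedness. The uniform-in-$n$ lower bound on $X_n$ is what does the work both there (to control $\E[W_n^-]$ through $B_\infty$) and in the telescoping step (to discard the $-\E[X_N]$ term), so this hypothesis is essential and cannot be relaxed to boundedness below by an $n$-dependent constant.
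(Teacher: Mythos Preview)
Your argument is correct and is essentially the standard route to this result: compensate away the positive part of the predictable drift, apply Doob's supermartingale convergence theorem under the condition $\sup_n\E[W_n^-]<+\infty$, and recover the quasi-martingale bound by telescoping. The identification $\E[\delta_n(X_{n+1}-X_n)]=\E[Y_n^+]$ and the subsequent steps are all sound, and you correctly flag that the version of Doob needed is the one with an $L^1$ bound on the negative part rather than a pathwise lower bound.

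However, there is nothing to compare against: the paper does \emph{not} prove this theorem. It is stated in the appendix as a classical background result, with attribution to Fisk and pointers to the presentations in Bottou and in M\'etivier (Proposition~9.5 and Theorem~9.4), and is then invoked as a black box in the proof of Proposition~\ref{prop:nonconvex}. So your proposal is a correct self-contained proof of a result the paper simply cites.
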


The next lemma is simple, but useful to prove the convergence of deterministic algorithms.
\begin{lemma}{\bf Deterministic Lemma on Non-negative Converging Series.\\ } \label{lemma:converg}
   Let $(a_n)_{n \geq 1}$, $(b_n)_{n \geq 1}$ be two non-negative
   real sequences such that the series $\sum_{n=1}^\infty a_n$ diverges, the series $\sum_{n=1}^\infty a_n b_n$ converges, and there exists $K > 0$ such that $|b_{n+1}-b_n| \leq K a_n$.
   Then, the sequence $(b_n)_{n \geq 1}$ converges to 0.
\end{lemma}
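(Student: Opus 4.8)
The plan is a proof by contradiction, in two stages: first establish that $\liminf_{n} b_n = 0$, and then upgrade this to $\lim_n b_n = 0$ using the slow-variation hypothesis $|b_{n+1}-b_n|\le K a_n$. For the first stage, suppose $\liminf_n b_n = c > 0$. Then $b_n \ge c/2$ for all $n$ larger than some $N$, hence $\sum_{n \ge N} a_n b_n \ge \tfrac{c}{2}\sum_{n \ge N} a_n = +\infty$ because $\sum_n a_n$ diverges, contradicting the convergence of $\sum_n a_n b_n$. So $\liminf_n b_n = 0$.

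For the second stage, assume $b_n \not\to 0$, so $\limsup_n b_n = \ell > 0$; fix any $\epsilon \in (0, \ell/2)$, so that $b_n > 2\epsilon$ for infinitely many $n$ while, by the first stage, $b_n < \epsilon$ for infinitely many $n$. The idea is that whenever the sequence climbs above $2\epsilon$ it must later descend below $\epsilon$, and the hypothesis $|b_{n+1}-b_n|\le K a_n$ forces that descent to consume at least $\epsilon/K$ of ``$a$-mass'' spread over indices at which $b_i \ge \epsilon$, each of which therefore contributes at least $\epsilon a_i$ to $\sum_n a_n b_n$. Concretely, I would isolate the maximal runs of consecutive indices on which $b_i \ge \epsilon$; using that $b_n < \epsilon$ infinitely often (to separate the high excursions) one gets infinitely many \emph{distinct} such runs $[s_k,t_k]$ that attain a value $>2\epsilon$. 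On each, pick $r_k \in [s_k,t_k]$ with $b_{r_k} > 2\epsilon$; since $b_{t_k+1} < \epsilon$ (the run is maximal), the triangle inequality gives $\sum_{i=r_k}^{t_k}|b_{i+1}-b_i| \ge b_{r_k}-b_{t_k+1} > \epsilon$, hence $\sum_{i=r_k}^{t_k} a_i \ge \epsilon/K$; and since every index in $[r_k,t_k]$ lies in the run, $b_i \ge \epsilon$ there, so $\sum_{i=r_k}^{t_k} a_i b_i \ge \epsilon^2/K$. The blocks $[r_k,t_k]$ are pairwise disjoint, so summing over $k$ yields $\sum_n a_n b_n \ge \sum_k \epsilon^2/K = +\infty$, the desired contradiction.

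The main obstacle is precisely the combinatorial bookkeeping of the second stage: extracting infinitely many pairwise-disjoint blocks on which $b_i$ stays $\ge \epsilon$ and over which one can \emph{simultaneously} lower-bound the accumulated variation (to extract an $a$-mass bound via $|b_{i+1}-b_i|\le Ka_i$) and the weighted partial sum $\sum a_i b_i$. The trick that keeps this clean is to work only with the \emph{descending} half of each excursion --- from a peak $r_k$ down to the run's right endpoint $t_k$ --- because on $[r_k,t_k]$ every index is at once ``high'' ($b_i\ge\epsilon$) and carries an increment $|b_{i+1}-b_i|$ bounded by $Ka_i$, which avoids any stray boundary term (in particular no need to control $a_i$ at a low index). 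The remaining ingredients --- the $\liminf$ argument and the telescoping/triangle-inequality estimates --- are routine.
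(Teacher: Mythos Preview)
Your proof is correct and follows the same overall two-stage structure as the paper's: first show $\liminf_n b_n = 0$ by the obvious divergence argument, then derive a contradiction from $\limsup_n b_n > 0$ by analysing excursions of $(b_n)$ above a threshold. The difference lies in how the second contradiction is obtained. The paper fixes the threshold at $\lambda/3$ (where $\lambda = \limsup b_n$), builds alternating ``high'' and ``low'' blocks, and then uses the smallness of the \emph{tail} $\sum_{n\ge N} a_n b_n$ to bound the total variation of $b$ across any high block, concluding that eventually $b_m \le 2\lambda/3$ for all $m$, contradicting $\limsup b_n = \lambda$. You instead show directly that each high excursion contributes at least $\epsilon^2/K$ to $\sum_n a_n b_n$ (via the descent from a peak $r_k$ to the right edge $t_k+1$), and since there are infinitely many disjoint such excursions the series diverges. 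Your route is arguably a bit more transparent, since the contradiction hits the hypothesis $\sum a_n b_n < \infty$ head-on and avoids the somewhat delicate choice $\varepsilon = \lambda^2/(9K)$ in the paper; the paper's route, on the other hand, yields a quantitative eventual upper bound on $b_m$ as a byproduct. Both arguments rest on the same key observation: on a high block, $b_i$ stays above the threshold so $a_i b_i$ dominates a constant multiple of $a_i$, which in turn controls $|b_{i+1}-b_i|$.
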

\begin{proof}
    The proof is inspired by the one of Proposition 1.2.4 of \mycite{bertsekas}.
    Since the series $\sum_{n \geq 1} a_n$ diverges, we necessarily have $\lim
    \inf_{n \to +\infty} b_n = 0$. Otherwise, it would be easy to contradict
    the assumption $\sum_{n \geq 1} a_n b_n < +\infty$.
    
    Let us now proceed by contradiction and assume that $\lim \sup_{n \to +\infty} b_n = \lambda > 0$.
    We can then build two sequences of indices $(m_j)_{j \geq 1}$ and $(n_j)_{j \geq 1}$ such that
   \begin{itemize}
       \item $m_j < n_j < m_{j+1}$,
       \item $\frac{\lambda}{3} < b_k$, for $m_j \leq k < n_j$,
       \item $b_k \leq \frac{\lambda}{3}$, for $n_j \leq k < m_{j+1}$.
    \end{itemize}
    Let $\varepsilon = \frac{\lambda^2}{9K}$ and ${\tildej}$ be large enough such that
    \begin{displaymath}
      \sum_{n=m_{\tildej}}^\infty a_n b_n < \varepsilon.
    \end{displaymath}
    Then, we have for all $j \geq {\tildej}$ and all $m$ with $m_j \leq m \leq n_j-1$,
    \begin{displaymath}
       \begin{split}
          |b_{n_j}-b_{m}| &\leq \sum_{k=m}^{n_j-1} |b_{k+1}-b_k|  \leq \frac{3K}{\lambda} \sum_{k=m}^{n_j-1} a_k \frac{\lambda}{3}  \leq \frac{3K}{\lambda} \sum_{k=m}^{n_j-1} a_k b_k  \leq \frac{3K}{\lambda} \sum_{k=m}^{+\infty} a_k b_k \\
          &\leq \frac{3K\varepsilon}{\lambda} \leq \frac{\lambda}{3}.
       \end{split}
    \end{displaymath}
    Therefore, by using the triangle inequality,
    \begin{displaymath}
       b_{m} \leq b_{n_j} + \frac{\lambda}{3} \leq \frac{2\lambda}{3}.
    \end{displaymath}
    and finally, for all $ m \geq {\tildej}$, 
    \begin{displaymath}
       b_{m}  \leq \frac{2\lambda}{3},
    \end{displaymath}
    which contradicts $\lim \sup_{n \to +\infty} b_n = \lambda > 0$. 
    Therefore, $\displaystyle b_n \underset{n \to +\infty}{\longrightarrow} 0$.

\end{proof}
We now provide a stochastic version of Lemma~\ref{lemma:converg2}.
\begin{lemma}{\bf Stochastic Lemma on Non-negative Converging Series.\\ } \label{lemma:converg2}
   Let $(X_n)_{n \geq 1}$ be a sequence of non-negative measurable random
   variables on a probability space. Let also $a_n$,~$b_n$ be two non-negative
   sequences such that $\sum_{n \geq 1} a_n = +\infty$ and $\sum_{n \geq 1} a_n
   b_n < + \infty$.
   Assume that there exists a constant $C$ such that for all $n \geq 1$, $\E[ X_n ] \leq b_n$ and $|X_{n+1}-X_n| \leq Ca_n$ almost surely.
   Then $X_n$ almost surely converges to zero.
\end{lemma}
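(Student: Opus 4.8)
The plan is to \emph{reduce the statement to the deterministic Lemma~\ref{lemma:converg}, applied separately on each sample path}. The only genuinely probabilistic ingredient is to show that the random series $\sum_{n\geq 1} a_n X_n$ is finite with probability one; once this is in hand, the convergence $X_n \to 0$ is a purely deterministic consequence obtained path by path.

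First I would exchange expectation and summation by Tonelli's theorem (equivalently, monotone convergence), which is legitimate here precisely because all the summands $a_n X_n$ are non-negative, so no integrability has to be verified beforehand:
\[
   \E\!\left[\sum_{n=1}^{\infty} a_n X_n\right] \;=\; \sum_{n=1}^{\infty} a_n \,\E[X_n] \;\leq\; \sum_{n=1}^{\infty} a_n b_n \;<\; +\infty .
\]
A non-negative random variable with finite expectation is finite almost surely, so the event $\Omega_1 \defin \{\sum_{n\geq 1} a_n X_n < +\infty\}$ has probability one. Separately, for each fixed $n$ the bound $|X_{n+1}-X_n|\leq C a_n$ holds outside a null set; taking the countable intersection over $n\geq 1$ produces an event $\Omega_2$ of probability one on which $|X_{n+1}-X_n|\leq C a_n$ holds simultaneously for all $n$. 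Set $\Omega_0 \defin \Omega_1 \cap \Omega_2$, which still has probability one.

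Finally I would fix $\omega \in \Omega_0$ and invoke Lemma~\ref{lemma:converg} with the non-negative sequences $(a_n)_{n\geq 1}$ and $(X_n(\omega))_{n\geq 1}$ and constant $K = C$: the hypothesis $\sum_n a_n = +\infty$ is deterministic and therefore holds for this $\omega$; $\sum_n a_n X_n(\omega) < +\infty$ because $\omega \in \Omega_1$; and $|X_{n+1}(\omega)-X_n(\omega)| \leq C a_n$ because $\omega \in \Omega_2$. Lemma~\ref{lemma:converg} then yields $X_n(\omega) \to 0$, and since $\omega$ ranges over the probability-one set $\Omega_0$, we conclude $X_n \to 0$ almost surely.

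I do not expect a real obstacle: the statement is essentially the ``randomized'' version of Lemma~\ref{lemma:converg}, and the only step deserving a moment of care is the Tonelli interchange, which is valid here only because the terms are non-negative (so the pathwise series, though possibly $+\infty$ a priori, has finite mean and hence is a.s.\ finite). Everything else is bookkeeping of almost sure events.
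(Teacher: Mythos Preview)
Your proposal is correct and follows essentially the same approach as the paper: interchange sum and expectation by Tonelli to obtain $\E[\sum_n a_n X_n]<\infty$, deduce that $\sum_n a_n X_n<\infty$ almost surely, and then apply the deterministic Lemma~\ref{lemma:converg} pathwise with $b'_n=X_n(\omega)$. Your version is a bit more explicit about the measure-theoretic bookkeeping (constructing $\Omega_1,\Omega_2$ and their intersection), but the argument is the same.
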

\begin{proof}
   The following series is convergent
   \begin{displaymath}
      \E\left[ \sum_{n\geq 1} a_n X_n\right] = \sum_{n\geq 1}\E\left[  a_n X_n\right] \leq \sum_{n \geq 1} a_n b_n < +\infty,
   \end{displaymath}
   where we use the fact that the random variables are non-negative to interchange the sum and the expectation.
   We thus have that $\sum_{n\geq 1} a_n X_n$ converges with probability one.
   Then, let us call $a'_n = a_n$ and $b'_n = X_n$; the conditions
   of Lemma~\ref{lemma:converg} are satisfied for $a'_n$ and $b'_n$ with probability one, and $X_n$
   almost surely converges to zero.
\end{proof}

\section{Auxiliary Lemmas}\label{appendix:auxproofs}
In this section, we present auxiliary lemmas for our convex and non-convex analyses.
We start by presenting a lemma which is useful for both of them, and which is in fact
a core component for all results presented in~\cite{mairal17}. The proof of this lemma
is simple and available in~\cite{mairal17}.
\begin{lemma}[\bfseries Basic Properties of First-Order Surrogate Functions]~\label{lemma:basic}\newline
Let~$g$ be in $\S_{L,\rho}(f,\kappa)$ for some $\kappa$ in $\Theta$. Define the approximation error function $h\defin g-f$
 and let~$\theta'$ be the minimizer of $g$ over $\Theta$. Then, for all
 $\theta$ in~$\Theta$,
\begin{itemize}
   \item $\nabla h(\kappa) = 0$;
   \item $|h(\theta)| \leq \frac{L}{2}\|\theta-\kappa\|_2^2$;
   \item $f(\theta') \leq g(\theta') \leq f(\theta) + \frac{L}{2}\|\theta-\kappa\|_2^2 - \frac{\rho}{2}\|\theta-\theta'\|_2^2$.
\end{itemize}
\end{lemma}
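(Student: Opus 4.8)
The plan is to verify the three items one after another, each reducing to a background result already in hand. Throughout, write $h \defin g-f$, which by Definition~\ref{def:surrogate_batch} is differentiable on $\Real^p$ with an $L$-Lipschitz gradient, satisfies $h \ge 0$, and obeys $h(\kappa) = g(\kappa) - f(\kappa) = 0$. First I would prove $\nabla h(\kappa) = 0$: since $h \ge 0$ everywhere and $h(\kappa) = 0$, the point $\kappa$ is a global minimizer of the differentiable function $h$, so the first-order optimality condition forces $\nabla h(\kappa) = 0$. (Should one worry that $\kappa$ might lie on the boundary of $\Theta$, this is harmless: the surrogate definition asks $h$ to be differentiable and nonnegative on all of $\Real^p$, so $\kappa$ is genuinely an unconstrained minimizer of $h$.)

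For the second item, the bound $|h(\theta)| \le \frac{L}{2}\|\theta-\kappa\|_2^2$ follows by applying \refLemma{lemma:upperlipschitz} to $h$ with base point $\kappa$: since $\nabla h$ is $L$-Lipschitz,
\[
\bigl| h(\theta) - h(\kappa) - \nabla h(\kappa)^\top(\theta-\kappa)\bigr| \le \frac{L}{2}\|\theta-\kappa\|_2^2,
\]
and plugging in $h(\kappa)=0$ and $\nabla h(\kappa)=0$ from the first step gives the claim.

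For the third item, the left inequality $f(\theta') \le g(\theta')$ is just the surrogate property $g \ge f$ evaluated at $\theta'$. For the right inequality, recall that $\theta'$ minimizes the $\rho$-strongly convex function $g$ over the convex set $\Theta$, so \refLemma{lemma:second} applied to $g$ yields $g(\theta) \ge g(\theta') + \frac{\rho}{2}\|\theta-\theta'\|_2^2$ for every $\theta \in \Theta$. Rearranging and substituting $g = f + h$,
\[
g(\theta') \le g(\theta) - \frac{\rho}{2}\|\theta-\theta'\|_2^2 = f(\theta) + h(\theta) - \frac{\rho}{2}\|\theta-\theta'\|_2^2,
\]
and bounding $h(\theta) \le \frac{L}{2}\|\theta-\kappa\|_2^2$ by the second item gives exactly the stated inequality.

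I do not expect any genuine obstacle: the lemma is essentially bookkeeping that stitches together nonnegativity of the approximation error, the quadratic upper bound for functions with Lipschitz gradient, and the second-order growth property of strongly convex functions. The only mildly delicate point is the optimality argument at $\kappa$ when $\kappa$ is a boundary point of $\Theta$, which is dealt with as indicated above by exploiting that $h$ is defined, differentiable, and nonnegative on all of $\Real^p$.
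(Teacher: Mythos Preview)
Your proof is correct and is precisely the natural argument. The paper itself does not actually prove this lemma; it simply remarks that the proof is simple and defers to~\cite{mairal17}, so there is no in-paper proof to compare against, but your three steps (global minimality of $h$ at $\kappa$ to get $\nabla h(\kappa)=0$, \refLemma{lemma:upperlipschitz} for the quadratic bound on $h$, and \refLemma{lemma:second} combined with $g=f+h$ for the last inequality) constitute exactly the standard derivation.
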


\subsection{Convex Analysis}
We introduce, for all $n \geq 0$, the quantity $\xi_n \defin
\frac{1}{2}\E[\|\theta^\star-\theta_n\|_2^2]$, where $\theta^\star$ is a
minimizer of $f$ on~$\Theta$.  Our analysis also involves several quantities
that are defined recursively for all $n\geq 1$:
   \begin{equation}
   \left\{
      \begin{array}{rcl}
         A_n & \defin & (1-w_{n}) A_{n-1} + w_{n} \xi_{n-1} \\
         B_n & \defin & (1-w_{n}) B_{n-1} + w_{n} \E[f(\theta_{n-1})] \\
         C_n & \defin & (1-w_{n}) C_{n-1} + \frac{(R w_{n})^2}{2\rho}  \\
   \barg_{n} & \defin & (1-w_{n})\barg_{n-1} + w_{n} g_{n}  \\
   \barf_{n} & \defin & (1-w_{n})\barf_{n-1} + w_{n} f_{n}  
         \end{array}
       \right., \label{eq:Bn}
   \end{equation}
   where $A_0 \defin \frac{1}{L}(\rho \xi_0- f^\star)$, $B_0 \defin 0$, $C_0 \defin 0$, $\barg_0 = \barf_0 \defin \theta \mapsto \frac{\rho}{2}\|\theta-\theta_0\|_2^2$.
   Note that $\barg_0$ is $\rho$-strongly convex, and is minimized
   by~$\theta_0$. The choice for $A_0,B_0,C_0$ is driven by technical reasons,
   which appear in the proof of Lemma~\ref{lemma:surrogates}, a stochastic
   version of Lemma~\ref{lemma:basic}.

   Note that we also assume here that all the expectations above are well
   defined and finite-valued. In other words, we do not deal with measurability
   or integrability issues for simplicity, as often done in the literature~\cite{nemirovski}.

\begin{lemma}[\bfseries Auxiliary Lemma for Convex Analysis]~\label{lemma:aux}\newline
   When the functions $f_n$ are convex, and the surrogates $g_n$ are in $\S_{L,\rho}(f_n,\theta_{n-1})$, we have under assumption~\textbf{(A)} that for all $n \geq 1$,
   \begin{displaymath}
      \barg_n(\theta_{n-1}) \leq \barg_n(\theta_{n}) + \frac{ (R w_n)^2}{2\rho}.
   \end{displaymath}
\end{lemma}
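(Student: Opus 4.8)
The plan is to control the (non-negative) gap $D_n \defin \barg_n(\theta_{n-1}) - \barg_n(\theta_n)$ by expanding $\barg_n$ through its recursive definition, bounding each piece with strong convexity, and then completing a square in $t \defin \|\theta_{n-1}-\theta_n\|_2$. The starting point is the identity
\[
\barg_n(\theta_{n-1}) - \barg_n(\theta_n) = (1-w_n)\bigl(\barg_{n-1}(\theta_{n-1}) - \barg_{n-1}(\theta_n)\bigr) + w_n\bigl(g_n(\theta_{n-1}) - g_n(\theta_n)\bigr),
\]
which follows directly from $\barg_n = (1-w_n)\barg_{n-1} + w_n g_n$.

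For the first term, I would observe that every $\barg_{n-1}$ is $\rho$-strongly convex (a trivial induction: $\barg_0$ is, and a convex combination of $\rho$-strongly convex functions is $\rho$-strongly convex), and that $\theta_{n-1}$ is a minimizer of $\barg_{n-1}$ over $\Theta$ (by construction when $n=1$, and by the update rule of Algorithm~\ref{alg:stochastic} when $n\ge 2$). The second-order growth property, Lemma~\ref{lemma:second}, then gives $\barg_{n-1}(\theta_{n-1}) - \barg_{n-1}(\theta_n) \le -\tfrac{\rho}{2}t^2$. For the second term, since $g_n \in \S_{L,\rho}(f_n,\theta_{n-1})$, Lemma~\ref{lemma:basic} gives $\nabla(g_n-f_n)(\theta_{n-1})=0$, so Lemma~\ref{lemma:subdiff} (both $f_n$ and $g_n$ are convex) yields $\partial g_n(\theta_{n-1}) = \partial f_n(\theta_{n-1})$; by assumption \textbf{(A)}, any $z$ in this common subdifferential satisfies $\|z\|_2 \le R$. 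Applying the strongly convex lower bound, Lemma~\ref{lemma:lower}, to $g_n$ at $\kappa=\theta_{n-1}$ and $\theta=\theta_n$ and rearranging gives $g_n(\theta_{n-1}) - g_n(\theta_n) \le \|z\|_2\,t - \tfrac{\rho}{2}t^2 \le Rt - \tfrac{\rho}{2}t^2$.

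Substituting both bounds into the identity, the two $-\tfrac{\rho}{2}t^2$ contributions combine with weights $(1-w_n)$ and $w_n$ into a single $-\tfrac{\rho}{2}t^2$, so that $\barg_n(\theta_{n-1}) - \barg_n(\theta_n) \le w_n R\,t - \tfrac{\rho}{2}t^2$. Maximizing this quadratic in $t$ over $t\ge 0$ (the maximum is attained at $t = w_nR/\rho$) yields exactly $\tfrac{(Rw_n)^2}{2\rho}$, which is the claimed inequality.

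The only real subtlety — the point where a naive argument loses a constant factor — is precisely this completion of the square: if one instead bounded $g_n(\theta_{n-1})-g_n(\theta_n)$ crudely by $Rt$ and separately derived $t \le 2Rw_n/\rho$, the resulting constant would be four times too large. The key is to retain the strong-convexity quadratic terms from \emph{both} $\barg_{n-1}$ and $g_n$ rather than discarding the one coming from $\barg_{n-1}$. A minor care point is that $\theta_{n-1}$ is only a \emph{constrained} minimizer of $\barg_{n-1}$ over $\Theta$, which is why Lemma~\ref{lemma:second} (rather than a first-order optimality argument valid only in the interior) is the right tool.
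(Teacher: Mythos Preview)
Your proof is correct and follows essentially the same route as the paper's: identify $\partial g_n(\theta_{n-1})=\partial f_n(\theta_{n-1})$ via Lemma~\ref{lemma:subdiff}, apply the strongly convex lower bounds of Lemmas~\ref{lemma:second} and~\ref{lemma:lower} to $\barg_{n-1}$ and $g_n$ respectively, combine the two $-\tfrac{\rho}{2}t^2$ terms, use Cauchy--Schwarz, and complete the square. The only cosmetic difference is that the paper writes the chain as a lower bound on $\barg_n(\theta_n)$ while you write it as an upper bound on the gap $\barg_n(\theta_{n-1})-\barg_n(\theta_n)$.
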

\begin{proof}
First, we remark that the subdifferentials of $g_n$ and $f_n$ at
$\theta_{n-1}$ coincide by applying Lemma~\ref{lemma:subdiff}.
Then, we choose $z_n$ in $\partial g_n(\theta_{n-1}) = \partial f_n(\theta_{n-1})$, which is bounded by $R$ according to assumption~\textbf{(A)}, and we have
\begin{displaymath}
   \begin{split}
      \barg_n(\theta_n) & = (1-w_n)\barg_{n-1}(\theta_n) + w_n g_n(\theta_n) \\
                        & \geq (1\!-\!w_n)\left(\barg_{n-1}(\theta_{n-1}) \!+\! \frac{\rho}{2} \|\theta_n\!-\!\theta_{n-1}\|_2^2 \right) \!+\! w_n \left( g_n(\theta_{n-1}) \!+\! z_n^\top (\theta_n\!-\!\theta_{n-1}) \!+\!  \frac{\rho}{2} \|\theta_n\!-\!\theta_{n-1}\|_2^2\right) \\
                        & = \barg_n(\theta_{n-1}) + w_n z_n^\top (\theta_n-\theta_{n-1}) +  \frac{\rho}{2} \|\theta_n-\theta_{n-1}\|_2^2 \\
                        & \geq \barg_n(\theta_{n-1}) -R w_n \|\theta_n-\theta_{n-1}\|_2 +  \frac{\rho}{2} \|\theta_n-\theta_{n-1}\|_2^2 \\
                        & \geq \barg_n(\theta_{n-1}) - \frac{ (R w_n)^2}{2\rho}. 
   \end{split}
\end{displaymath}
The first inequality uses Lemma~\ref{lemma:second} and Lemma~\ref{lemma:lower} since $g_n$ is $\rho$-strongly convex by definition (and by induction $\barg_n$ is $\rho$-strongly convex as well); the second inequality uses Cauchy-Schwarz's inequality and the fact that the subgradients of the functions $f_n$ are bounded by $R$. 
\end{proof}

\begin{lemma}[\bfseries Another Auxiliary Lemma for Convex Analysis]~\label{lemma:stability4}\newline
   When the functions $f_n$ are convex, and the surrogates $g_n$ are in $\S_{L,\rho}(f_n,\theta_{n-1})$, we have under assumption~\textbf{(A)} that for all $n \geq 0$,
   \begin{equation}
     B_n \leq  \E[\barg_n(\theta_n)] + C_n, \label{eq:expect}
   \end{equation}
\end{lemma}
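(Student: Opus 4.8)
The plan is to prove the inequality by induction on $n$, using Lemma~\ref{lemma:aux} as the engine of the inductive step. For the base case $n=0$, all three quantities vanish: $B_0 = 0$, $C_0 = 0$, and $\E[\barg_0(\theta_0)] = \E[\tfrac{\rho}{2}\|\theta_0-\theta_0\|_2^2] = 0$, so \refEq{eq:expect} holds with equality. The bulk of the work is the inductive step, where I assume $B_{n-1} \leq \E[\barg_{n-1}(\theta_{n-1})] + C_{n-1}$ and aim to establish the same at index $n$.

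For the inductive step, I would first invoke Lemma~\ref{lemma:aux} (which applies since $\barg_{n-1}$, hence $\barg_n$, is $\rho$-strongly convex by induction, and assumption~\textbf{(A)} holds) to get $\barg_n(\theta_{n-1}) \leq \barg_n(\theta_n) + \tfrac{(Rw_n)^2}{2\rho}$, i.e. $\E[\barg_n(\theta_n)] \geq \E[\barg_n(\theta_{n-1})] - \tfrac{(Rw_n)^2}{2\rho}$. Next I expand the left-hand side using the recursion $\barg_n = (1-w_n)\barg_{n-1} + w_n g_n$ evaluated at $\theta_{n-1}$, together with the interpolation property $g_n(\theta_{n-1}) = f_n(\theta_{n-1})$ from Definition~\ref{def:surrogate_batch}, which gives $\barg_n(\theta_{n-1}) = (1-w_n)\barg_{n-1}(\theta_{n-1}) + w_n f_n(\theta_{n-1})$. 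Taking expectations and using that $\x_n$ is drawn independently of $\theta_{n-1}$ (which is $\FF_{n-1}$-measurable), the tower property yields $\E[f_n(\theta_{n-1})] = \E\big[\E[\ell(\x_n,\theta_{n-1})\mid\FF_{n-1}]\big] = \E[f(\theta_{n-1})]$.

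Combining these facts gives $\E[\barg_n(\theta_n)] \geq (1-w_n)\E[\barg_{n-1}(\theta_{n-1})] + w_n\E[f(\theta_{n-1})] - \tfrac{(Rw_n)^2}{2\rho}$. Now I apply the induction hypothesis in the form $\E[\barg_{n-1}(\theta_{n-1})] \geq B_{n-1} - C_{n-1}$ (legitimate because $1-w_n \geq 0$ as $w_n \in (0,1]$), obtaining $\E[\barg_n(\theta_n)] \geq (1-w_n)B_{n-1} + w_n\E[f(\theta_{n-1})] - (1-w_n)C_{n-1} - \tfrac{(Rw_n)^2}{2\rho}$. Reading off the recursions in \refEq{eq:Bn}, the first two terms are exactly $B_n$ and the last two are exactly $-C_n$, so $\E[\barg_n(\theta_n)] \geq B_n - C_n$, which rearranges to \refEq{eq:expect}. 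The only delicate point is the measurability/integrability step identifying $\E[f_n(\theta_{n-1})]$ with $\E[f(\theta_{n-1})]$; this is exactly the kind of issue the text says it sets aside by assuming all relevant expectations are well defined and finite, so it should not pose a real obstacle here.
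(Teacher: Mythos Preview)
Your proof is correct and follows essentially the same approach as the paper's own proof: induction on $n$, with the inductive step combining the recursion for $\barg_n$, the interpolation identity $g_n(\theta_{n-1})=f_n(\theta_{n-1})$, the tower property $\E[f_n(\theta_{n-1})]=\E[f(\theta_{n-1})]$, Lemma~\ref{lemma:aux}, and the recursions~(\ref{eq:Bn}). The only cosmetic difference is that the paper starts from $B_n$ and chains inequalities upward to $\E[\barg_n(\theta_n)]+C_n$, whereas you start from $\E[\barg_n(\theta_n)]$ and lower-bound it by $B_n-C_n$.
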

\begin{proof}
We proceed by induction, and start by showing that Eq.~(\ref{eq:expect}) is
true for $n=0$.
\begin{displaymath}
   B_0 = 0 = \E[\barg_0(\theta_0)] = \E[\barg_0(\theta_0)] + C_0.
\end{displaymath}
Let us now assume that it is true for $n-1$, and show that it is true for $n$. 
\begin{displaymath}
   \begin{split}
      B_{n} & = (1-w_n)B_{n-1} + w_n \E[f(\theta_{n-1})] \\
            & \leq (1-w_n)(\E[\barg_{n-1}(\theta_{n-1})] + C_{n-1}) + w_n \E[f(\theta_{n-1})] \\
              & = (1-w_n)(\E[\barg_{n-1}(\theta_{n-1})] + C_{n-1}) + w_n \E[f_n(\theta_{n-1})] \\
              & = (1-w_n)(\E[\barg_{n-1}(\theta_{n-1})] + C_{n-1}) + w_n \E[g_n(\theta_{n-1})] \\
              & = \E[\barg_{n}(\theta_{n-1})] + (1-w_n)C_{n-1} \\
              & \leq \E[\barg_{n}(\theta_{n})] + \frac{(R w_n)^2}{2\rho}+ (1-w_n)C_{n-1} \\
              & = \E[\barg_{n}(\theta_{n})] + C_{n}.
\end{split}
   \end{displaymath}
   The first inequality uses the induction hypothesis; the last inequality uses Lemma~\ref{lemma:aux} and the definition of~$C_n$.
We also used the fact that $\E[f_n(\theta_{n-1})] = \E[\E[f_n(\theta_{n-1})] | \FF_{n-1} ]] = \E[\E[f(\theta_{n-1})] | \FF_{n-1} ]] = \E[f(\theta_{n-1})]$,
where $\FF_{n-1}$ corresponds to the filtration induced by the past information before time $n$, such that $\theta_{n-1}$ is deterministic given~$\FF_{n-1}$.
\end{proof}

The next lemma is important; it is the stochastic version of Lemma~\ref{lemma:basic} for first-order surrogates.
\begin{lemma}[\bfseries Basic Properties of Stochastic First-Order Surrogates]~\label{lemma:surrogates}\newline
   When the functions $f_n$ are convex and the functions $g_n$ are in $\S_{L,\rho}(f_n,\theta_{n-1})$, we have under assumption~\textbf{(A)} that for all $n \geq 0$,
   \begin{displaymath}
      B_n \leq f^\star + L A_{n} - \rho \xi_n + C_n,
   \end{displaymath}
\end{lemma}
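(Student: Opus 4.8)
The plan is to reduce the claim, via Lemma~\ref{lemma:stability4}, to an upper bound on $\E[\barg_n(\theta_n)]$, and then to control $\barg_n(\theta_n)$ by comparing it with the value of the approximate surrogate at the optimum $\theta^\star$. Since Lemma~\ref{lemma:stability4} already gives $B_n \leq \E[\barg_n(\theta_n)] + C_n$, it suffices to prove that $\E[\barg_n(\theta_n)] \leq f^\star + L A_n - \rho \xi_n$ for all $n \geq 0$. This inequality is the stochastic/recursive analogue of the third bullet of Lemma~\ref{lemma:basic}.

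First I would establish the auxiliary bound $\E[\barg_n(\theta^\star)] \leq f^\star + L A_n$ by induction on $n$. The base case $n=0$ holds with equality: $\E[\barg_0(\theta^\star)] = \frac{\rho}{2}\E[\|\theta^\star-\theta_0\|_2^2] = \rho \xi_0$, and the choice $A_0 = \frac{1}{L}(\rho \xi_0 - f^\star)$ makes this equal to $f^\star + L A_0$. For the inductive step I expand $\barg_n(\theta^\star) = (1-w_n)\barg_{n-1}(\theta^\star) + w_n g_n(\theta^\star)$. For the new term I write $g_n = f_n + h_n$ with $h_n = g_n - f_n$, use the bound $|h_n(\theta^\star)| \leq \frac{L}{2}\|\theta^\star-\theta_{n-1}\|_2^2$ from Lemma~\ref{lemma:basic}, and use that $\theta^\star$ is deterministic so that $\E[f_n(\theta^\star)] = f(\theta^\star) = f^\star$ and $\E[\|\theta^\star-\theta_{n-1}\|_2^2] = 2\xi_{n-1}$; this yields $\E[g_n(\theta^\star)] \leq f^\star + L\xi_{n-1}$. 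Combining with the induction hypothesis $\E[\barg_{n-1}(\theta^\star)] \leq f^\star + L A_{n-1}$ and the recursion $A_n = (1-w_n)A_{n-1} + w_n \xi_{n-1}$ closes the induction.

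Next I would exploit strong convexity. By induction on the defining recursion, each $\barg_n$ is $\rho$-strongly convex (a convex combination of $\rho$-strongly convex functions, with $\barg_0$ $\rho$-strongly convex), and $\theta_n$ is its minimizer over the convex set $\Theta$. Applying the second-order growth property (Lemma~\ref{lemma:second}) to $\barg_n$ at $\theta^\star$ gives $\barg_n(\theta^\star) \geq \barg_n(\theta_n) + \frac{\rho}{2}\|\theta^\star-\theta_n\|_2^2$, hence $\E[\barg_n(\theta_n)] \leq \E[\barg_n(\theta^\star)] - \rho \xi_n \leq f^\star + L A_n - \rho \xi_n$. Plugging this into Lemma~\ref{lemma:stability4} gives $B_n \leq \E[\barg_n(\theta_n)] + C_n \leq f^\star + L A_n - \rho \xi_n + C_n$, which is the statement.

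I do not expect a serious obstacle: the two points needing care are the boundary term $A_0$, which is chosen precisely so that both inductions start with equality, and the elementary observation that $\theta^\star$ is a fixed (non-random) point, so expectations of $f_n$ and of $\|\theta^\star-\theta_{n-1}\|_2^2$ evaluate as above. Everything else is bookkeeping with the weighted recursions defining $A_n$, $B_n$, $C_n$, and the averaging identity $\barg_n = (1-w_n)\barg_{n-1} + w_n g_n$.
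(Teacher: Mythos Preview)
Your proposal is correct and follows essentially the same route as the paper: reduce via Lemma~\ref{lemma:stability4} to bounding $\E[\barg_n(\theta_n)]$, use the second-order growth property of the $\rho$-strongly convex $\barg_n$ to pass to $\E[\barg_n(\theta^\star)]-\rho\xi_n$, and then prove $\E[\barg_n(\theta^\star)]\leq f^\star + L A_n$ by induction using Lemma~\ref{lemma:basic} and the recursion for $A_n$. The only cosmetic difference is the order of presentation (the paper states the strong-convexity reduction first and the induction afterwards).
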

\begin{proof}
According to Lemma~\ref{lemma:stability4}, it is sufficient to show that
$\E[\barg_n(\theta_n)] \leq f^\star + L A_{n} - \rho \xi_n$ for all $n\geq 0$.
Since~$\barg_n$ is $\rho$-strongly convex and $\theta_n$ is the minimizer of $\barg_n$ over $\Theta$, we have $\E[\barg_n(\theta_n)]
\leq \E[\barg_n(\theta^\star)] - \rho \xi_n$, by using
Lemma~\ref{lemma:second}.  Thus, it is in fact sufficient to show that
$\E[\barg_n(\theta^\star)] \leq f^\star + LA_n$. For $n=0$, this inequality holds
since $\E[\barg_0(\theta^\star)] = \rho \xi_0 = f^\star + LA_0$.
We can then proceed again by induction: assume that 
$\E[\barg_{n-1}(\theta^\star)] \leq f^\star + LA_{n-1}$. Then,
\begin{displaymath}
   \begin{split}
      \E[\barg_{n}(\theta^\star)] & = (1-w_n)\E[\barg_{n-1}(\theta^\star)] + w_n \E[g_n(\theta^\star)] \\
                                  & \leq (1-w_n)(f^\star + LA_{n-1}) + w_n (\E[f_n(\theta^\star)] + L\xi_{n-1}) \\
                                  & = (1-w_n)(f^\star + LA_{n-1}) + w_n (f^\star + L\xi_{n-1}) \\
                                  & = f^\star + LA_{n},
   \end{split}
\end{displaymath}
where we have used Lemma~\ref{lemma:basic} to upper-bound the difference $\E[g_n(\theta^\star)]-\E[f_n(\theta^\star)]$ by $\xi_{n-1}$.
\end{proof}

For strongly-convex functions, we also have the following simple but useful relation between $A_n$ and~$B_n$. 
\begin{lemma}[\bfseries Relation between $A_n$ and $B_n$]~\label{lemma:AnBn}\newline
   Under assumption~{\bfseries (B)}, if $w_1=1$, we have for all $n \geq 1$,
   \begin{displaymath}
       f^\star + \mu A_n \leq B_n.
   \end{displaymath}
\end{lemma}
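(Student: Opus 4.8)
The plan is a straightforward induction on $n$, resting on the observation that the expected cost $f$ inherits $\mu$-strong convexity from the individual losses. First I would note that under assumption~\textbf{(B)} the function $f = \E_\x[\ell(\x,\cdot)]$ is an expectation of $\mu$-strongly convex functions, hence $\mu$-strongly convex itself, so \refLemma{lemma:second} applies to $f$ (whose minimizer over $\Theta$ is $\theta^\star$) and gives, after taking expectations, the key estimate
\begin{displaymath}
   \E[f(\theta_k)] \geq f^\star + \mu \xi_k \qquad \text{for all } k \geq 0 ,
\end{displaymath}
since $\xi_k = \tfrac12 \E[\|\theta^\star-\theta_k\|_2^2]$.

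For the base case $n=1$, the hypothesis $w_1=1$ is exactly what is needed: the recursions in \refEq{eq:Bn} then give $A_1 = \xi_0$ and $B_1 = \E[f(\theta_0)]$, so $f^\star + \mu A_1 \leq B_1$ is precisely the estimate above at $k=0$, independently of the technically-motivated initial value $A_0 = \frac{1}{L}(\rho \xi_0 - f^\star)$.

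For the induction step, assuming $f^\star + \mu A_{n-1} \leq B_{n-1}$, I would use that the weights lie in $(0,1]$, so $1-w_n \geq 0$ and $w_n \geq 0$, and combine the two pieces:
\begin{align*}
   B_n &= (1-w_n) B_{n-1} + w_n \E[f(\theta_{n-1})] \\
       &\geq (1-w_n)(f^\star + \mu A_{n-1}) + w_n (f^\star + \mu \xi_{n-1}) \\
       &= f^\star + \mu\big[(1-w_n) A_{n-1} + w_n \xi_{n-1}\big] = f^\star + \mu A_n,
\end{align*}
where the inequality uses the induction hypothesis on the first term and $\E[f(\theta_{n-1})] \geq f^\star + \mu \xi_{n-1}$ on the second, and the last equality is the definition of $A_n$ in \refEq{eq:Bn}.

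I do not expect any genuine obstacle here: the only points to keep track of are that strong convexity passes through the expectation defining $f$, and that the coefficients $1-w_n$ and $w_n$ are nonnegative so the convex-combination step preserves the inequality. The assumption $w_1=1$ plays the sole role of collapsing the base case onto the second-order growth property of $f$.
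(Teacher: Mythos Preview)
Your proof is correct and follows the same approach as the paper: induction on $n$, using the second-order growth property (\refLemma{lemma:second}) applied to the $\mu$-strongly convex expected cost $f$, with $w_1=1$ collapsing the base case. The paper's proof is terser (it handles the base case explicitly and then writes ``the rest follows by induction''), but your detailed induction step is exactly what is intended.
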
 
\begin{proof}
   This relation is true for $n=1$ since we have $f^\star + \mu A_1 = f^\star +  \mu \xi_0 \leq
   f(\theta_0) = B_1$ by applying Lemma~\ref{lemma:second}, since $f$ is $\mu$-strongly convex according to assumption~{\bfseries (B)}.
   The rest follows by induction.
\end{proof}

\subsection{Non-convex Analysis}
When the functions $f_n$ are not convex, the convergence analysis becomes more
involved.  One key tool we use is a uniform convergence result when the
function class $\{ \x \mapsto \ell(\theta,\x) : \theta \in \Theta \}$ is ``simple enough''  in terms of entropy.
Under the assumptions made in our paper, it is indeed possible to use some results
from empirical processes~\cite{vaart}, which provides us the following lemma.
\begin{lemma}[\bfseries Uniform Convergence]~\label{lemma:uniform1}\newline
   Under assumptions~{\bfseries (A)},  {\bfseries (C)}, and {\bfseries (D)}, we have the following uniform law of large numbers:
       \begin{equation}
          \E\left[  \sup_{ \theta \in \Theta} \left| \frac{1}{n} \sum_{i=1}^n f_i(\theta) - f(\theta)     \right| \right] \leq \frac{C}{\sqrt{n}}, \label{eq:uniform}
       \end{equation}
      where $C$ is a constant, and  $\sup_{ \theta \in \Theta} \left| \frac{1}{n} \sum_{i=1}^n f_i(\theta) - f(\theta)     \right|$ converges almost surely to zero.
\end{lemma}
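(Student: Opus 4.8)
The plan is to recognize the left-hand side as the expected supremum of an empirical process indexed by the function class $\FF \defin \{ \x \mapsto \ell(\x,\theta) : \theta \in \Theta \}$, and to show that under assumptions \textbf{(A)}, \textbf{(C)}, \textbf{(D)} this class is uniformly Glivenko--Cantelli with an $O(1/\sqrt n)$ rate in expectation. First I would apply the standard symmetrization inequality, bounding $\E[\sup_{\theta\in\Theta}|\frac1n\sum_{i=1}^n f_i(\theta)-f(\theta)|]$ by $2\,\E[\sup_{\theta\in\Theta}|\frac1n\sum_{i=1}^n \varepsilon_i f_i(\theta)|]$, where the $\varepsilon_i$ are i.i.d. Rademacher variables independent of the data; this reduces the task to controlling the Rademacher complexity of $\FF$.

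Next I would control that Rademacher complexity by a metric-entropy (covering-number) argument. Assumption \textbf{(A)} states that $\theta\mapsto\ell(\x,\theta)$ is $R$-Lipschitz uniformly in $\x$, and assumption \textbf{(C)} that $\Theta$ is compact, hence of finite diameter $D$; therefore an $\epsilon/R$-net of $\Theta$ in the $\ell_2$-norm induces an $\epsilon$-net of $\FF$ in the supremum norm, so that $N(\epsilon,\FF,\|\cdot\|_\infty)\leq (3RD/\epsilon)^p$ for $\epsilon\leq RD$, while assumption \textbf{(D)} supplies a uniform envelope $M$ for $\FF$. Plugging $\sqrt{\log N(\epsilon,\FF,\|\cdot\|)}=O(\sqrt{p\log(RD/\epsilon)})$ into Dudley's entropy integral, $\E_\varepsilon[\sup_\theta|\frac1n\sum_i\varepsilon_i f_i(\theta)|]\leq \frac{C'}{\sqrt n}\int_0^{M}\sqrt{\log N(\epsilon,\FF,\|\cdot\|)}\,d\epsilon$, and using that the integral converges because $\sqrt{\log(1/\epsilon)}$ is integrable near $0$, yields $\E[\sup_{\theta\in\Theta}|\frac1n\sum_i f_i(\theta)-f(\theta)|]\leq C/\sqrt n$ for a constant $C$ depending only on $R$, $D$, $M$, $p$.

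For the almost sure statement I would use concentration. By assumption \textbf{(D)} each $f_i$ is bounded by $M$, so the map $(\x_1,\dots,\x_n)\mapsto Z_n\defin\sup_{\theta\in\Theta}|\frac1n\sum_i f_i(\theta)-f(\theta)|$ changes by at most $2M/n$ when a single coordinate is modified. McDiarmid's bounded-difference inequality then gives $\PPP(Z_n-\E[Z_n]\geq t)\leq \exp(-n t^2/(2M^2))$ for every $t>0$. Choosing $t_n\propto\sqrt{\log n/n}$, so that $\sum_n \exp(-n t_n^2/(2M^2))<\infty$, and combining the Borel--Cantelli lemma with the expectation bound $\E[Z_n]\to 0$ established above, shows $Z_n\to 0$ almost surely.

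The main obstacle is the metric-entropy step: one must make precise that a Lipschitz reparametrization of a compact set has polynomially bounded covering numbers and that this bound survives passage to the supremum over $\x$, i.e., that $\FF$ is Donsker (or at least uniformly Glivenko--Cantelli) under exactly the hypotheses \textbf{(A)}, \textbf{(C)}, \textbf{(D)}; everything else is bookkeeping with universal constants. Alternatively, this entire argument can be shortcut by directly invoking a uniform law of large numbers for Lipschitz-parametrized classes from \cite{vaart}, which is the route the text suggests.
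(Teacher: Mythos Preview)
Your argument is correct. The paper's own proof is much shorter: it simply invokes Lemma~19.36 and Example~19.7 of~\cite{vaart}, noting that assumptions~\textbf{(C)} and~\textbf{(D)} provide the uniform boundedness and square-integrability needed there, and assumes measurability of the supremum. Your symmetrization/Dudley/McDiarmid route is precisely the machinery underlying those cited results, so you are not taking a different approach so much as unpacking the black box the paper defers to; you even note this in your last sentence. The benefit of your version is self-containment and an explicit dependence of $C$ on $R$, the diameter of~$\Theta$, $M$, and $p$; the paper's version is a one-line citation.
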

\begin{proof}
   We simply refer to Lemma 19.36 and Example 19.7 of~\cite{vaart}, where
   assumptions~{\bfseries (C)} and~{\bfseries (D)} ensure uniform boundness and
   squared integrability conditions.
   Note that we assume that the quantities $\sup_{ \theta \in \Theta} \left|
   \frac{1}{n} \sum_{i=1}^n f_i(\theta) - f(\theta) \right|$ are measurable.
   This assumption does not incur a loss of generality, since measurability
   issues for empirical
   processes can be dealt with rigorously~\cite{vaart}.
\end{proof}

The next lemma shows that uniform convergence applies to the weighted empirical risk $\barf_n$, defined in Eq.~(\ref{eq:Bn}), but with a different rate.
\begin{lemma}[\bfseries Uniform Convergence for $\barf_n$]~\label{lemma:uniform}\newline
   Under assumptions~{\bfseries (A)},  {\bfseries (C)}, {\bfseries (D)}, and {\bfseries (E)}, we have for all $n \geq 1$,
   \begin{displaymath}
      \E\left[  \sup_{ \theta \in \Theta} \left|  \barf_n(\theta) - f(\theta)     \right| \right] \leq C w_n \sqrt{n},
   \end{displaymath}
   where $C$ is the same as in Lemma~\ref{lemma:uniform1},
   and  $\sup_{ \theta \in \Theta} \left|  \barf_n(\theta) - f(\theta)     \right|$ converges almost surely to zero.
\end{lemma}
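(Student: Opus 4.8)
The plan is to represent $\barf_n - f$ as a (signed) combination of the \emph{ordinary} empirical averages $\frac1k T_k$, with $T_k \defin \sum_{i=1}^k f_i$, to which Lemma~\ref{lemma:uniform1} applies directly, and then to bootstrap the resulting bound in expectation into almost sure convergence by means of Lemma~\ref{lemma:converg2}.

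First I would unroll the recursion $\barf_n = (1-w_n)\barf_{n-1} + w_n f_n$. Since $w_1 = 1$ by assumption~(E), the initialization $\barf_0$ drops out after one step, giving $\barf_n = \sum_{i=1}^n w_n^i f_i$ with $w_n^i \defin w_i\prod_{j=i+1}^n(1-w_j)$ for $i<n$ and $w_n^n \defin w_n$; moreover $\sum_{i=1}^n w_n^i = 1$, so that also $\barf_n - f = \sum_{i=1}^n w_n^i(f_i - f)$. Writing $f_i - f = S_i - S_{i-1}$ with $S_k \defin k\big(\tfrac1k T_k - f\big)$ and $S_0 \defin 0$, a summation by parts gives
\[
   \barf_n - f \;=\; \sum_{k=1}^{n} c_{n,k}\Big(\tfrac1k T_k - f\Big), \qquad c_{n,k} \defin k(w_n^k - w_n^{k+1})\ \ (k<n),\quad c_{n,n} \defin n w_n,
\]
and one checks that $\sum_{k=1}^n c_{n,k} = 1$. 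Taking $\sup_\theta$, the triangle inequality and expectations, and then applying Lemma~\ref{lemma:uniform1} (which yields $\E[\sup_\theta|\tfrac1k T_k(\theta) - f(\theta)|] \le C/\sqrt k$ for every $k$, since the $f_i$ are i.i.d.) gives
\[
   \E\Big[\sup_{\theta\in\Theta}|\barf_n(\theta) - f(\theta)|\Big] \;\le\; C\sum_{k=1}^n \frac{|c_{n,k}|}{\sqrt k} \;=\; C\,w_n\sqrt n \;+\; C\sum_{k=1}^{n-1}\sqrt k\,\bigl|w_n^k - w_n^{k+1}\bigr|.
\]

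The main obstacle is to show that the residual sum satisfies $\sum_{k=1}^{n-1}\sqrt k\,|w_n^k - w_n^{k+1}| = O(w_n\sqrt n)$ using only assumption~(E); once this is established, the $c_{n,n}$ term already accounts for the claimed order $w_n\sqrt n$. The useful identity is $w_n^k - w_n^{k+1} = \bigl(\prod_{j=k+2}^n(1-w_j)\bigr)\bigl(w_k - w_{k+1} - w_k w_{k+1}\bigr)$, together with $w_k \ge w_{k+1}$ (monotonicity) and $\sum_k w_k^2\sqrt k < \infty$. The delicate point is that a crude bound $\prod_{j=k+2}^n(1-w_j) \le 1$ only yields $O(1)$ for the residual sum (by a second summation by parts and the Cauchy--Schwarz inequality, using $\sum_k w_k^2\sqrt k<\infty$), which does not suffice: one must keep the geometric-type decay $\prod_{j=k+2}^n(1-w_j) \le \exp\bigl(-\sum_{j=k+2}^n w_j\bigr)$, which damps the contributions of the indices $k \ll n$ and concentrates the sum on a window of the last $\approx 1/w_n$ indices, over which $\sqrt k \approx \sqrt n$. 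Carrying out this Laplace-type estimate rigorously is the technical heart of the lemma, and is where the strengthened condition $\sum w_n^2\sqrt n < \infty$ (rather than the usual $\sum w_n^2 < \infty$) is needed.

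Finally, for the almost sure statement I would invoke Lemma~\ref{lemma:converg2} with $X_n \defin \sup_{\theta\in\Theta}|\barf_n(\theta) - f(\theta)| \ge 0$, $b_n \defin C w_n\sqrt n$ (the bound just obtained) and $a_n \defin w_n$. Assumption~(E) gives $\sum_n a_n = \infty$ and $\sum_n a_n b_n = C\sum_n w_n^2\sqrt n < \infty$, and the recursion together with assumption~(D) gives, almost surely,
\[
   |X_{n+1} - X_n| \;\le\; \sup_{\theta\in\Theta}\bigl|\barf_{n+1}(\theta) - \barf_n(\theta)\bigr| \;=\; w_{n+1}\sup_{\theta\in\Theta}\bigl|f_{n+1}(\theta) - \barf_n(\theta)\bigr| \;\le\; 2M\,w_{n+1} \;\le\; 2M\,a_n,
\]
where we used $w_1=1$, so that for $n\ge1$ the function $\barf_n$ is a convex combination of $f_1,\dots,f_n$ and hence also bounded by $M$. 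Lemma~\ref{lemma:converg2} then yields $X_n \to 0$ almost surely, which is the asserted uniform convergence of $\barf_n$ to $f$.
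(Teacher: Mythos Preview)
Your treatment of the almost-sure part via Lemma~\ref{lemma:converg2} is essentially the paper's argument; nothing to add there.

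For the expectation bound, however, the paper takes a different and much shorter route. Instead of decomposing via the \emph{initial} partial sums $T_k=\sum_{i=1}^k f_i$, it uses the \emph{tail} averages
\[
   F_i \;\defin\; \frac{1}{n-i+1}\sum_{j=i}^n f_j,
\]
together with the Abel identity (setting $w_n^0\defin 0$)
\[
   \barf_n - f \;=\; \sum_{i=1}^n (w_n^i - w_n^{i-1})\,(n-i+1)\,(F_i - f).
\]
Since the $f_j$ are i.i.d., Lemma~\ref{lemma:uniform1} applies to each $F_i$ with rate $C/\sqrt{n-i+1}$. Taking the monotonicity $w_n^i \ge w_n^{i-1}$ as given, the coefficients are non-negative and telescope, while the factor $\sqrt{n-i+1}\le\sqrt{n}$ is bounded uniformly, yielding in one line
\[
   \E\Big[\sup_{\theta\in\Theta}|\barf_n(\theta)-f(\theta)|\Big]\;\le\; C\sqrt{n}\sum_{i=1}^n(w_n^i-w_n^{i-1}) \;=\; C\sqrt{n}\,w_n^n \;=\; C\,w_n\sqrt{n}.
\]
No Laplace-type estimate, no geometric decay of $\prod_j(1-w_j)$, is needed.

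Your decomposition via initial sums forces you to bound $\sum_{k<n}\sqrt{k}\,|w_n^k-w_n^{k+1}|$, where the varying $\sqrt{k}$ obstructs direct telescoping; this is why you are driven to the ``Laplace-type'' estimate, which you leave as the unfinished ``technical heart''. That is a genuine gap: as written, the bound is not established. Note, incidentally, that if you grant the same monotonicity the paper uses, your residual sum also collapses, since $\sum_{k<n}\sqrt{k}\,(w_n^{k+1}-w_n^k)\le \sqrt{n}\,(w_n-w_n^1)\le \sqrt{n}\,w_n$, giving $2C\,w_n\sqrt{n}$ without any exponential-decay argument. The tail-average decomposition is cleaner because the Abel boundary term vanishes ($w_n^0=0$) instead of producing the dominant contribution $n\,w_n$, and it recovers the constant $C$ exactly as stated in the lemma.
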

\begin{proof}
   We prove the two parts of the lemma separately. As in Lemma~\ref{lemma:uniform1}, we assume all the quantities of interest to be measurable.

   \proofstep{First part of the lemma}
   Let us fix $n > 0$. It is easy to show that $\barf_n$ can be written as $\barf_n = \sum_{i=1}^n w_n^i f_i$ for some non-negative weights~$w_n^i$ with $w_n^n = w_n$.
   Let us also define the empirical cost $F_i \defin \frac{1}{n-i+1}\sum_{j=i}^n f_j$.
   According to~(\ref{eq:uniform}), we have $\E\left[\sup_{\theta \in \Theta} | F_i(\theta)-f(\theta)|\right] \leq \frac{C}{\sqrt{n-i+1}}$.
   We now remark that
   \begin{displaymath}
      \barf_n -f = \sum_{i=1}^n (w_n^i-w_n^{i-1}) (n-i+1) (F_i-f),
   \end{displaymath}
   where we have defined $w_n^0 \defin 0$.
   This relation can be proved by simple calculation. We obtain the first part by using the triangle inequality, and the fact that $w_n^i \geq w_n^{i-1}$ for all $i$:
   \begin{displaymath}
      \begin{split}
         \E\left[\sup_{\theta \in \Theta}|\barf_n(\theta) -f(\theta)|\right] & \leq \E\left[  \sum_{i=1}^n (w_n^i-w_n^{i-1}) (n-i+1) \sup_{\theta \in \Theta}|F_i(\theta)-f(\theta)|\right] \\
                                                                             & = \sum_{i=1}^n (w_n^i-w_n^{i-1}) (n-i+1) \E\left[\sup_{\theta \in \Theta}|F_i(\theta)-f(\theta)|\right] \\
                                                                             & \leq \sum_{i=1}^n (w_n^i-w_n^{i-1}) C \sqrt{n-i+1} \\
                                                                             & \leq \sqrt{n}C \sum_{i=1}^n (w_n^i-w_n^{i-1}) \\
                                                                             & = C\sqrt{n}w_n.
      \end{split}
   \end{displaymath}
   This is unfortunately not sufficient to show that $\E\left[\sup_{\theta \in \Theta}|\barf_n(\theta) -f(\theta)|\right]$ converges to zero almost surely. We will show this fact by using Lemma~\ref{lemma:converg2}. 

   \proofstep{Second part of the lemma}
   We call $X_n = \sup_{\theta \in \Theta}|\barf_n(\theta) -f(\theta)|$.
   We have 
   \begin{displaymath}
      \begin{split}
         X_n-X_{n-1} & = \sup_{\theta \in \Theta}|(1-w_n)(\barf_{n-1}(\theta) -f(\theta)) + w_n(f_n(\theta)-f(\theta))| - X_{n-1} \\
                     & \leq \sup_{\theta \in \Theta} w_n| f_n(\theta)-f(\theta)| - w_n X_{n-1} \leq 2 M w_n
      \end{split}
   \end{displaymath}  
   Let us denote by $\theta^\star_n$ a point in $\Theta$ such that $X_n = |\barf_n(\theta^\star_n)-f(\theta^\star_n)|$.
   We also have
   \begin{displaymath}
      \begin{split}
         X_n-X_{n-1} & = \sup_{\theta \in \Theta}|(1-w_n)(\barf_{n-1}(\theta) -f(\theta)) + w_n(f_n(\theta)-f(\theta))| - X_{n-1} \\
                     & \geq (1-w_n) X_{n-1} + w_n (f_n(\theta^\star_{n-1})-f(\theta^\star_{n-1})) - X_{n-1} \\
                     & \geq -w_n X_{n-1} + w_n (f_n(\theta^\star_{n-1})-f(\theta^\star_{n-1})) \\
                     & \geq -w_n 4M,
      \end{split}
   \end{displaymath}  
   where we use again the fact that all functions $f_n$, $\barf_n$ and $f$ are bounded by $M$.
   Thus, we have shown that $|X_n-X_{n-1}| \leq 4M w_n$. Call $a_n=w_n$ and
   $b_n=w_n \sqrt{n}$, then the conditions of Lemma~\ref{lemma:converg2} are
   satisfied, and $X_n$ converges almost surely to zero.
\end{proof}

Finally, the next lemma illustrates why the strong convexity of the surrogates is important.
\begin{lemma}[\bfseries Stability of the Estimates]~\label{lemma:stability_simple}\newline
   When $g_n$ is in $\S_{L,\rho}(f,\theta_{n-1})$,
   \begin{displaymath}
      \|\theta_n-\theta_{n-1}\|_2 \leq \frac{2Rw_n}{\rho}.
   \end{displaymath}
\end{lemma}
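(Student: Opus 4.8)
The plan is to view $\theta_{n-1}$ and $\theta_n$ as the minimizers over $\Theta$ of the two consecutive approximate surrogates $\barg_{n-1}$ and $\barg_n$, and to bound their distance using, on one side, the $\rho$-strong convexity of $\barg_n$, and on the other side, the fact that $\barg_n-\barg_{n-1}=w_n(g_n-\barg_{n-1})$ is a ``small'' perturbation. First I would record that every $\barg_n$ is $\rho$-strongly convex: this holds for $\barg_0(\theta)=\tfrac{\rho}{2}\|\theta-\theta_0\|_2^2$ and is preserved by the convex combination $\barg_n=(1-w_n)\barg_{n-1}+w_n g_n$ since $w_n\in(0,1]$ and $g_n\in\S_{L,\rho}(f,\theta_{n-1})$ is $\rho$-strongly convex. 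Hence $\theta_n$ is the minimizer of the $\rho$-strongly convex function $\barg_n$ over $\Theta$, and Lemma~\ref{lemma:second} (second-order growth) gives
\begin{displaymath}
   \tfrac{\rho}{2}\|\theta_{n-1}-\theta_n\|_2^2 \leq \barg_n(\theta_{n-1})-\barg_n(\theta_n).
\end{displaymath}

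Next I would bound the right-hand side without differentiating anything. Expanding both occurrences of $\barg_n$ via the recursion,
\begin{displaymath}
  \barg_n(\theta_{n-1})-\barg_n(\theta_n) = (1-w_n)\bigl(\barg_{n-1}(\theta_{n-1})-\barg_{n-1}(\theta_n)\bigr) + w_n\bigl(g_n(\theta_{n-1})-g_n(\theta_n)\bigr),
\end{displaymath}
and the first parenthesis is nonpositive because $\theta_{n-1}$ minimizes $\barg_{n-1}$ over $\Theta$. It then remains only to control $g_n(\theta_{n-1})-g_n(\theta_n)$, for which I would use the two defining properties of a first-order surrogate together with the $R$-Lipschitz property of assumption~\textbf{(A)}: since $g_n(\theta_{n-1})=f(\theta_{n-1})$ and $g_n\geq f$, we get $g_n(\theta_{n-1})-g_n(\theta_n)\leq f(\theta_{n-1})-f(\theta_n)\leq R\|\theta_{n-1}-\theta_n\|_2$. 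Chaining these bounds yields $\tfrac{\rho}{2}\|\theta_{n-1}-\theta_n\|_2^2 \leq w_n R\,\|\theta_{n-1}-\theta_n\|_2$, and dividing by $\|\theta_{n-1}-\theta_n\|_2$ (the claim is trivial when this vanishes) gives exactly $\|\theta_{n-1}-\theta_n\|_2\leq 2Rw_n/\rho$.

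I do not anticipate a genuine obstacle; the only point needing a little care is ensuring the final constant involves the Lipschitz bound $R$ of the loss rather than the smoothness constant $L$ of the surrogate error $g_n-f$. The value-comparison route above achieves this automatically: it never touches $\nabla g_n$, $\nabla\barg_{n-1}$ or $\partial g_n(\theta_{n-1})$, so it sidesteps both the possible non-differentiability of the strongly convex surrogates and any need to identify $\partial g_n(\theta_{n-1})$ with $\partial f(\theta_{n-1})$ (the latter would require convexity of $f$, which is not assumed in the non-convex analysis). A slightly more pedestrian variant would instead pick $z_n\in\partial g_n(\theta_{n-1})$, bound $\|z_n\|_2\leq R$ using the surrogate identity and Cauchy--Schwarz, and invoke the strong-convexity lower bound of Lemma~\ref{lemma:lower} for $g_n$; this also works but is less clean.
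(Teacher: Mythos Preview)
Your proposal is correct and follows essentially the same route as the paper's proof: second-order growth of the $\rho$-strongly convex $\barg_n$ at its minimizer $\theta_n$, expansion via the recursion, dropping the $(1-w_n)(\barg_{n-1}(\theta_{n-1})-\barg_{n-1}(\theta_n))$ term by optimality of $\theta_{n-1}$, then the value comparison $g_n(\theta_{n-1})-g_n(\theta_n)\le f_n(\theta_{n-1})-f_n(\theta_n)\le R\|\theta_{n-1}-\theta_n\|_2$. The only cosmetic difference is that the paper writes $f_n$ where you write $f$ (the lemma statement's ``$f$'' is a slight abuse; in context it is $f_n$), and the alternative subgradient variant you sketch at the end is precisely the device used elsewhere in the paper (Lemma~\ref{lemma:aux}) for the convex analysis.
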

\begin{proof}
   Because the surrogates $g_n$ are $\rho$-strongly convex, we have from Lemma~\ref{lemma:second}
   \begin{displaymath}
      \begin{split}
         \frac{\rho}{2}\|\theta_n-\theta_{n-1}\|_2^2 & \leq \barg_n(\theta_{n-1})- \barg_n(\theta_n)  \\
                                                     & =  w_n\left( g_n(\theta_{n-1}) - g_n(\theta_n)\right) + (1-w_n)\left( \barg_{n-1}(\theta_{n-1}) - \barg_{n-1}(\theta_n) \right) \\
                                                     & \leq  w_n\left( g_n(\theta_{n-1}) - g_n(\theta_n)\right) \\
                                                     & \leq  w_n\left( f_n(\theta_{n-1}) - f_n(\theta_n)\right) \\
                                                     & \leq  R w_n\|\theta_n-\theta_{n-1}\|_2.
      \end{split}
   \end{displaymath}
   The second inequality comes from the fact that $\theta_{n-1}$ is a minimizer of $\barg_{n-1}$;
   the third inequality is because $g_n(\theta_{n-1})=f_n(\theta_{n-1})$ and $g_n \geq f_n$. This is sufficient to conclude.
\end{proof}

\section{Proofs of the Main Lemmas and Propositions}\label{appendix:proofs}
\printproofs

\section{Additional Experimental Results}
We present in Figures~\ref{fig:exp_l1b} and~\ref{fig:exp_l1c} some additional experimental
comparisons, which complement the ones of Section~\ref{sec:logistic}.
Figures~\ref{fig:exp_logc} and~\ref{fig:exp_logb} present additional plots from 
the experiment of Section~\ref{sec:dcprog}.
Finally, we present three dictionaries corresponding to the experiment of Section~\ref{sec:sparsecoding}
in Figures~\ref{fig:exp_sp2}, \ref{fig:exp_sp3} and \ref{fig:exp_sp4}.

\begin{figure}[hbtp]
   \centering
   \includegraphics[width=0.3253\linewidth]{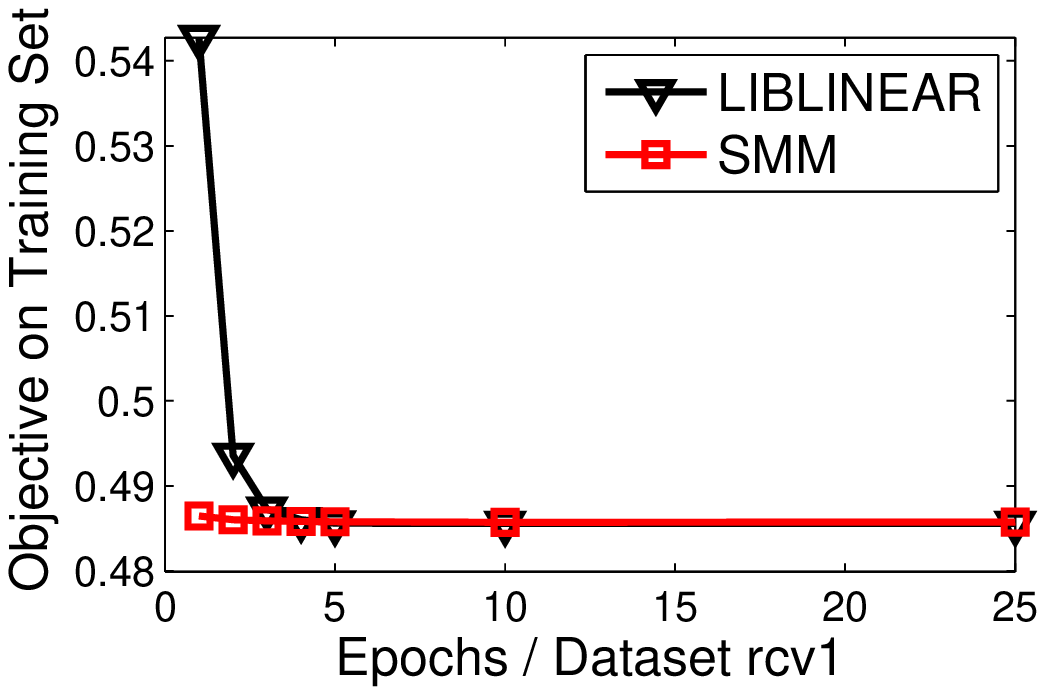}
   \includegraphics[width=0.3253\linewidth]{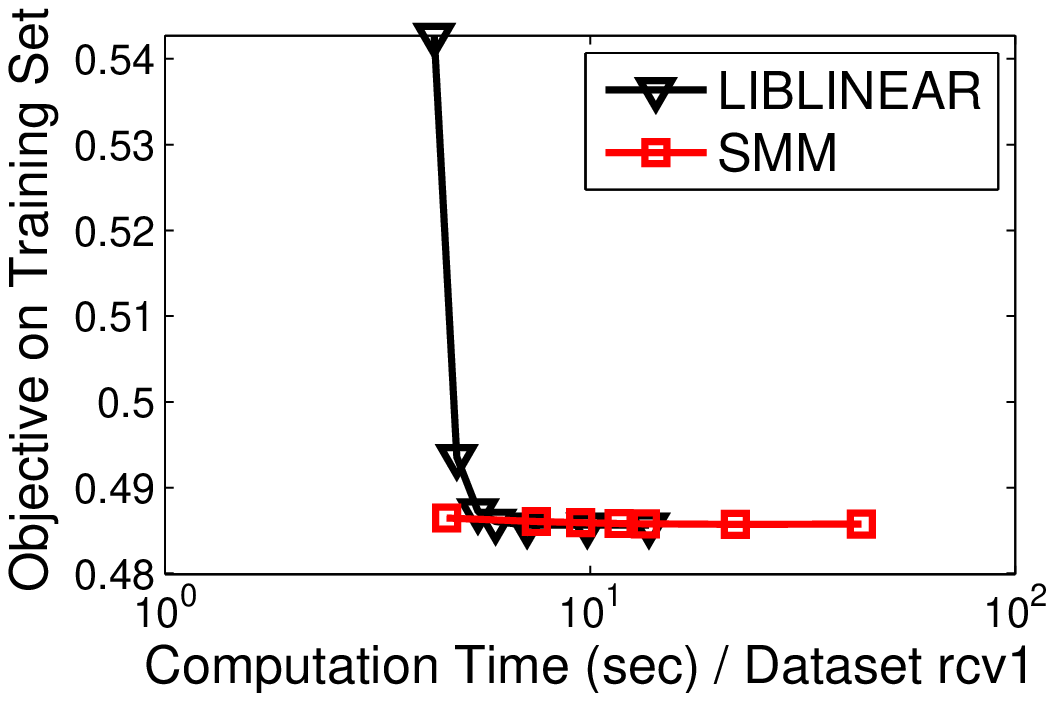}
   \includegraphics[width=0.3253\linewidth]{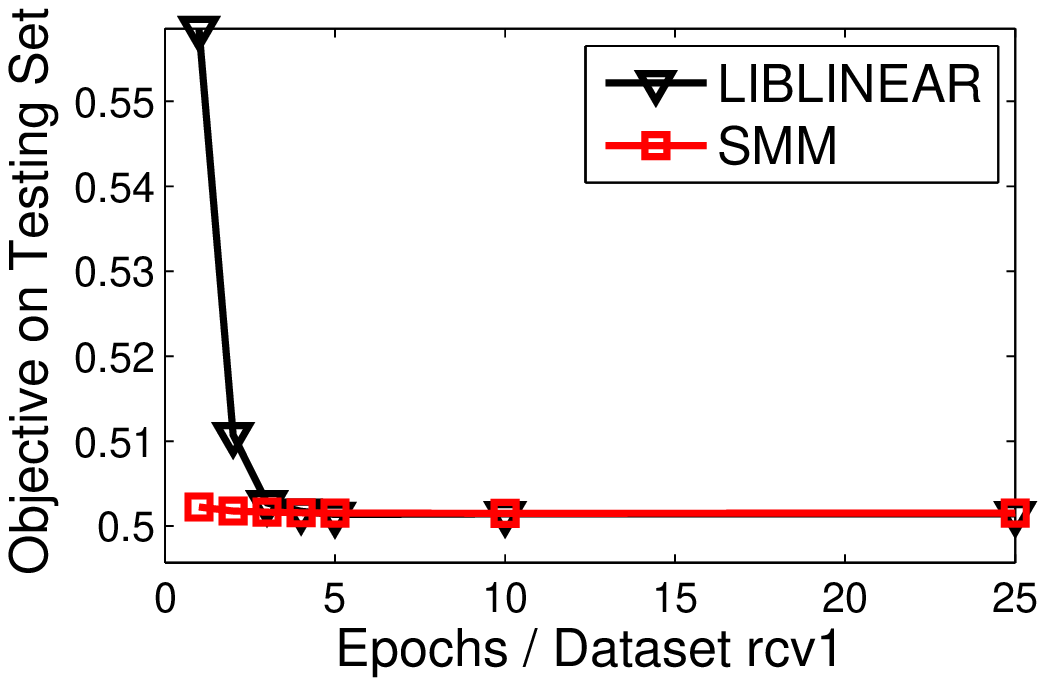}\\
   \includegraphics[width=0.3253\linewidth]{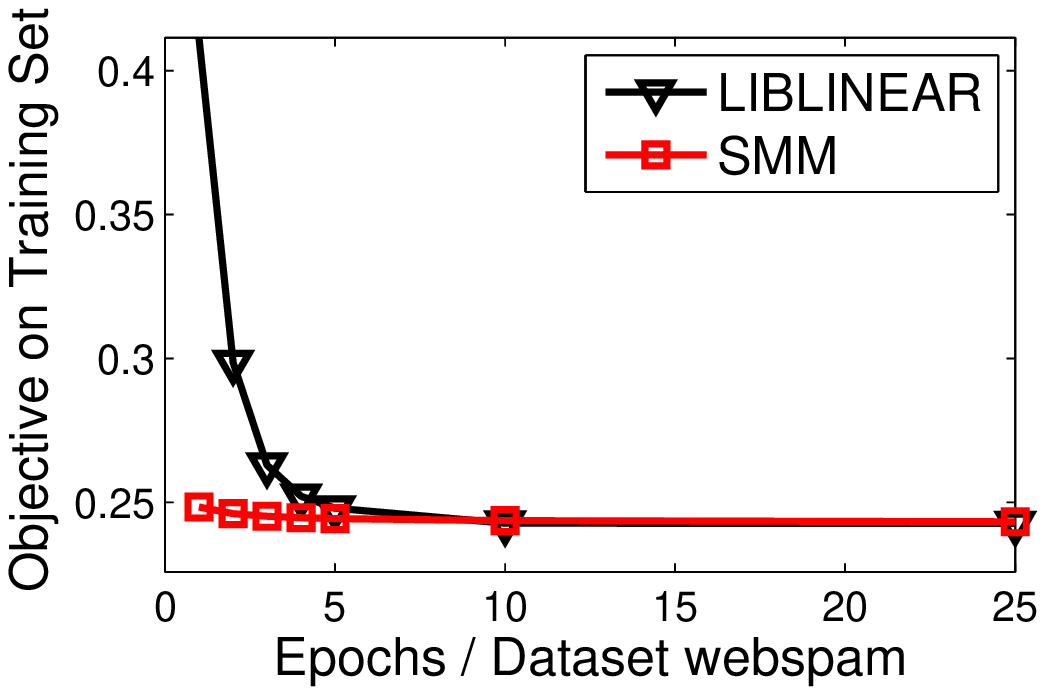}
   \includegraphics[width=0.3253\linewidth]{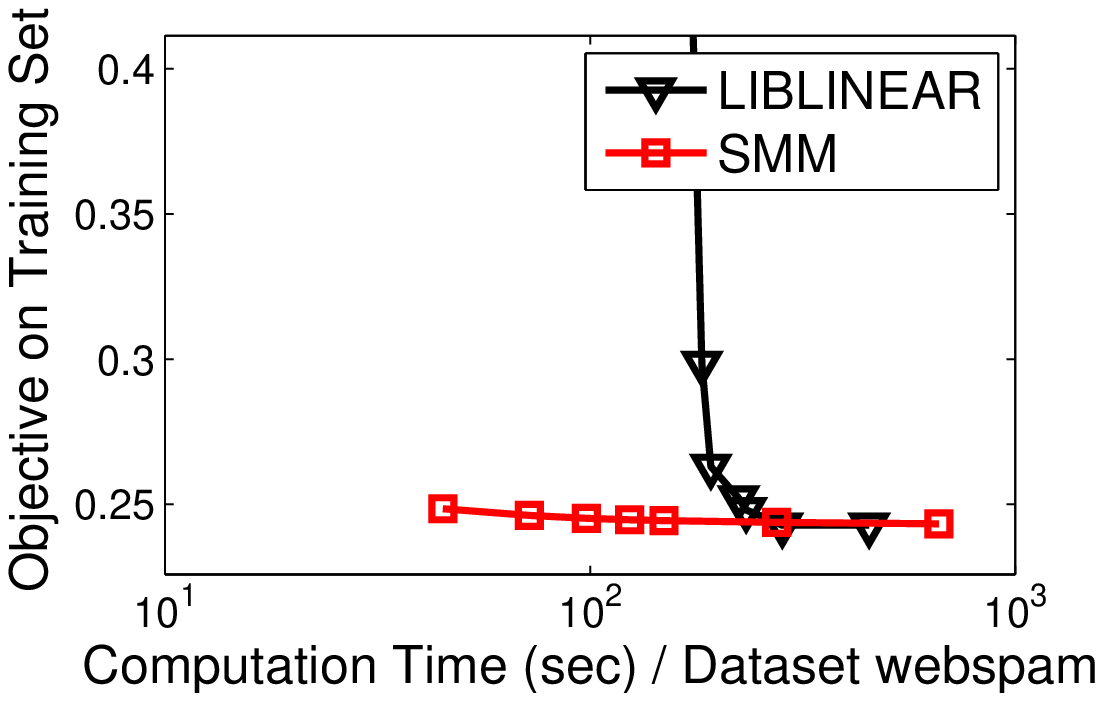}
   \includegraphics[width=0.3253\linewidth]{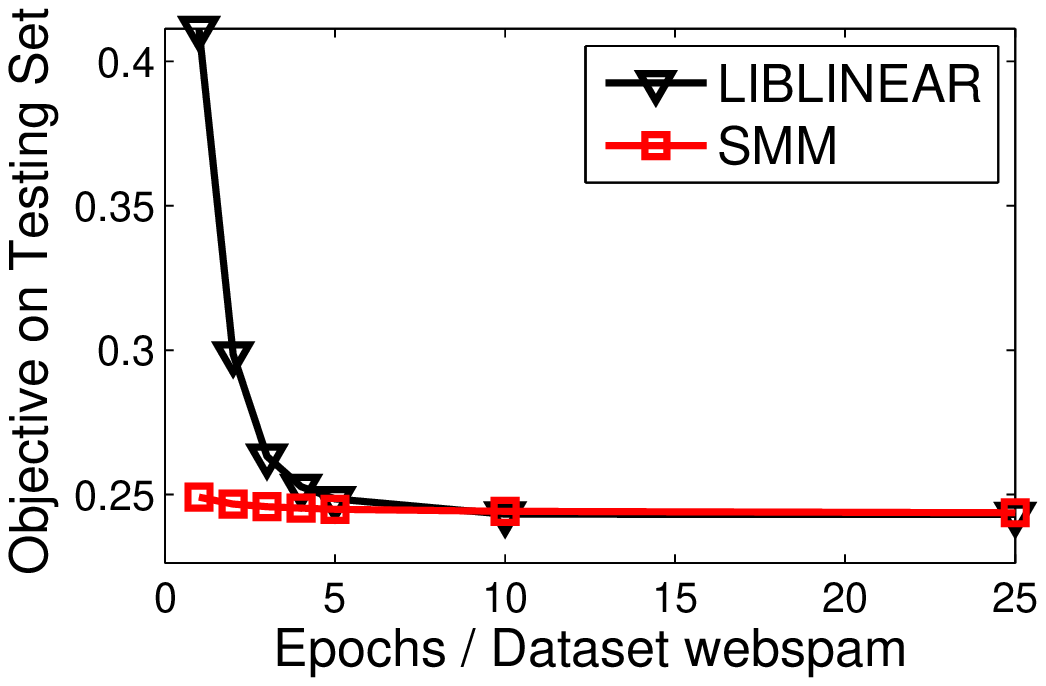}\\
   \includegraphics[width=0.3253\linewidth]{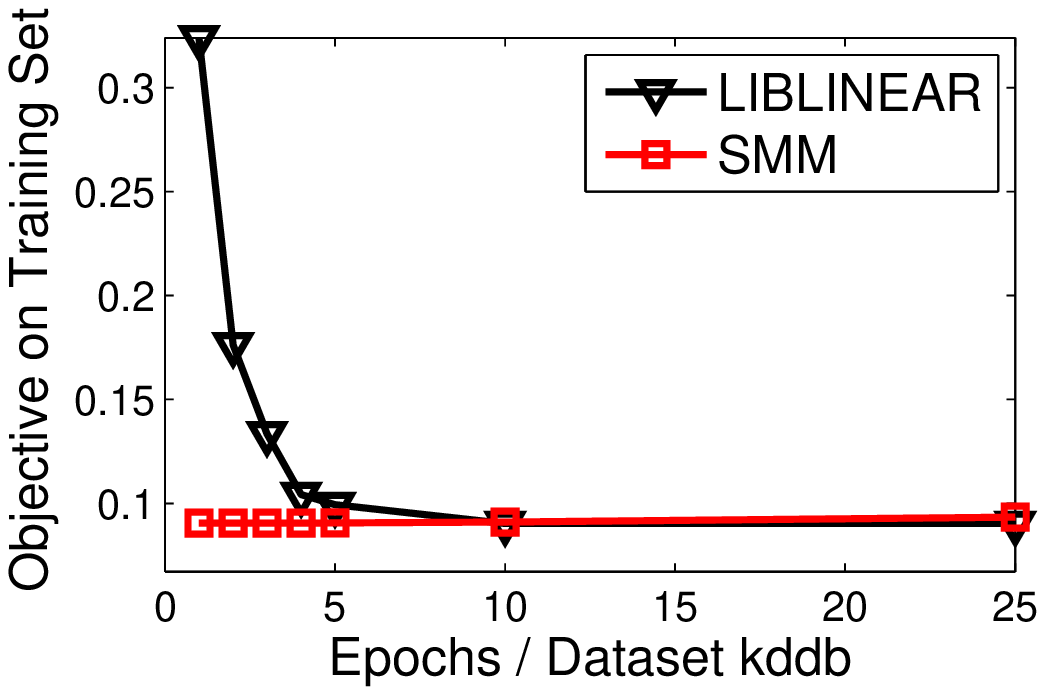}
   \includegraphics[width=0.3253\linewidth]{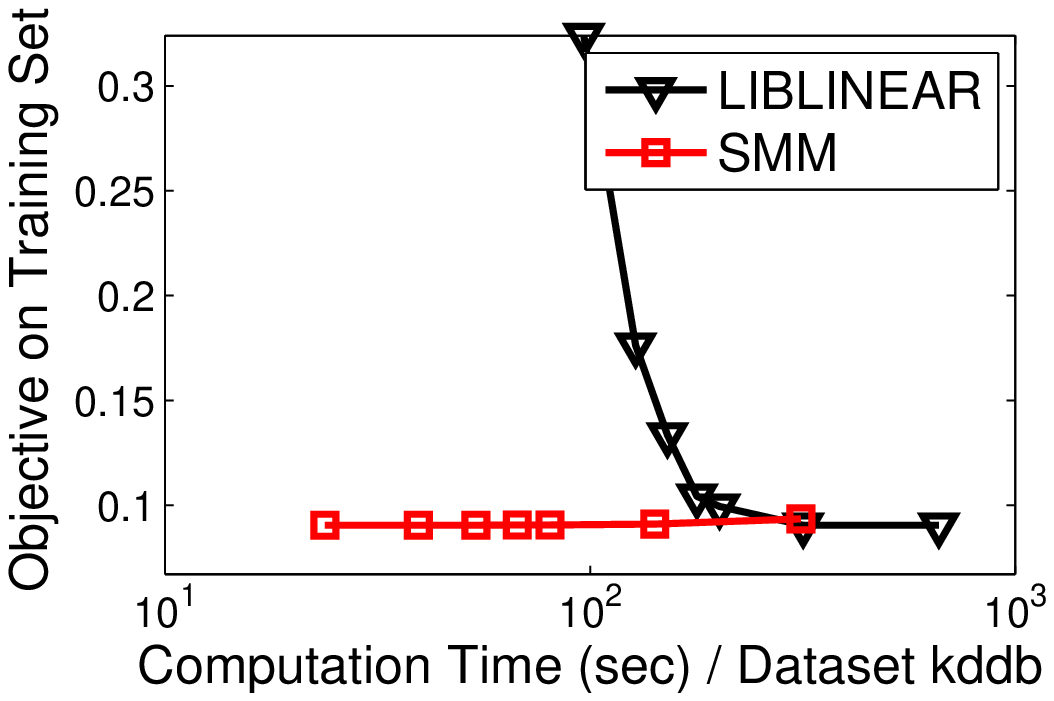}
   \includegraphics[width=0.3253\linewidth]{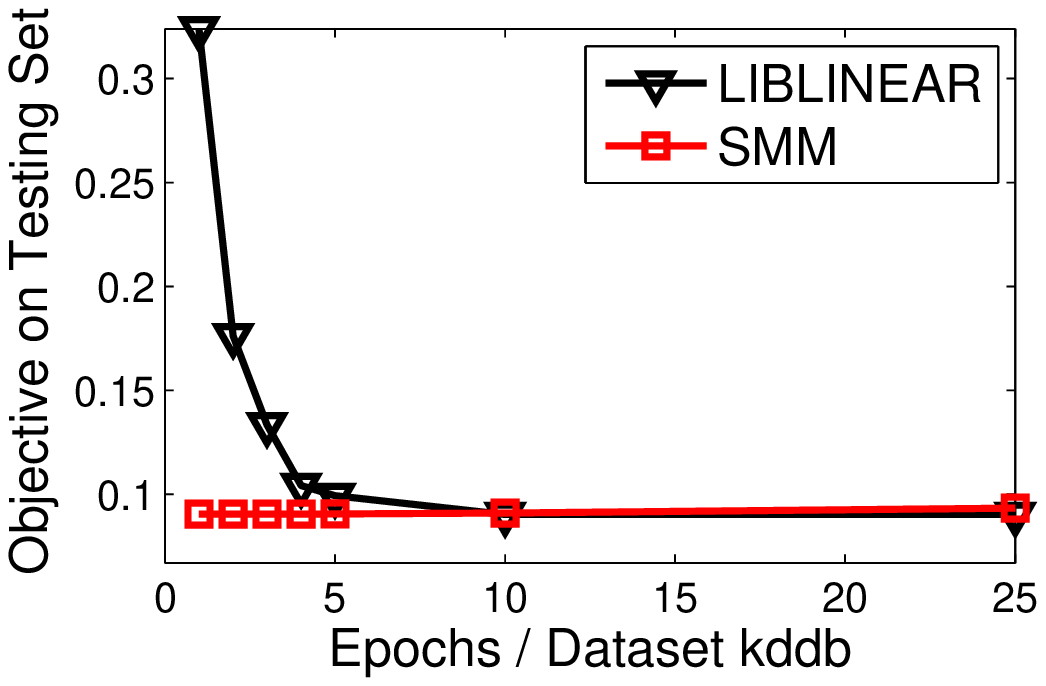}
   \caption{Comparison between LIBLINEAR and SMM in the high regularization regime for $\ell_1$-logistic regression. }
   \label{fig:exp_l1b}
\end{figure}

\begin{figure}[hbtp]
   \centering
   \includegraphics[width=0.3253\linewidth]{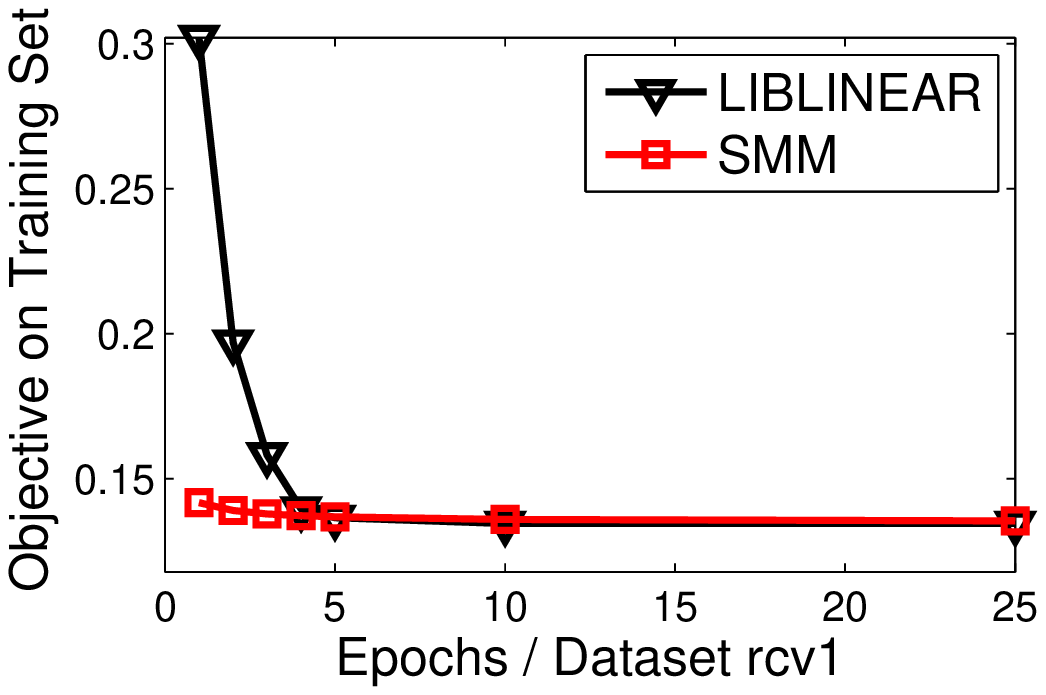}
   \includegraphics[width=0.3253\linewidth]{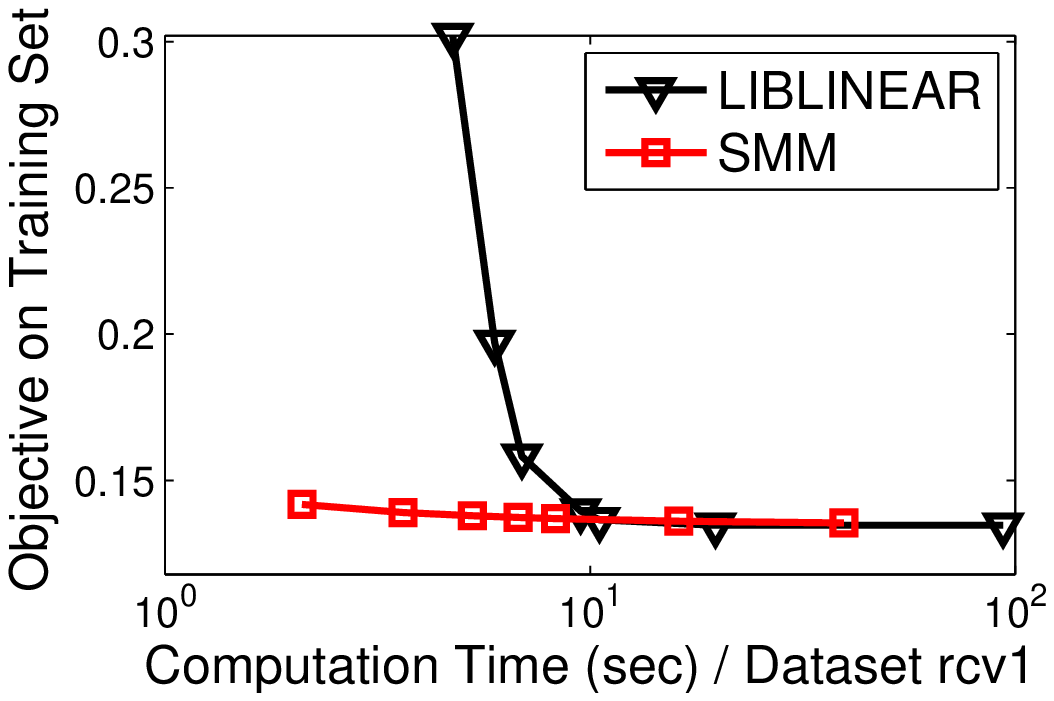}
   \includegraphics[width=0.3253\linewidth]{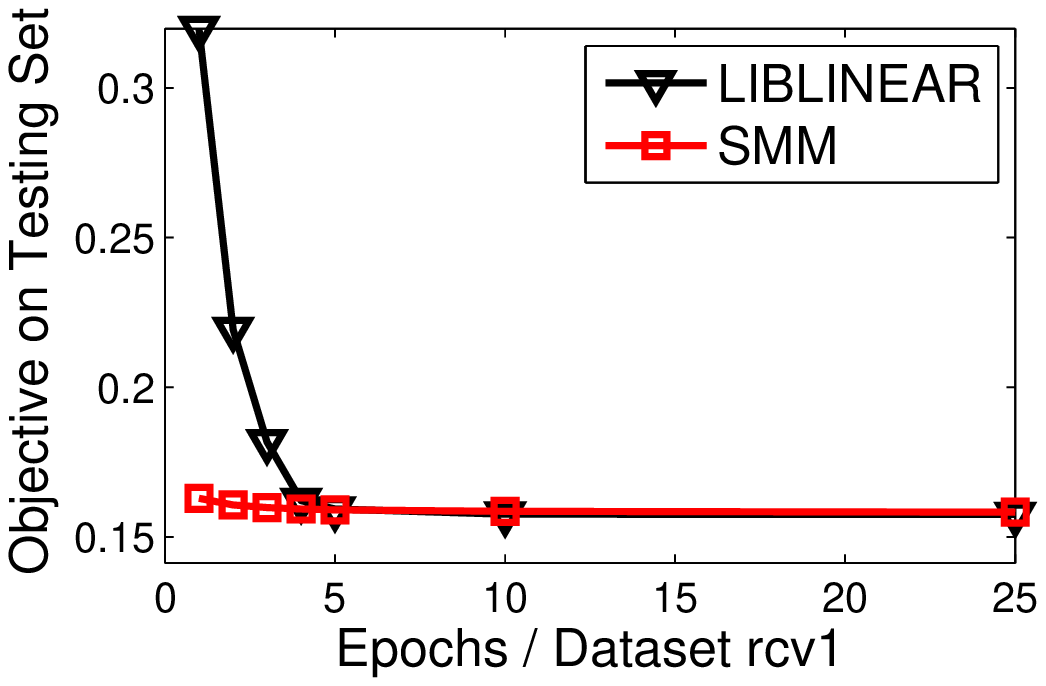}\\
   \includegraphics[width=0.3253\linewidth]{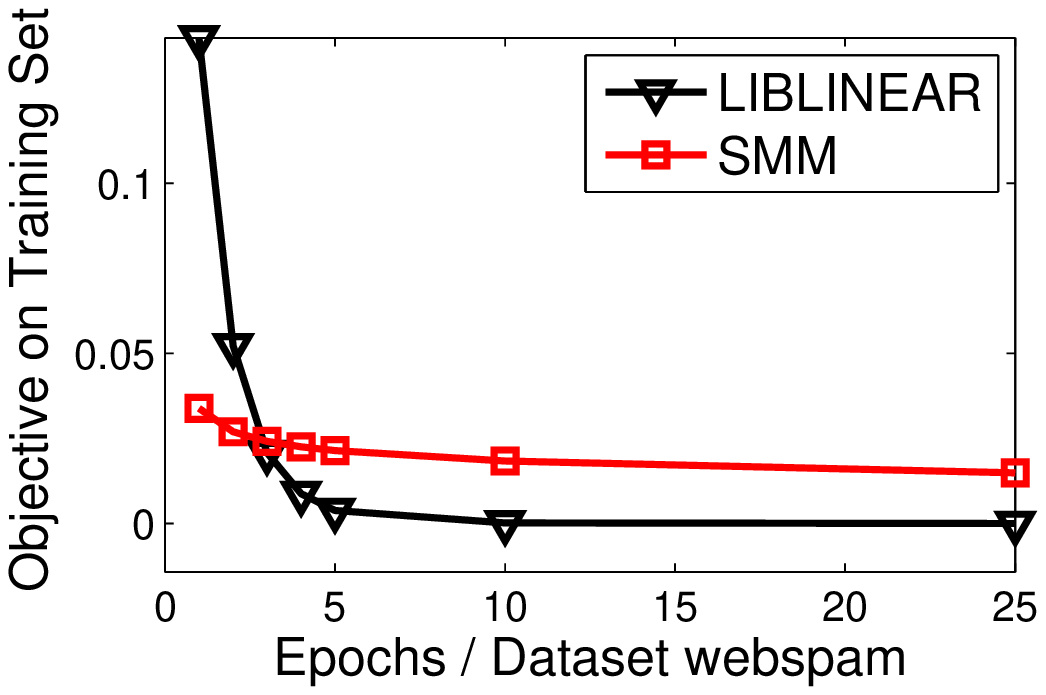}
   \includegraphics[width=0.3253\linewidth]{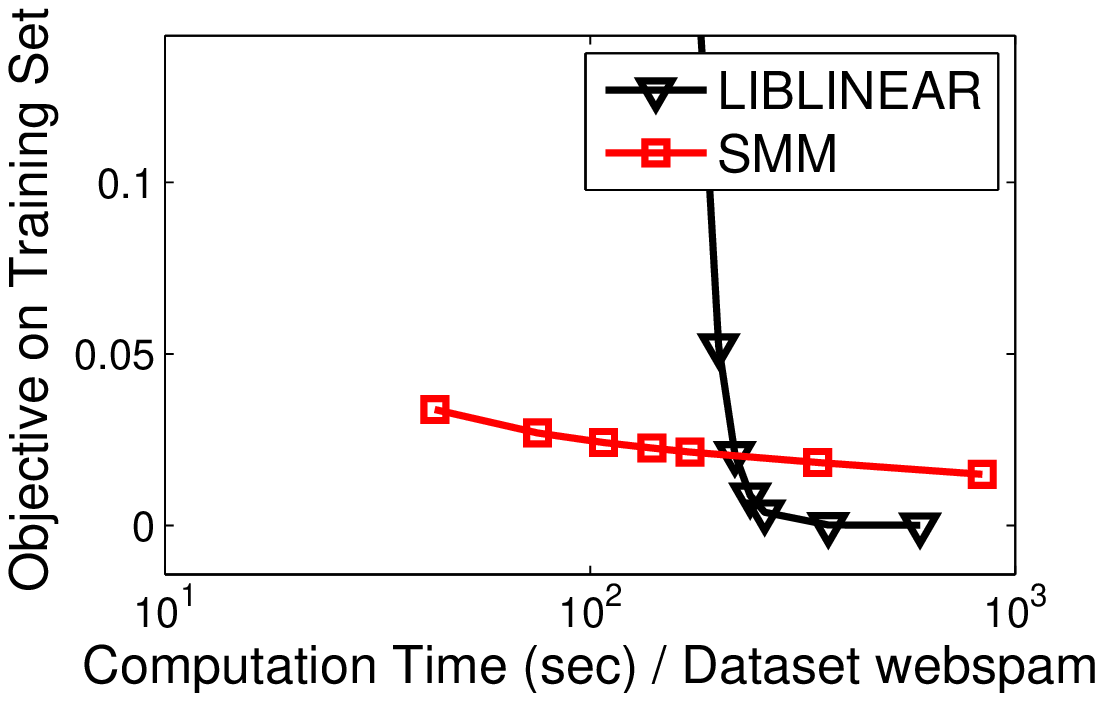}
   \includegraphics[width=0.3253\linewidth]{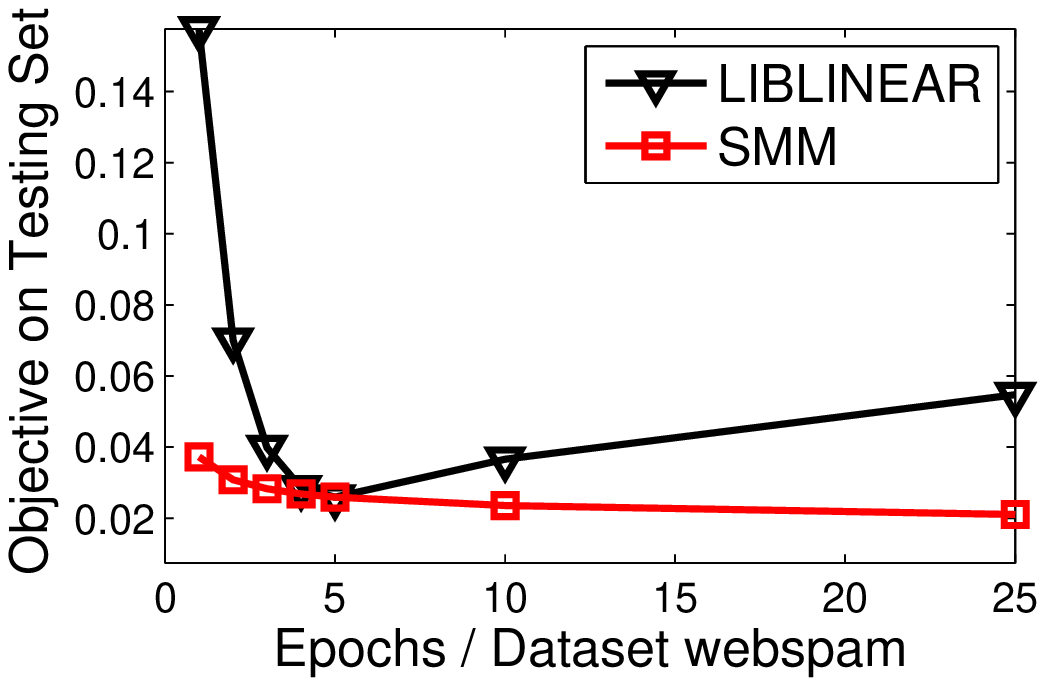}\\
   \includegraphics[width=0.3253\linewidth]{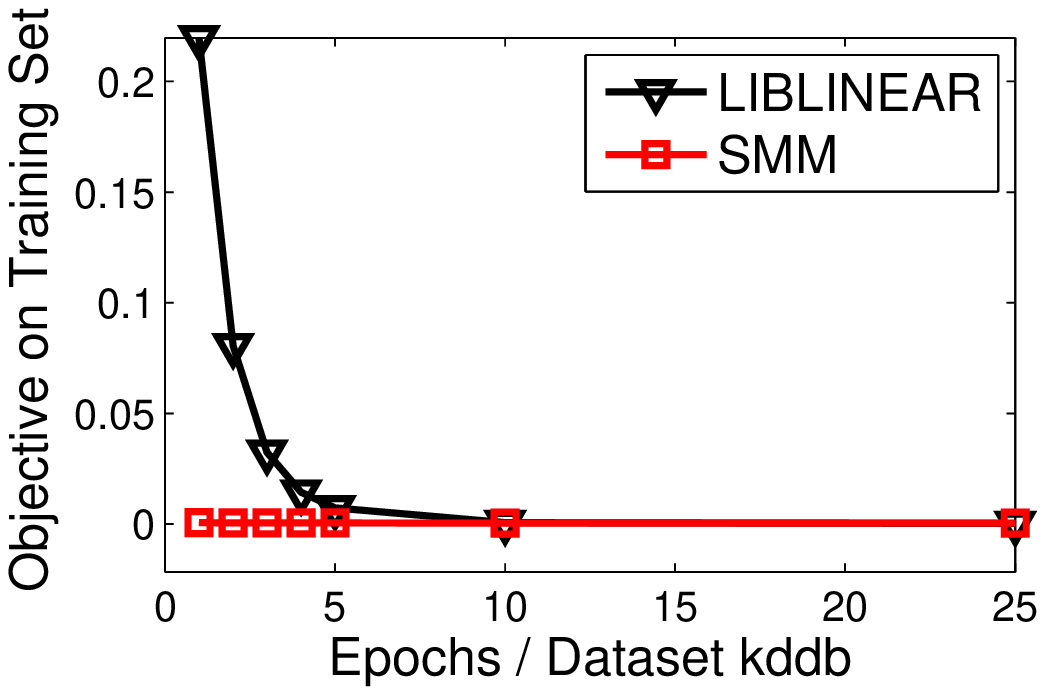}
   \includegraphics[width=0.3253\linewidth]{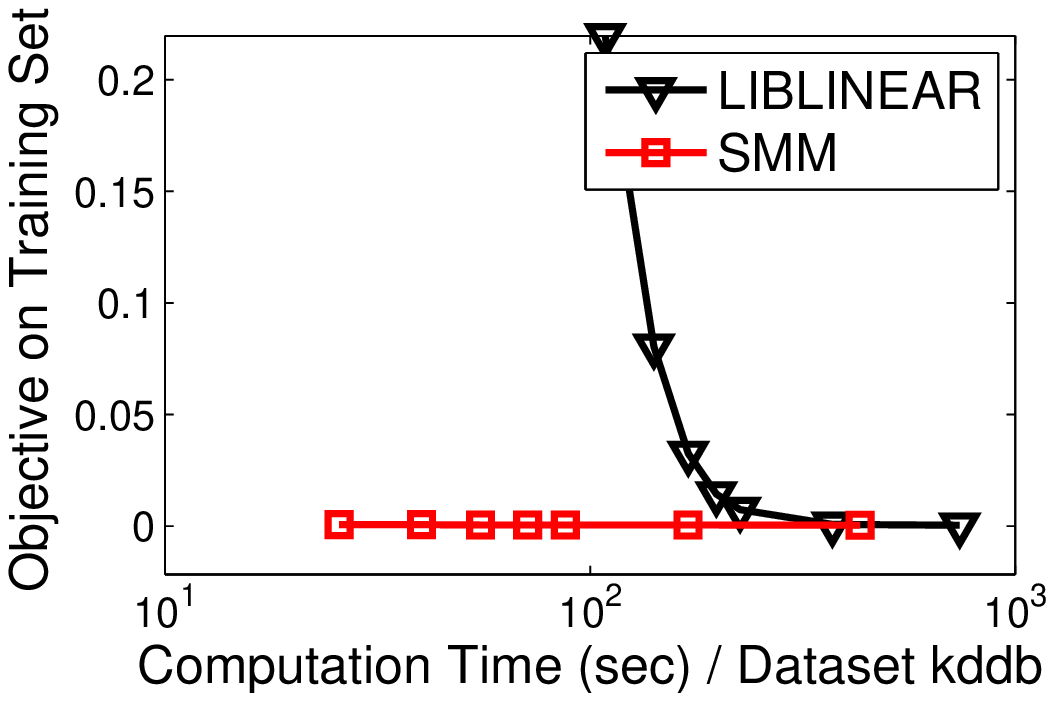}
   \includegraphics[width=0.3253\linewidth]{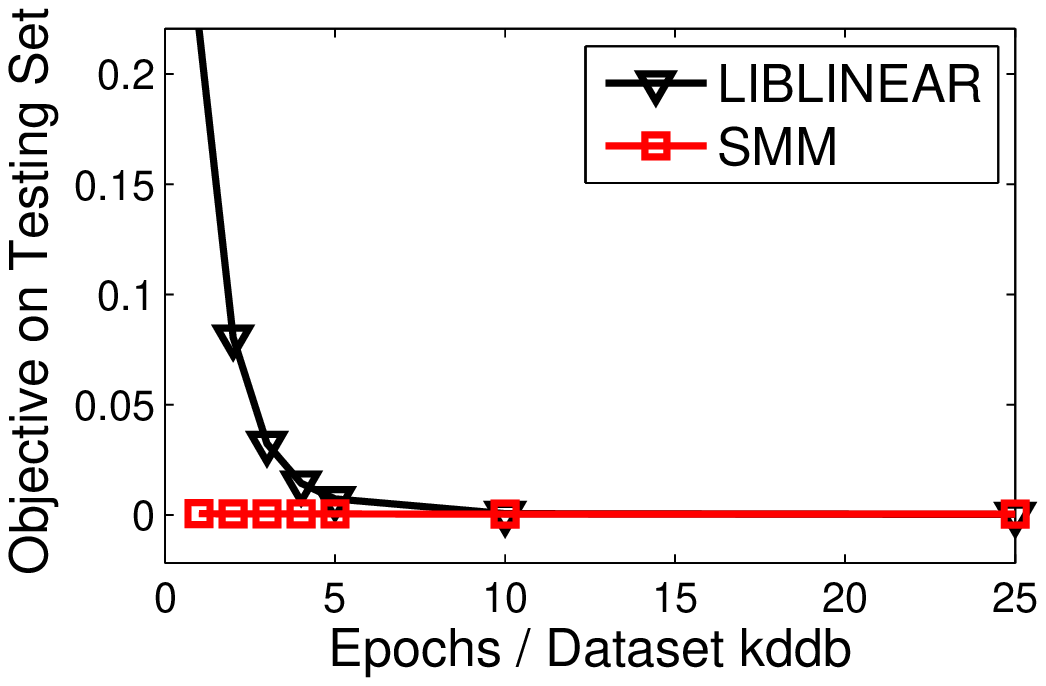}
   \caption{Comparison between LIBLINEAR and SMM in the low regularization regime for $\ell_1$-logistic regression. }
   \label{fig:exp_l1c}
\end{figure}

\begin{figure}[hbtp]
   \centering
   \includegraphics[width=0.4\linewidth]{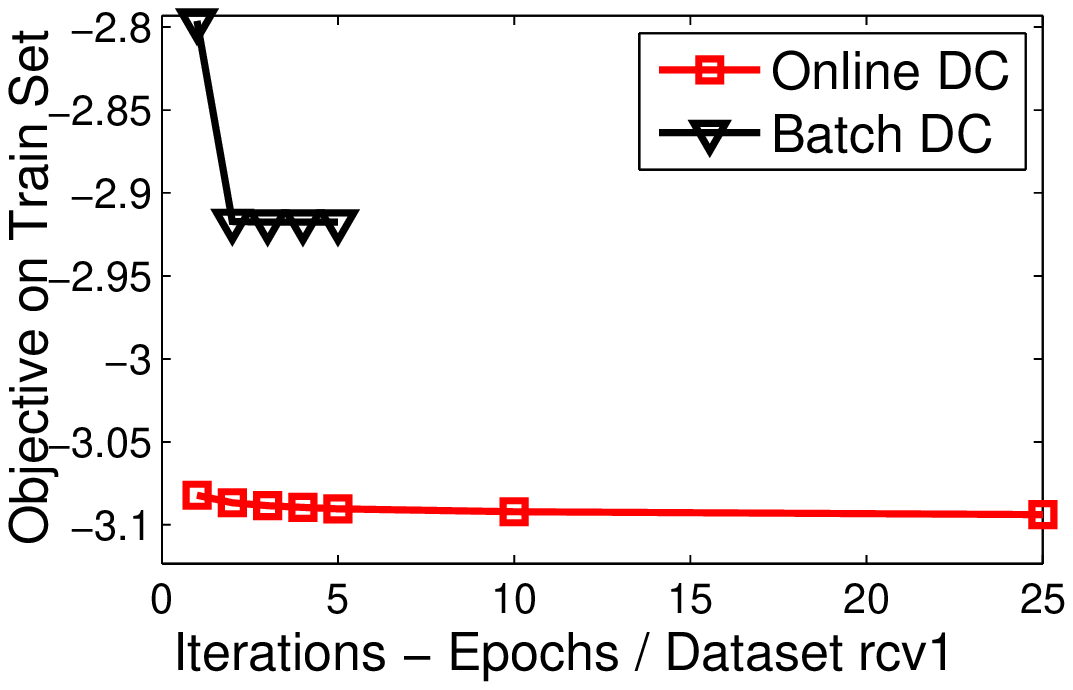}
   \includegraphics[width=0.4\linewidth]{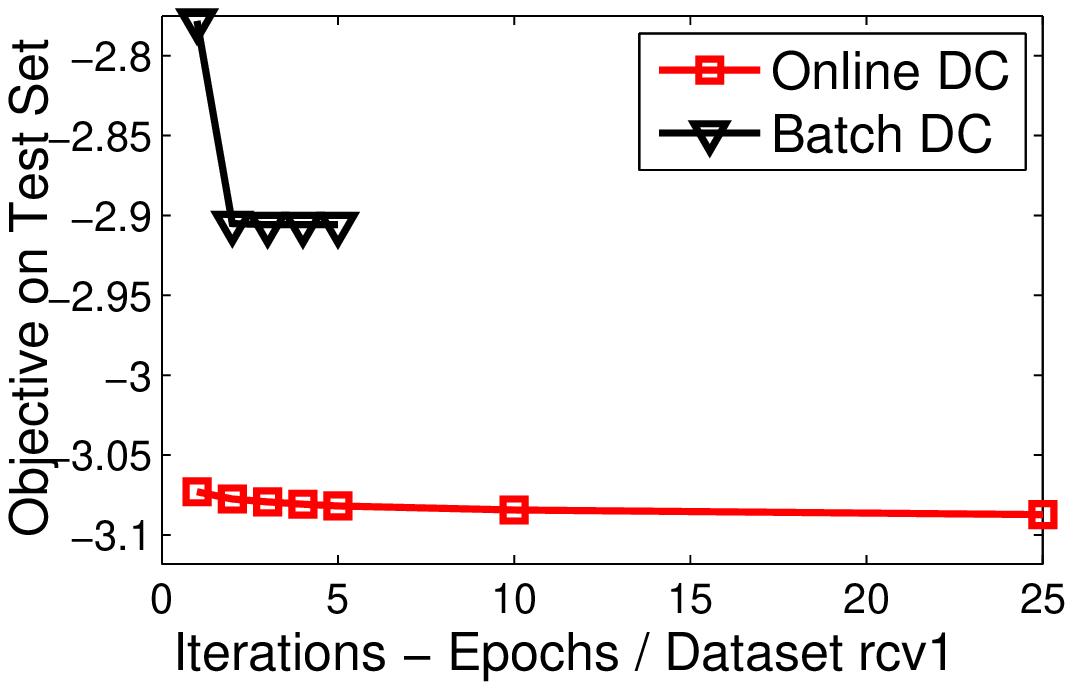} \\
   \includegraphics[width=0.4\linewidth]{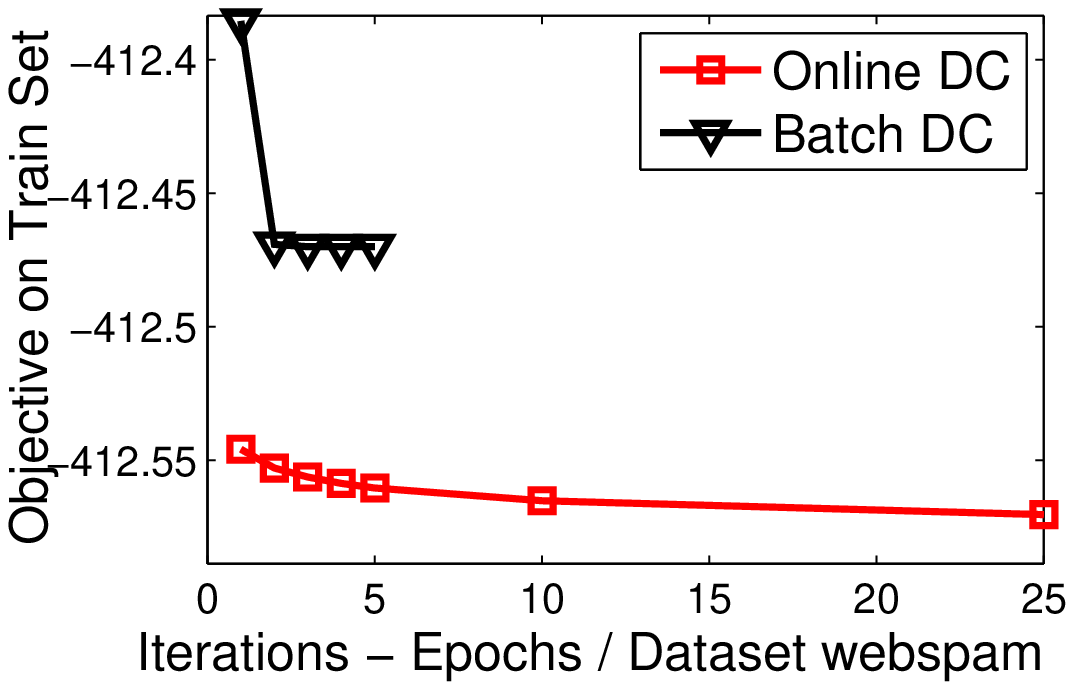}
   \includegraphics[width=0.4\linewidth]{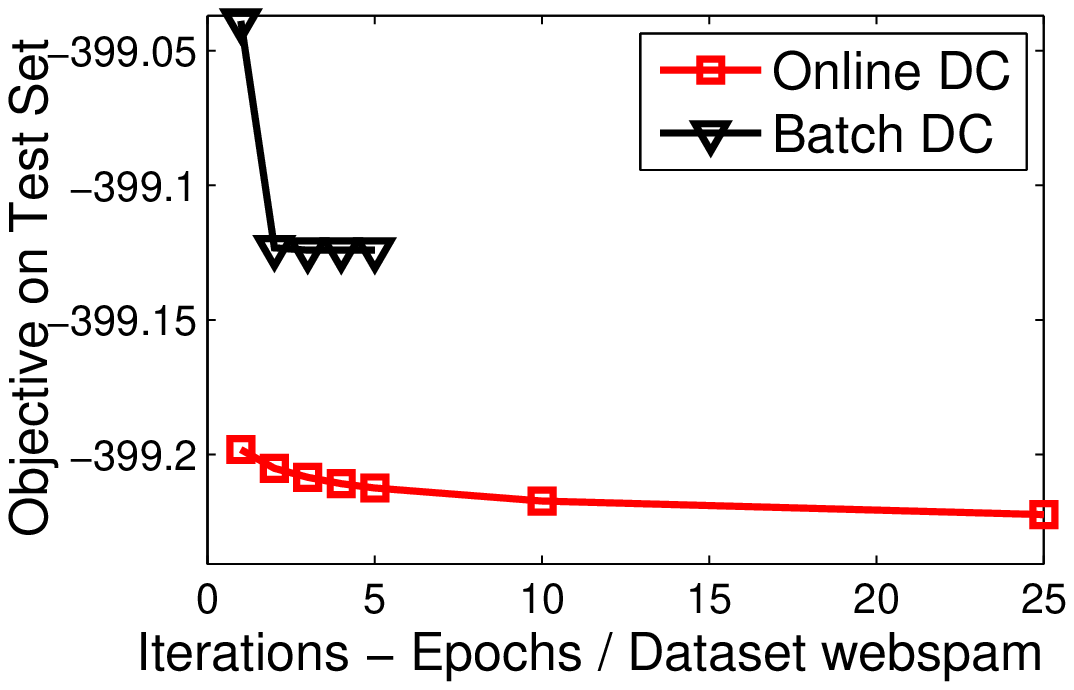}
   \caption{Comparison between batch and online DC programming, with high regularization for the datasets~\textsf{rcv1} and~\textsf{webspam}. Note that each iteration in the batch setting can perform several epochs.}\label{fig:exp_logc}
\end{figure}

\begin{figure}[hbtp]
   \centering
   \includegraphics[width=0.4\linewidth]{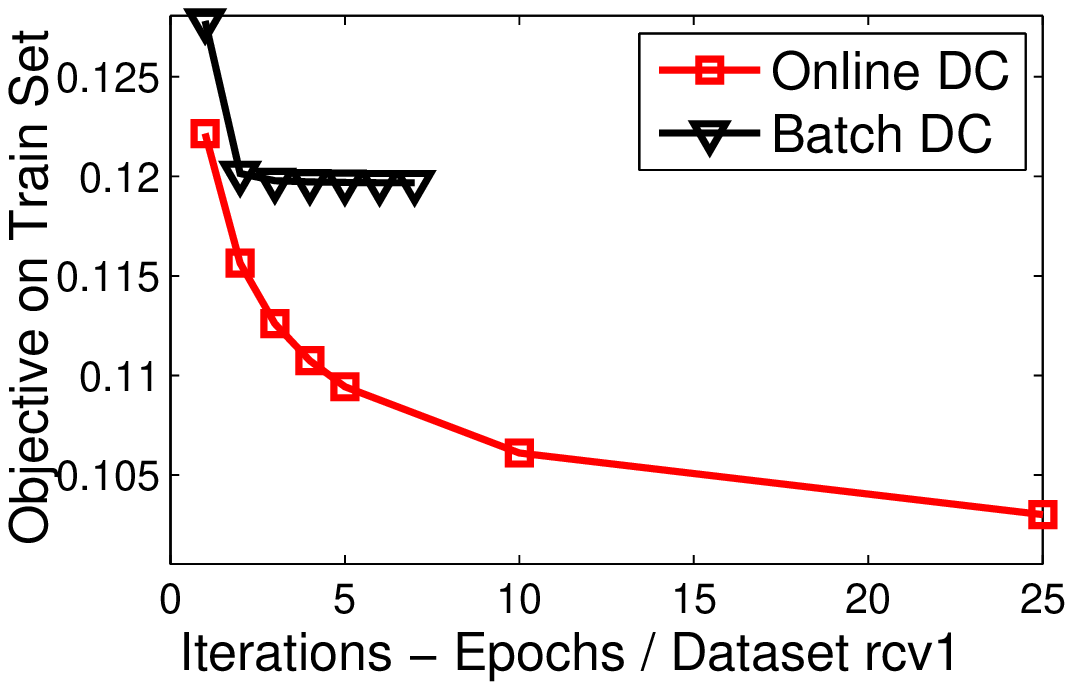}
   \includegraphics[width=0.4\linewidth]{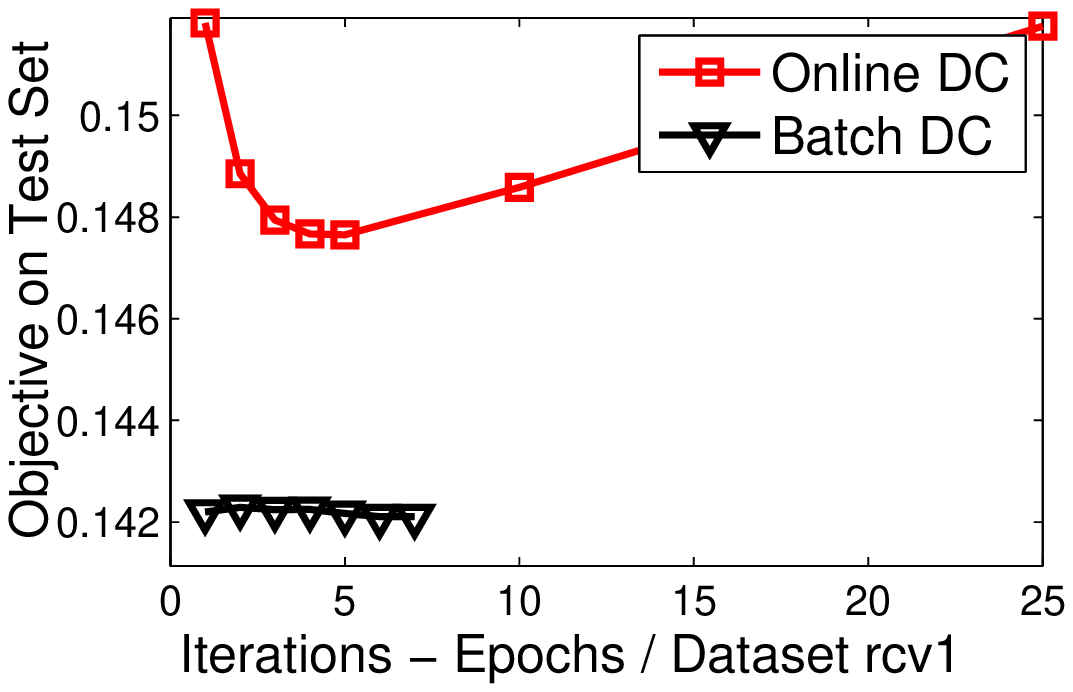} \\
   \includegraphics[width=0.4\linewidth]{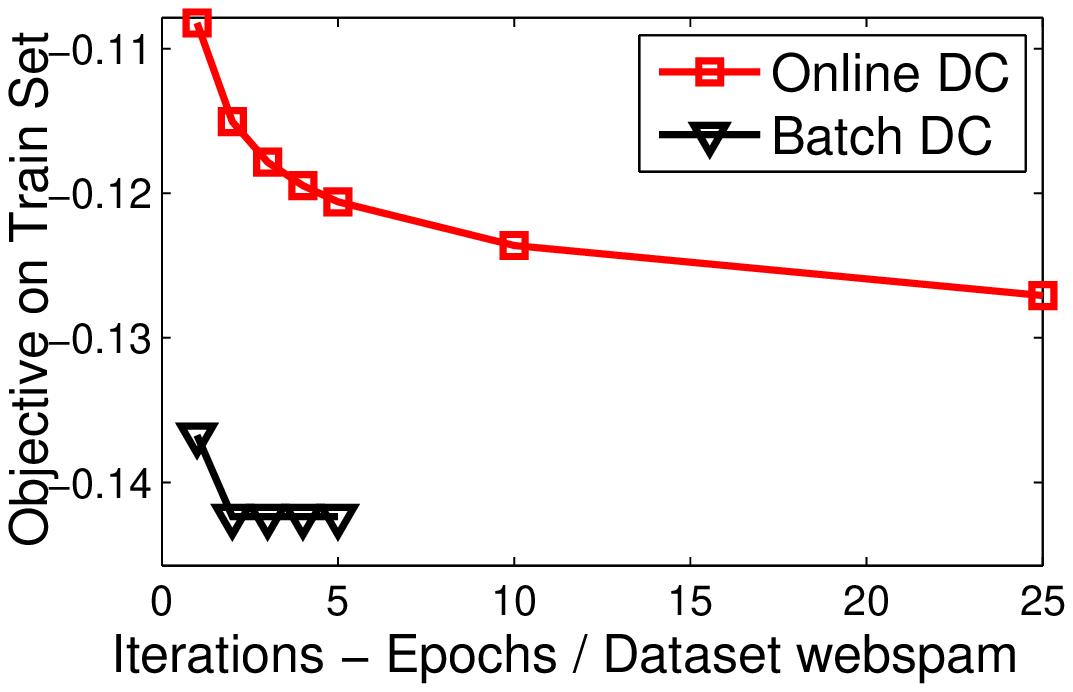}
   \includegraphics[width=0.4\linewidth]{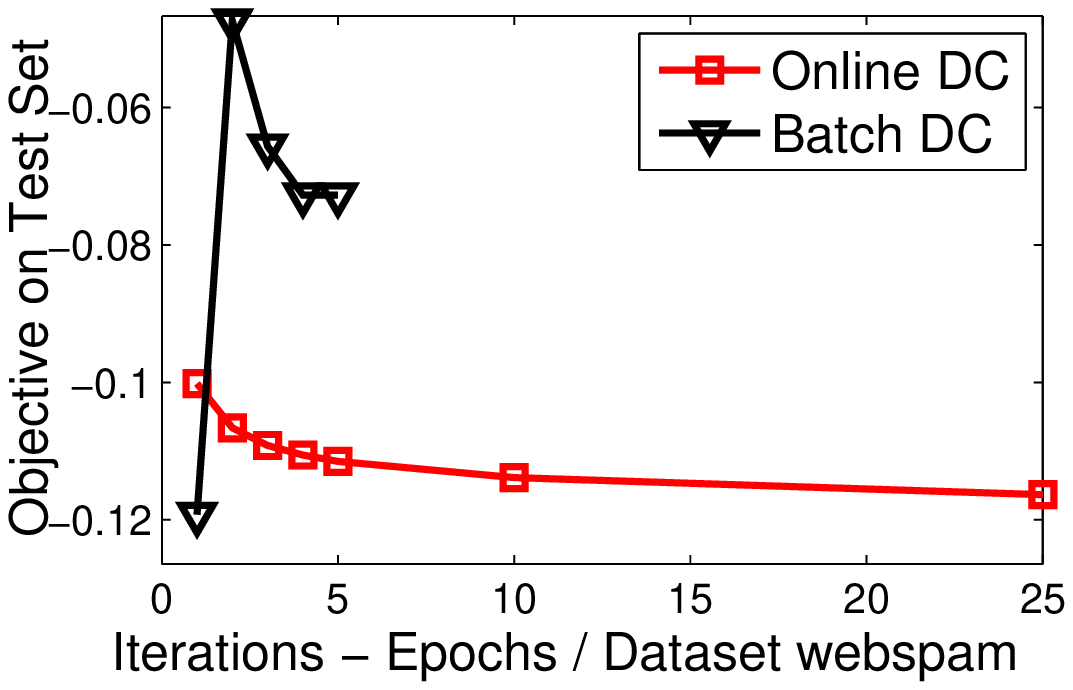}
   \caption{Comparison between batch and online DC programming, with low regularization for the datasets~\textsf{rcv1} and~\textsf{webspam}. Note that each iteration in the batch setting can perform several epochs.}\label{fig:exp_logb}
\end{figure}

\begin{figure}[hbtp]
   \centering
   \includegraphics[width=0.7\linewidth]{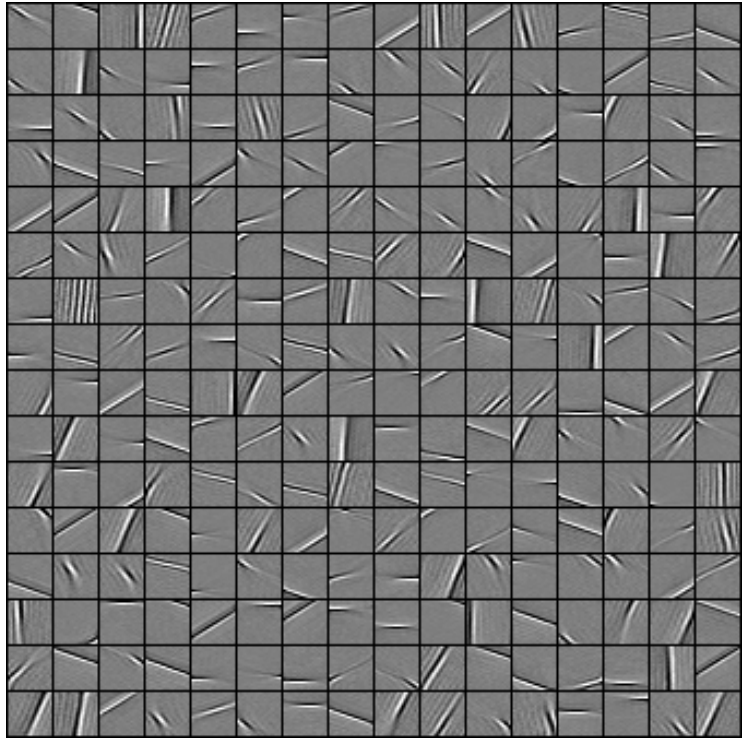}
   \caption{Dictionary obtained using the toolbox SPAMS~\cite{mairal7}.}
   \label{fig:exp_sp2}
\end{figure}

\begin{figure}[hbtp]
   \centering
   \includegraphics[width=0.7\linewidth]{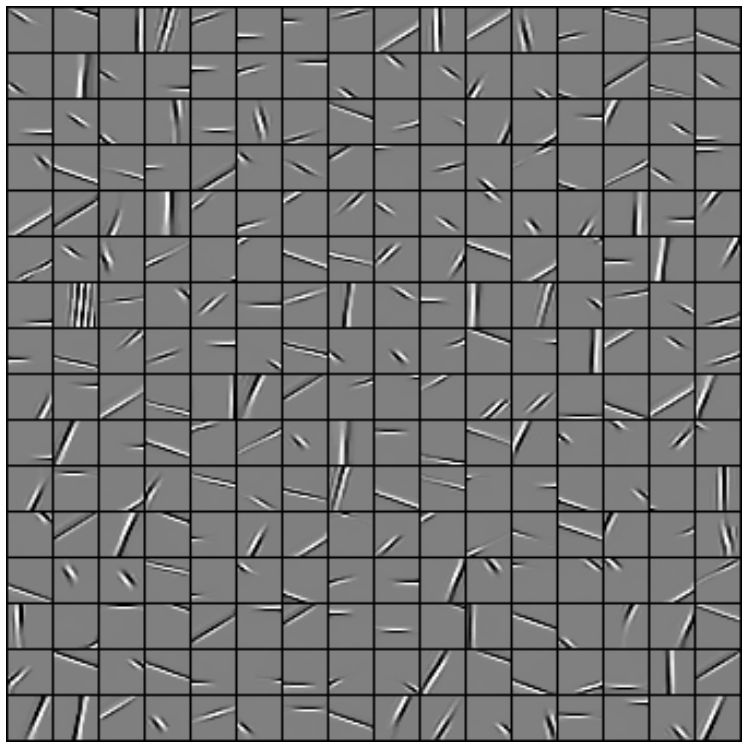}
   \caption{Sparse dictionary obtained by our approach, using the dictionary of Figure~\ref{fig:exp_sp2} as an initialization.}
   \label{fig:exp_sp3}
\end{figure}

\begin{figure}[hbtp]
   \centering
   \includegraphics[width=0.7\linewidth]{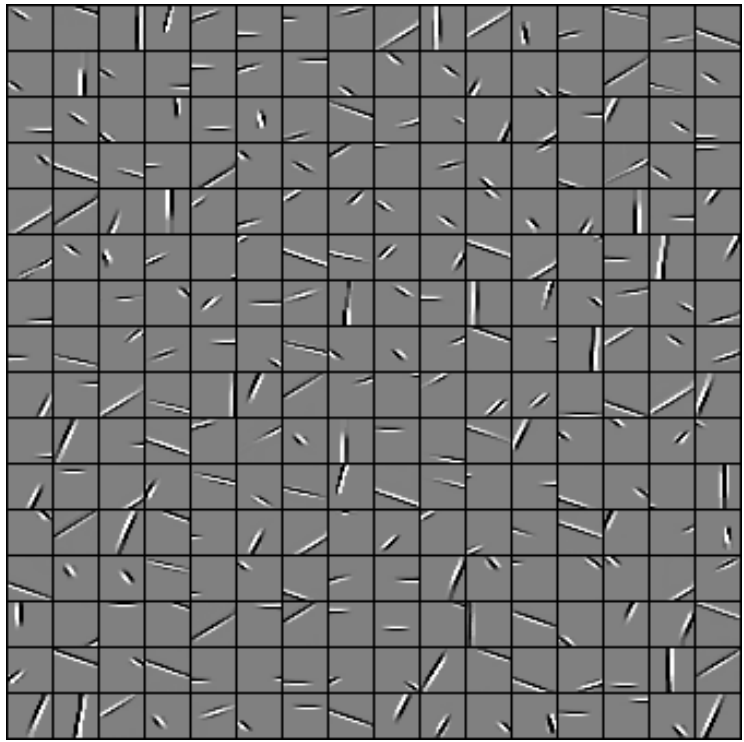}
   \caption{Sparse dictionary obtained by our approach, using the dictionary of Figure~\ref{fig:exp_sp2} as an initialization, and with a higher regularization parameter than in Figure~\ref{fig:exp_sp3}.}
   \label{fig:exp_sp4}
\end{figure}

\small{
\bibliographysup{abbrev,main}
\bibliographystylesup{plain}
}

\end{document}